\documentclass[11pt]{article}
\ifdefined\pdfminorversion\pdfminorversion=7\fi

\usepackage[a4paper,margin=1in]{geometry}

\usepackage{amsmath,amssymb,amsthm,mathtools,bm}

\usepackage[protrusion=true,expansion=false]{microtype}
\usepackage{enumitem}
\usepackage{booktabs}
\usepackage{longtable}
\usepackage{graphicx}
\usepackage{xcolor}
\usepackage{float}

\usepackage{algorithm}
\usepackage[noend]{algpseudocode}

\usepackage[round,authoryear]{natbib}

\usepackage{tikz}
\usetikzlibrary{
arrows.meta,
positioning,
calc,
backgrounds,
fit
}

\usepackage{pgfplots}
\pgfplotsset{compat=1.17}

\usepackage[most]{tcolorbox}
\usepackage{subcaption}

\usepackage[
colorlinks=true,
linkcolor=black,
citecolor=blue!55!black,
urlcolor=blue!55!black
]{hyperref}

\usepackage{titletoc}

\theoremstyle{plain}
\newtheorem{theorem}{Theorem}
\newtheorem{lemma}{Lemma}
\newtheorem{proposition}{Proposition}
\newtheorem{corollary}{Corollary}

\theoremstyle{definition}
\newtheorem{assumption}{Assumption}
\newtheorem{definition}{Definition}

\theoremstyle{remark}

\newcommand{\X}{\mathcal{X}}
\newcommand{\D}{\mathcal{D}}
\newcommand{\C}{\mathcal{C}}

\newcommand{\E}{\mathbb{E}}
\newcommand{\Prob}{\mathbb{P}}
\newcommand{\KL}{\mathrm{KL}}

\newcommand{\acq}{a}
\newcommand{\regret}{\mathrm{Reg}}

\DeclareMathOperator*{\argmax}{arg\,max}

\definecolor{knownblue}{RGB}{221,235,247}
\definecolor{uncertainorange}{RGB}{252,229,205}
\definecolor{latentgreen}{RGB}{226,239,218}
\definecolor{unknowngray}{RGB}{238,238,238}

\definecolor{deepblue}{RGB}{45,92,136}
\definecolor{deeporange}{RGB}{174,92,28}
\definecolor{deepgreen}{RGB}{63,117,65}
\definecolor{deepgray}{RGB}{85,85,85}

\definecolor{chatred}{RGB}{247,214,214}
\definecolor{chatredline}{RGB}{193,66,66}
\definecolor{chatdeep}{RGB}{140,38,38}
\definecolor{chathdr}{RGB}{233,182,182}

\definecolor{reasonblue}{RGB}{210,228,246}
\definecolor{reasonblueline}{RGB}{52,104,160}
\definecolor{reasondeep}{RGB}{30,66,110}
\definecolor{reasonhdr}{RGB}{180,208,238}

\definecolor{discgreen}{RGB}{221,237,212}
\definecolor{discgreenline}{RGB}{58,112,60}
\definecolor{discdeep}{RGB}{34,74,36}
\definecolor{dischdr}{RGB}{194,224,184}

\definecolor{axisgray}{RGB}{120,120,120}
\definecolor{bodygray}{RGB}{55,55,55}

\definecolor{ldmred}{HTML}{A33A3A}
\definecolor{ldmdarkred}{HTML}{7F2929}
\definecolor{ldmtext}{HTML}{282323}
\definecolor{ldmbg}{HTML}{FBF5F5}
\definecolor{ldmedge}{HTML}{E8D3D3}

\begin{document}


\begin{tcolorbox}[
    enhanced,
    breakable,
    colframe=ldmedge,
    boxrule=0.45pt,
    arc=12pt,
    left=14pt,
    right=14pt,
    top=14pt,
    bottom=13pt,
    before skip=0pt,
    after skip=15pt,
    interior style={
      shade,
      shading angle=315,
      left color=white!97!ldmbg,
      right color=ldmred!7!ldmbg
    }
]
    {
    \centering
    {\color{ldmdarkred}\sffamily\bfseries
      {\LARGE Large Discovery Models:}\par
    
      \vspace{4pt}
    
      {\Large
        Empirically-Grounded Model-Based Open-Ended Search}\par
    }
    
    \vspace{11pt}
    
    {\color{ldmtext}\sffamily
      {\large\bfseries
        Zhongwei Yu\,$^{*26}$\quad
        Yan Song\,$^{*16}$\quad
        Xue Yan\,$^{*6}$\quad
        Anjie Liu\,$^{\ddagger26}$\quad
        Xingyu Lu\,$^{\ddagger26}$\quad \\
        Yihang Chen\,$^{\ddagger16}$\quad 
        Huichi Zhou\,$^{16}$\quad
        Siyuan Guo\,$^{4}$\quad 
        Luoyang Sun\,$^{36}$\quad
        \\
        Sihan Chen\,$^{6}$\quad
        Xiangning Yu\,$^{56}$\quad
        Jun Wang\,$^{1\dagger}$
      }\par
    
      \vspace{6pt}
    
      {\small
        $^{1}$ University College London\\
        $^{2}$ The Hong Kong University of Science and Technology (GZ)\quad 
        $^{3}$ Institute of Automation, CAS\\
        $^{4}$ Jilin University\quad
        $^{5}$ Tianjin University\quad
        $^{6}$ AI Lab, The Yangtze River Delta\\
        $^{*}$Equal contribution as co-first authors. \quad
        $^{\ddagger}$Equal contribution as co-second authors.\\
        $^\dagger$ Corresponding author. \texttt{jun.wang@ucl.ac.uk}.
      }\par
    }
    }
    
    \vspace{10pt}
    
    {\color{ldmedge}\hrule height 0.5pt}
    
    \vspace{10pt}
    
    {\color{ldmdarkred}\sffamily\bfseries\large
    Abstract
    }\par
    
    \vspace{4pt}
    
    {\color{ldmtext}\small
    Scientific discovery often involves optimising expensive-to-evaluate objectives over vast, structured, and open-ended hypothesis spaces, such as molecules, protein sequences, and computer programs. Generative models such as large language models (LLMs) provide expressive priors over such spaces, but their likelihoods and self-assessments are unreliable proxies for the objectives and calibrated epistemic uncertainty, especially for novel candidates outside the observed data distribution. We introduce the Large Discovery Model (LDM), an empirically grounded recurrent architecture that couples a generative model with a Bayesian non-parametric reward surrogate model. The generative model proposes and refines candidate designs, while the surrogate predicts their performance and quantifies uncertainty, yielding an uncertainty-aware value that guides candidate generation, refinement, and selection. The discovery memory and the surrogate model are continually updated as each new experimental observation arrives. We evaluate LDM on three scenarios spanning different design modalities and objectives, including neural-network training, antibody design, and molecular optimisation. Compared to LLM-only reflection or traditional statistical search across these domains, LDM achieves a $2.4\times$ greater reduction in validation BPB, an $18.2\%$ relative decrease in binding energy, and more than $60\%$ relative gains in molecular multi-objective performance. These results suggests that LDM could serve as a general-purpose discovery engine for effective search over open-ended hypothesis spaces. 

\vspace{0.5em}

\textit{This paper serves as an initial release of LDM (v0.1), with code and benchmarks available at \url{https://github.com/yzailab/Large-Discovery-Models}.}
\par
    }
\end{tcolorbox}

\vspace{-30pt}
\begin{figure}[H]
    \centering
    \includegraphics[width=\textwidth]{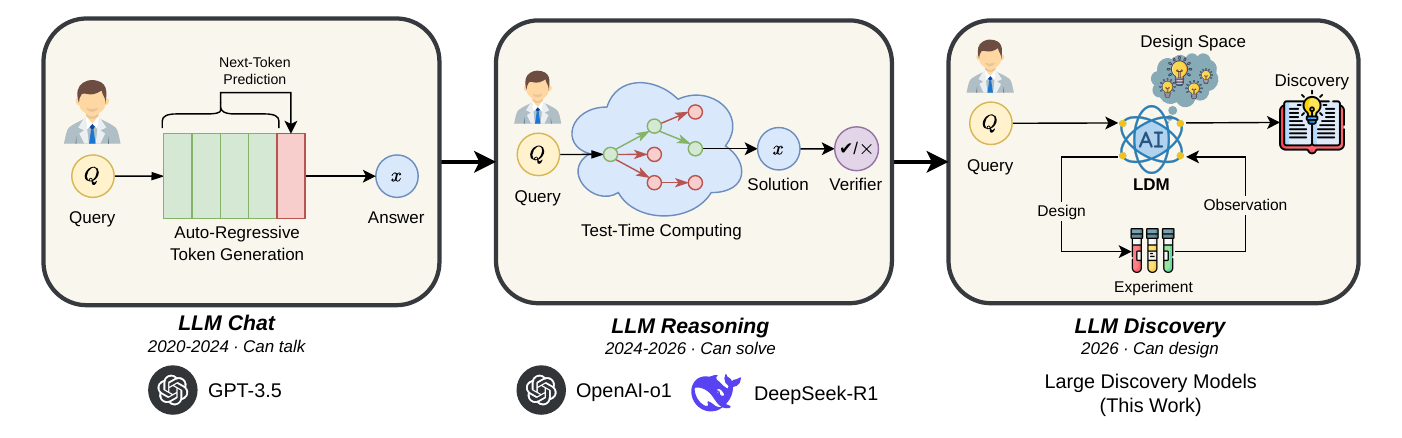}
    \vspace{-20pt}
    \caption{
    \textbf{From generation and reasoning to open-ended discovery.}
    \emph{LLM generation} produces responses through autoregressive prediction.
    \emph{LLM reasoning} adds inference-time computation (e.g., chain-of-thought,
    tree search, or best-of-$N$) with a cheap verifier to select solutions
    \citep{brownLanguageModelsAre2020,openaiOpenAIO1System2024,wang2024openr,
    deepseek-aiDeepSeekR1IncentivizingReasoning2025}.
    \emph{Large Discovery Models} augment LLMs with an empirically grounded signal
    that guides exploration and discovery, enabling search in open-ended scientific 
    domains under expensive black-box evaluation.
    }
    \label{fig:evolution}
\end{figure}

\section{Introduction}
\label{sec:intro}

The development of large language models (LLMs) has progressed from
open-ended generation towards increasingly capable forms of reasoning.
Pretraining provides broad linguistic and conceptual knowledge, while
inference-time scaling allows a model to generate, compare, and refine
multiple candidate solutions. This approach has produced substantial gains in
mathematics, coding, theorem proving, and game playing, where solutions may be
difficult to find but comparatively easy to evaluate. Formal derivations can
be assessed by process or outcome verifiers, programs can be executed against
unit tests, and game strategies can be evaluated using exact rules or
simulators
\citep{yao2023tree,madaan2023selfrefine,snell2025scaling}.
AlphaZero-like methods make this feedback loop explicit by combining
generation, evaluation, and search
\citep{silver2017mastering,fengAlphazerolikeTreeSearchCan2024}.
OpenR~\citep{wang2024openr} provides an open framework for studying such
reasoning systems, while AlphaEvolve~\citep{novikov2025alphaevolve} couples
language-model proposals with automated evaluators at scale. Together, these
developments demonstrate that inference-time computation is particularly
effective when reliable and repeatable feedback can turn generation into
directed search.

Scientific discovery poses a more challenging setting. The candidate space may be vast, structured, and only partially specified, while feedback often depends on expensive simulation, accelerator experiments, physical assays, or real-world experiment. Evaluations cannot be repeated freely, and the resulting feedback may be delayed or noisy. A discovery system must therefore decide not only which candidates to evaluate, but also which hypotheses and designs should enter the search process in the first place. Figure~\ref{fig:evolution} illustrates this progression from generation and reasoning to discovery.

\begin{figure*}[th]
    \centering
    \includegraphics[width=\textwidth]
    {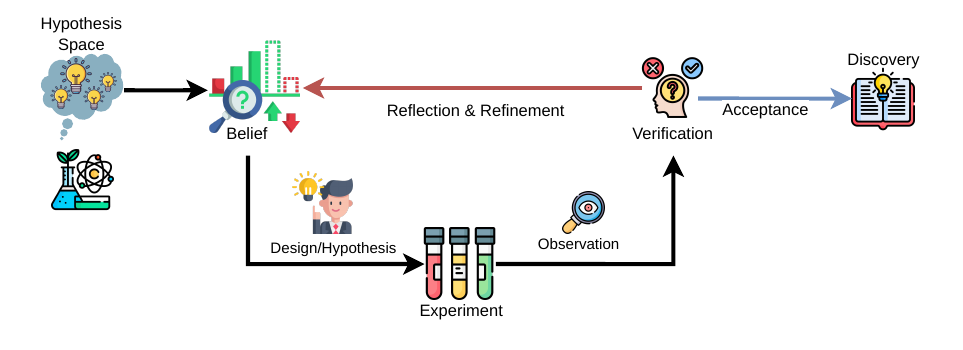}
    \caption{
    \textbf{The empirical scientific discovery loop.}
    An agent generates candidate hypotheses or designs, selects a subset for
    costly evaluation, observes the outcomes, and uses those observations to
    update its beliefs and guide the next round of search, as described in
    Definition~\ref{def:scientific-discovery}.
    }
    \label{fig:scientific-discovery}
\end{figure*}

\begin{definition}[Empirical Scientific Discovery]
\label{def:scientific-discovery}
\emph{Scientific discovery, in the setting considered here, is a sequential process
in which an agent generates hypotheses or designs, selects a limited number
for evaluation through interaction with an external evaluator, and uses the
resulting empirical observations to update its beliefs and guide subsequent
search. A discovery occurs when this
process identifies a previously unknown and non-trivial relationship,
mechanism, or design that is supported by empirical evidence.}\footnote{This is an operational definition of empirically grounded,
search-based scientific discovery: the class of scientific problems that can
be formulated as sequential search over hypotheses or designs, with selected
candidates evaluated through physical experiments, simulations, computational
tests, or other external sources of evidence. It does not encompass all forms
of scientific discovery, particularly those for which the relevant search
space, evaluation procedure, or objective cannot yet be specified. Moreover,
\emph{unknown} and \emph{non-trivial} are relational notions, defined relative
to the agent's prior knowledge and, where appropriate, the knowledge of the
relevant scientific community. A result may therefore be a discovery for an
agent while constituting a rediscovery from the community's perspective.}
\end{definition}

Definition~\ref{def:scientific-discovery} highlights two differences between
scientific discovery and reasoning with cheap verifiers. First, scientific
search is often \emph{open-ended}: relevant candidates may lie outside the
region represented by the current model or reachable through its existing
search operators. Second, empirical feedback is \emph{budgeted}: only a small
fraction of the generated candidates can be evaluated. The agent must
therefore learn from limited observations, model an unknown objective, and
allocate its experimental budget to candidates that are expected to be useful
or informative
\citep{shahriari2016taking,frazier2018tutorial}.

Current LLM-based research systems address only part of this problem.
Autonomous research agents can generate hypotheses, write code, execute
experiments, analyse results, and produce scientific manuscripts
\citep{lu2024aiscientist}. LLM-generated research proposals have also been
judged more novel than proposals written by human experts
\citep{si2024novelideas}. However, these proposals exhibit limited diversity
and weaker feasibility, and LLMs do not reliably evaluate their own ideas.
End-to-end assessments similarly identify persistent limitations in
experimental execution, novelty assessment, and methodological reasoning
\citep{beel2025aiscientist,xu2026researchclawbench,
starace2025paperbench}. Thus, fluent generation and procedural automation do
not by themselves provide the empirically grounded value signal needed for
scientific discovery.

This limitation also appears in scientific design. An LLM assigns high
probability to candidates that are plausible under its learned distribution,
but linguistic or sequence likelihood need not correlate with an externally
measured scientific property
\citep{guptaLLMsBayesianOptimization2025,chang2025llinbo}. LLM-only search can therefore produce
valid candidates without reliably determining which ones merit expensive
evaluation. Bayesian optimisation (BO) offers a complementary capability. Given a limited
set of observations, BO learns a probabilistic surrogate of the unknown
objective and uses an acquisition function to balance predicted performance
against epistemic uncertainty
\citep{jonesEfficientGlobalOptimization1998,srinivas2010gaussian,shahriari2016taking,
frazier2018tutorial}. It can therefore prioritise promising or informative
experiments under a limited evaluation budget. However, BO still requires a
representation of the design space and a mechanism for proposing or reaching
candidates. In structured and combinatorial spaces, such as molecules,
proteins, programs, and experimental protocols, optimising the acquisition
function may itself be difficult
\citep{balandat2020botorch,griffiths2020constrained}. BO can efficiently rank
the candidates exposed by its current representation and search operators,
but it may fail to reveal valuable candidates outside that search support.

These observations expose complementary limitations. LLMs can generate and
modify structured candidates, but lack a reliable estimate of their external
scientific value. BO grounds candidate selection in empirical observations,
but its effectiveness depends on which candidates its representation and
search procedure can expose. Scientific discovery requires both capabilities:
the search frontier must expand, and experimental resources must be allocated
using an uncertainty-aware value signal that balances expected value with information gain.
\begin{figure*}[t!]
    \centering
\includegraphics[width=\textwidth]
{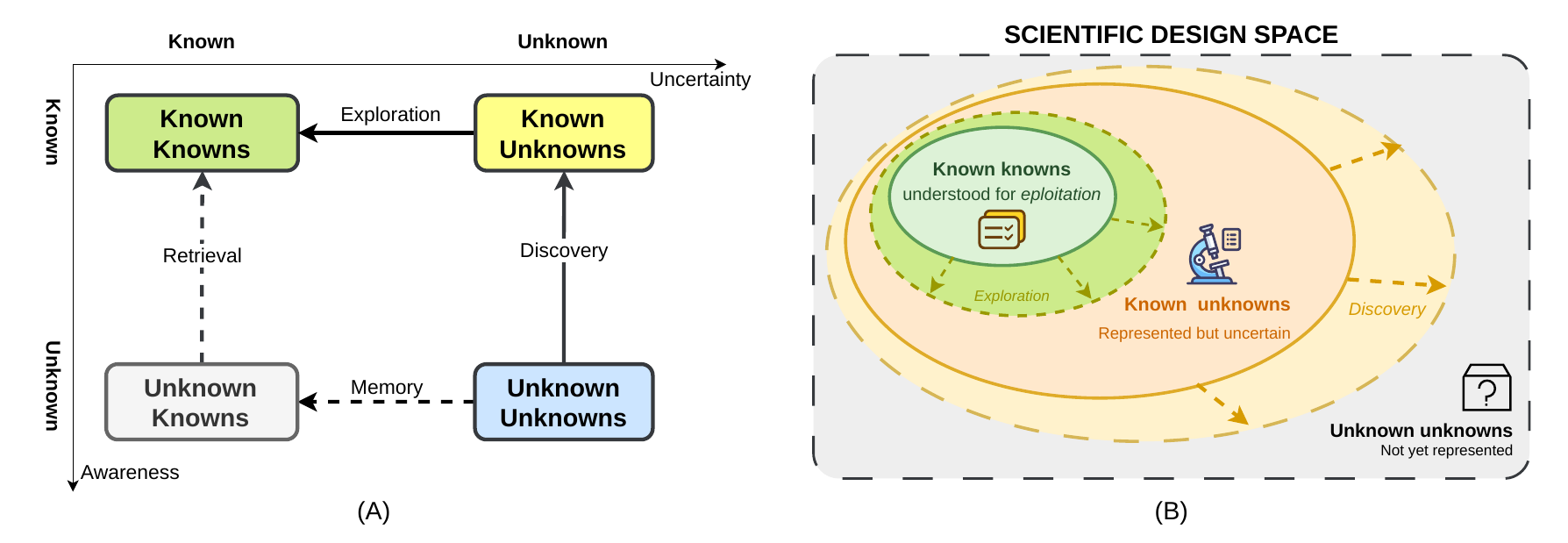}
    \caption{
    \textbf{Epistemic regimes of experimental scientific search.}
    \textbf{(A)} An adaptation of the Rumsfeld Matrix
\citep{krogerus2018decision}, organised by search awareness and objective
    uncertainty.
    \textbf{(B)} A scientific agent moves among three regimes in the design
    space. 
    }
    \label{fig:knowledge-regimes}
\end{figure*}

Thus, to combine the strength of both methods, in this paper, we formulate scientific discovery as \emph{sequential inverse
design}. Given a vast, structured, and potentially unbounded design space, an
agent must repeatedly decide which candidate to evaluate next under a limited
experimental budget. Because the reward function is unknown and the design
space cannot be enumerated, it is generally infeasible to model rewards
accurately over the entire space. The agent therefore maintains a
\emph{probabilistic surrogate}, learned from the candidates evaluated so far,
that predicts both the reward of a candidate and the epistemic uncertainty of
that prediction. An acquisition function converts these beliefs into an
estimate of the value of evaluating or further investigating each candidate,
thereby balancing the exploitation of promising designs against the
exploration of uncertain ones.

This formulation highlights an important distinction between
\emph{predictive uncertainty} and \emph{search awareness}. Predictive
uncertainty concerns what the surrogate does not yet know about candidates
within its current modelling support and can be reduced by collecting
informative observations. Search awareness instead concerns which candidates,
design families, or structural relationships are represented by, or reachable
through, the proposal process. These two limitations give rise to three
different modes of search. \emph{Exploitation} prioritises reachable candidates
with high predicted rewards. \emph{Exploration} evaluates uncertain candidates
within the existing search support to improve the surrogate. \emph{Discovery}
changes the support itself, exposing previously unrepresented candidates,
design families, or relationships that neither exploitation nor exploration
could otherwise reach. Figure~\ref{fig:knowledge-regimes} illustrates this perspective by adapting the four epistemic regimes to scientific design.

To realise this process, we introduce the \emph{Large Discovery Model} (LDM),
an empirically grounded recurrent architecture that couples an LLM proposal
model with a continually updated probabilistic surrogate. At each round, the
LLM generates, edits, and refines structured candidates, thereby constructing
and reshaping a dynamic search frontier. The surrogate predicts the empirical
rewards of these candidates and quantifies its epistemic uncertainty about
them. An acquisition function then uses these predictions to decide where to
allocate further candidate refinement, inference-time computation, and costly
empirical evaluations. The resulting observations are added to the data,
updating the surrogate and producing new acquisition signals that redirect
subsequent generation and search.

The acquisition function is therefore not merely a post-generation filter. It
provides an empirically grounded value signal that governs the allocation of
both computational and experimental resources. The LLM determines which
structured candidates can be proposed and reached, while the surrogate and
acquisition function determine which parts of this evolving frontier are most
valuable to pursue. Through this recurrent interaction, LDM extends
inference-time scaling beyond domains with cheap and repeatable verifiers to
scientific-design problems in which feedback is costly, noisy, sparse, and
available only through a limited number of empirical evaluations.

\begin{figure*}[t]
    \centering
    \begin{subfigure}[t]{0.42\textwidth}
        \centering
        \includegraphics[width=\linewidth]
        {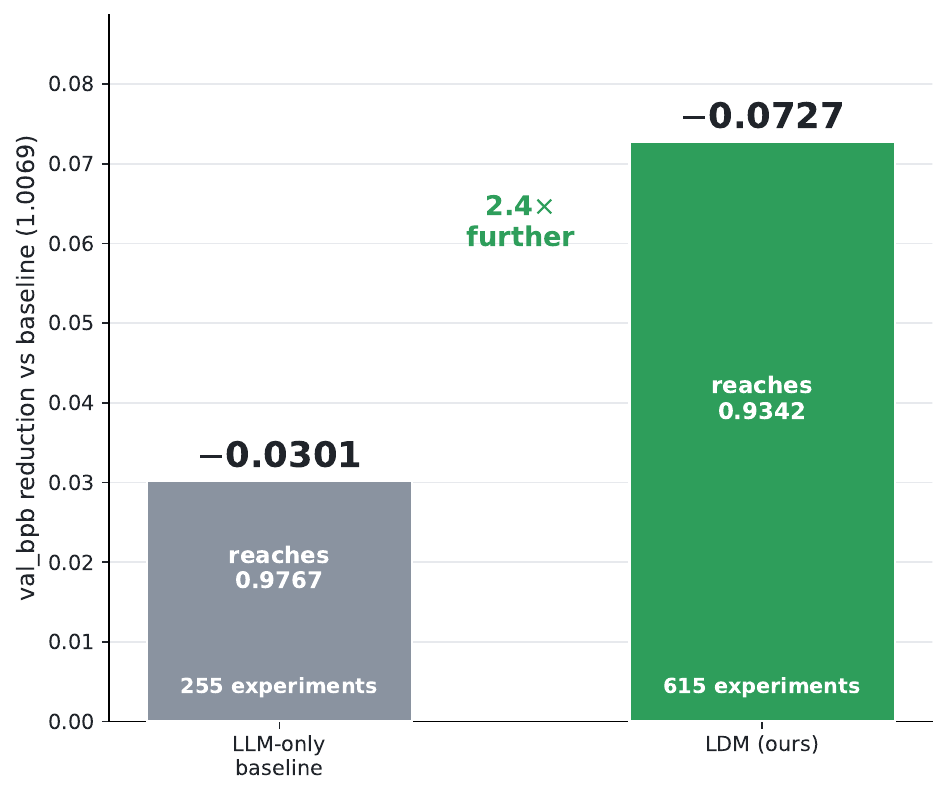}
        \caption{Performance on AutoResearch.}
        \label{fig:discovery-motivation-autoresearch}
    \end{subfigure}
    \hfill
    \begin{subfigure}[t]{0.42\textwidth}
        \centering
        \includegraphics[width=\linewidth]
        {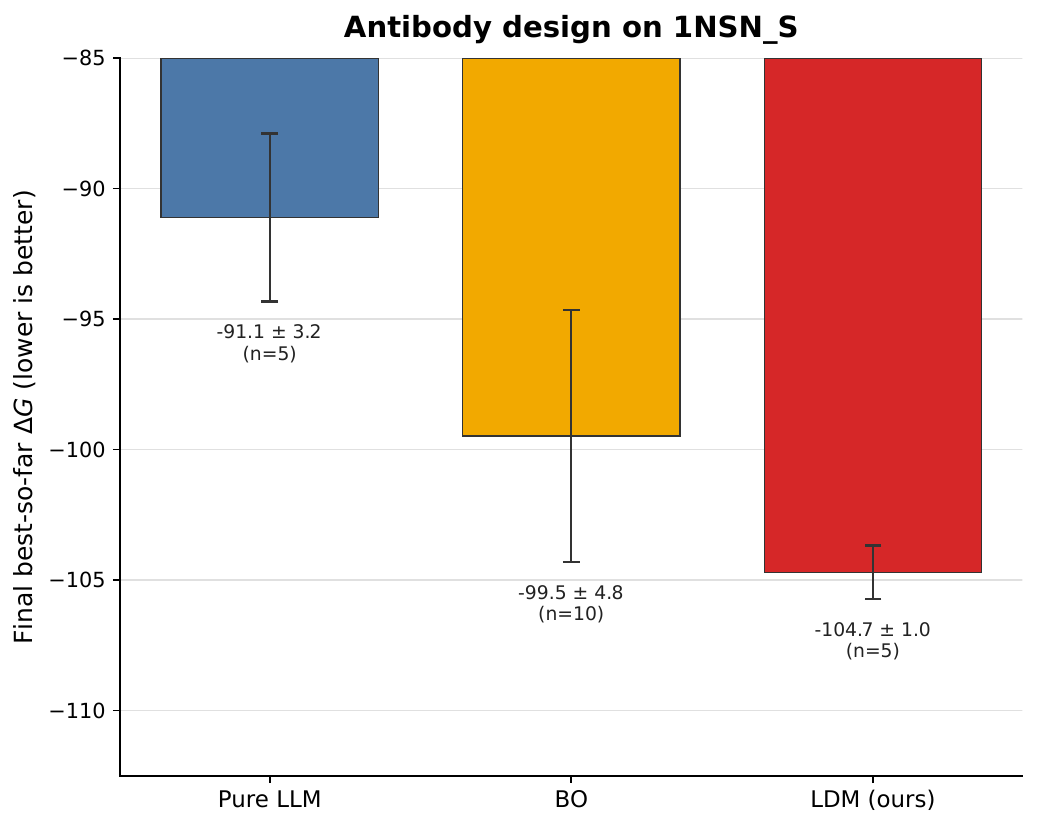}
        \caption{Performance on antibody design.}
        \label{fig:discovery-motivation-antibody}
    \end{subfigure}
    \caption{
    \textbf{Complementary limitations of LLM-only and BO-based search.}
    \textbf{(a)} LDM improves upon the LLM-only AutoResearch baseline under
    the same H100 hardware setting.
    \textbf{(b)} Final best-so-far binding energy on the antibody-design task,
    where LDM outperforms the LLM-only and BO-only baselines. Lower binding
    energy is better. For details, we refer to \S~\ref{sec:eval}.
    }
    \label{fig:discovery-motivation}
\end{figure*}

The proposed LDM has a general discovery capability. To demonstrate this, we evaluate LDM in three diverse structured scientific-design domains:
neural-network training-program search on AutoResearch
\citep{karpathy2026autoresearch}, antibody CDRH3 sequence design, and
multi-objective molecular optimisation. LDM exceeds the best previously
reported AutoResearch result. With a weak language-model prior, it remains
competitive with AntBO~\citep{khan2022antbo}, a specialised BO method for
antibody design. It also achieves substantial improvements over both LLM-only
and BO-only baselines in multi-objective molecular design. On AutoResearch, LDM achieves an approximately $2.4\times$ larger absolute validation-BPB reduction than LLM-only reflection from a common starting point. On antibody design, it achieves an $18.2\%$ lower mean binding energy than LLM-only reflection after $200$ evaluation steps. On molecular optimisation, it improves the hypervolume of the Pareto front by $62.4\%$ over LLM-only reflection and by $63.1\%$ over classical Bayesian optimisation. Figure~\ref{fig:discovery-motivation} presents selected
results, with the full experimental evaluation deferred to
\S~\ref{sec:eval}.

Our contributions are fourfold. First, we formulate scientific discovery as
inference-time search over a dynamic, structured design space, guided by an
uncertainty-aware value model grounded in empirical observations. Second, we
introduce an acquisition-tilted search policy that combines the structured
prior of an LLM with inference-time computation and sequential experimental
feedback. Third, we provide a regret analysis that separates surrogate-model error from the suboptimality of the LLM-based search process. Finally, we
demonstrate the framework across three classes of scientific objects:
computer programs, biological sequences, and molecules.

The remainder of the paper is organised as follows.
\S~\ref{sec:ldm} continues the discussion about the search and discovery in open-ended space.
\S~\ref{sec:method} derives the LDM formulation and its acquisition-tilted search method. \S~\ref{sec:algo} presents the practical algorithm.
\S~\ref{sec:theory-main} provides the regret analysis.
Finally, \S~\ref{sec:eval} reports the empirical results.

\section{Search and Discovery in Open-Ended Design Spaces}
\label{sec:ldm}

In this work, discovery takes place in an \emph{open-ended hypothesis space},
rather than over a fixed and fully enumerated input domain. We seek a design or
hypothesis $x$ with high unknown reward $R(x)$, ideally approaching
$x^\star \in \argmax_{x\in\X} R(x)$, where $\X$ denotes the evolving universe
of hypotheses that may become expressible during search. The agent initially
has access only to a reachable subset $\X_0 \subset \X$ and must expand or
reshape this subset by generating new candidates, representations, and design
families. The challenge is therefore twofold: evaluating $R$ is costly,
black-box, and potentially noisy, while the relevant search space is itself
not fully specified in advance but must be progressively constructed through
the discovery process.

This differs from classical numerical optimisation or combinatorial search,
which generally assumes a fixed and explicitly specified domain together with
a gradient, numerical oracle, or known set of valid search operations. In
scientific discovery, the full hypothesis space may not be enumerable, and the
search may need to formulate hypotheses that lie beyond its current
representation or reach.

To reason about what is currently known, we introduce
\emph{a probabilistic surrogate}, i.e.\ a posterior
$p(R\mid\mathcal{D}_t)$ over the reward surface $R$ conditioned on the dataset
$\mathcal{D}_t=\{(x_i,r_i)\}_{i=1}^{t-1}$ of previously evaluated designs and
their noisy feedback $r_i\approx R(x_i)$. Its predictive belief is usually
summarised by posterior moments $(\mu_t(x),\sigma_t(x))$, representing the
expected reward and predictive uncertainty on the surrogate's modelling
support. The surrogate therefore describes the search's current empirical
knowledge, but it does not by itself determine or enumerate the full
hypothesis space.

Using this surrogate-relative notion of knowledge, we partition $\X$ into four
epistemic regimes, as illustrated in
Figure~\ref{fig:knowledge-regimes}.

\begin{itemize}[leftmargin=1.6em,itemsep=2pt,topsep=2pt]
    \item \emph{Known knowns} are evaluated designs whose rewards are
    sufficiently well understood. The surrogate predicts their values with
    low posterior uncertainty $\sigma_t(x)$.

    \item \emph{Known unknowns} are designs that the search can formulate and
    the surrogate can model, but whose rewards remain unresolved. They are
    represented by high posterior uncertainty $\sigma_t(x)$ and can be
    resolved through further evaluation.

\item \emph{Unknown knowns} are designs or evaluation results that exist in
    an external source, such as a database, a previous experiment, or a
    parallel discovery process, but are absent from the current search
    context. They can become available only through an explicit retrieval
    operation.
    
    \item \emph{Unknown unknowns} are designs that lie beyond the search's
    current effective reach. This can occur for two reasons:
    (i)~$\X$ is too large, combinatorial, or effectively unbounded for the current
    search procedure to reach the design; or (ii)~the design lies outside the
    surrogate's modelling support, so that $(\mu_t(x),\sigma_t(x))$ does not
    constitute a meaningful calibrated prediction.
\end{itemize}

These regimes distinguish uncertainty, reachability, modelling, and
knowledge. A design does not become known merely because it belongs to a
mathematically defined space; a reachable design is not necessarily supported
by the surrogate; and information stored externally is not available unless
it is retrieved into the current context. All four regimes may therefore
persist throughout the search.

The \emph{search frontier} is the conceptual boundary of what the current
procedure can effectively formulate, reach, and model. Designs inside this
frontier are accessible to the search, although their rewards may remain
uncertain. Designs outside it have not yet entered the modelled search
process. This distinction gives rise to three operations closely related to \cite{wang2008infinitely}:

\begin{itemize}[leftmargin=1.6em,itemsep=2pt,topsep=2pt]
    \item \emph{Exploitation} acts on known knowns by selecting designs for
    which the surrogate predicts both high reward $\mu_t(x)$ and low
    uncertainty $\sigma_t(x)$.

    \item \emph{Exploration} resolves known unknowns by evaluating reachable
    but uncertain designs. The resulting observations reduce posterior
    uncertainty and may turn known unknowns into known knowns.

    \item \emph{Discovery} expands the search frontier by formulating
    hypotheses that were previously outside the search's effective reach.
    This brings unknown unknowns into the modelled hypothesis space, where they
    become known unknowns that can subsequently be evaluated.
\end{itemize}

Discovery is therefore not simply the evaluation of an untested design. An
untested design that already lies within the surrogate's modelling support is
a known unknown and belongs to exploration. Discovery instead changes what
the search can express, reach, or meaningfully model. Experimental evaluation
then determines the value of the newly surfaced hypothesis.

The three operations correspond to three transitions in
Figure~\ref{fig:knowledge-regimes}: discovery moves hypotheses from unknown
unknowns to known unknowns, exploration moves them from known unknowns to
known knowns, and exploitation operates within the known-known regime. The
unknown-known regime requires separate memory read and write operations. We
leave memory retrieval and federated discovery to future work and focus here
on discovery, exploration, and exploitation within a single sequential search
process.

This formulation is conceptual and does not prescribe a particular discovery
algorithm. The next section introduces the Large Discovery Model, which
instantiates these operations by combining a generative foundation model, the
probabilistic surrogate introduced above, and an acquisition-guided
inference-time search policy.

\section{The Large Discovery Model}
\label{sec:method}

We are now ready to define the \emph{Large Discovery Model} (LDM). LDM is a policy for sequential inverse design built from three interacting components: a generative foundation model, a probabilistic surrogate, and a model-based acquisition function.

Same as before, at iteration $t$,
$\mathcal{D}_t=\{(x_i,r_i)\}_{i=1}^{t-1}
$
denotes the empirical evaluation history, where $x_i$ is a previously evaluated design and $r_i$ is its observed reward. We use $\mathcal{C}_t$ to denote the broader \emph{search context} available to the generative model. This context may include $\mathcal{D}_t$, previously generated candidates, surrogate predictions, acquisition values, constraint feedback, refinement trajectories, and other forms of search memory. Thus, $\mathcal{D}_t$ contains the empirical evidence used to fit the surrogate, whereas $\mathcal{C}_t$ contains the information used to condition candidate generation. For clarity, a compact list of notation is provided in Appendix~\ref{app:notation}.

\paragraph{(i) A generative foundation model.} We denote by
\begin{equation}
    p_{\theta,\alpha}(x\mid\mathcal{C}_t)
\end{equation}
the proposal distribution over plausible designs conditioned on the current search context. Here, $\theta$ denotes the parameters of the underlying generative model, while $\alpha$ denotes its inference configuration, such as the prompting strategy, sampling temperature, reasoning procedure, refinement scheme, or inference-time compute budget. Thus, $p_\theta$ denotes a base model distribution, whereas $p_{\theta,\alpha}$ denotes the effective proposal after the chosen inference configuration has been applied. We use $p_{\theta,\alpha}$ throughout the paper to make explicit that the effective proposal distribution depends not only on the model parameters but also on how the model is used at inference time.

The generative model supplies a structured prior over the design space $\mathcal{X}$. It enables the search to generate candidates that reflect domain knowledge, semantic plausibility, and structural constraints without requiring $\mathcal{X}$ to be explicitly enumerated. By changing its context or inference procedure, the model can also expand and reshape the set of candidates reachable by the current search.

\paragraph{(ii) A probabilistic surrogate.} As introduced in Section~\ref{sec:ldm}, the posterior $p(R\mid\mathcal{D}_t)$
represents the current belief about the unknown reward function, conditioned on the empirical observations collected so far. For a candidate $x$, the predictive mean $\mu_t(x)$ estimates its expected reward, while the predictive uncertainty $\sigma_t(x)$ quantifies the surrogate's epistemic uncertainty about that estimate. High epistemic uncertainty indicates that the prediction is weakly supported by the available evidence and may be reduced through additional informative evaluations. It is therefore distinct from irreducible randomness or measurement noise in the evaluation process.

The surrogate provides the empirical grounding that the generative model alone generally lacks. Although a foundation model may encode substantial domain knowledge, its likelihoods and self-assessments do not necessarily provide calibrated estimates of a task-specific quantitative reward, particularly for novel candidates outside the observed data distribution.

\paragraph{(iii) A model-based acquisition function.} From the surrogate's predictive distribution, we construct an acquisition function $\acq_t(x)$ 
that assigns each candidate a scalar value to the sequential search process. Depending on its form, the acquisition function may favour candidates with high predicted reward, candidates with high epistemic uncertainty, or a principled combination of the two. It therefore mediates the trade-off between exploitation and exploration.

A central novelty of LDM is that it guides search with an \emph{acquisition
value}, rather than a conventional reward-model score. Whereas a reward model
estimates the expected performance of a candidate, the acquisition function
measures the decision value of allocating additional computation or costly
empirical evaluation to it, accounting for both predicted reward and epistemic
uncertainty. This signal therefore governs not only which candidates are
selected for external evaluation, but also how inference-time computation is
allocated across sampling, ranking, refinement, and branch expansion. Our
ablation study in the late experiment section demonstrates the
importance of this acquisition-guided mechanism relative to search based on
predicted reward alone.

These three components play complementary roles. The generative model proposes structured candidates and can move the search frontier towards designs not previously considered. The surrogate grounds the search in empirical observations and quantifies epistemic uncertainty. The acquisition function uses this predictive belief to determine which reachable candidates are most valuable to develop or evaluate. LDM thereby uses empirically calibrated feedback to direct the inference-time search of the generative model.


We seek a search distribution that attains high expected acquisition value while remaining sufficiently close to the structured prior supplied by the generative model. Assume that $\mathcal{X}$ is a measurable space, with its $\sigma$-algebra omitted for simplicity. At iteration $t$, we define
\begin{equation}
\label{eq:variational}
    \pi_t
    =
    \argmax_{q\in\Delta(\mathcal{X})}
    \left\{
        \mathbb{E}_{x\sim q}\!\left[\acq_t(x)\right]
        -
        \frac{1}{\eta}
        \KL\!\left(
            q
            \,\middle\|\,
            p_{\theta,\alpha}(\cdot\mid\mathcal{C}_t)
        \right)
    \right\},
\end{equation}
where $\Delta(\mathcal{X})$ denotes the set of probability measures over $\mathcal{X}$, $\KL$ is the Kullback--Leibler divergence, and $\eta>0$ controls the strength of acquisition-based tilting. The expected-acquisition term moves probability mass towards candidates that are valuable under the empirically grounded surrogate. The KL term prevents the search policy from departing arbitrarily far from the generative model's prior over plausible and well-formed designs. The parameter $\eta$ controls this trade-off: larger values place greater emphasis on acquisition value, whereas smaller values keep the search closer to the generative prior.

The inference configuration $\alpha$ affects $\pi_t$ through $p_{\theta,\alpha}(\cdot\mid\mathcal{C}_t)$ and therefore changes the reference distribution against which the KL divergence is measured. Increasing inference-time computation may reshape this proposal distribution through additional sampling, reasoning, refinement, or structured search. The acquisition function then determines how that computation is allocated according to predicted empirical value and epistemic uncertainty.

Equation~\eqref{eq:variational} is deliberately model-agnostic: it does not depend on a particular generative architecture, probabilistic surrogate, or acquisition function. Its variational form follows the Gibbs variational principle and is closely related to Gibbs posteriors~\citep{donsker1975variational,bissiri2016general}, KL-regularised policy search~\citep{peters2010relative}, maximum-entropy reinforcement learning~\citep{haarnoja2018soft}, and control as probabilistic inference~\citep{levine2018reinforcement}. Accordingly, our contribution is not the isolated generic KL-regularised objective. The distinctive role of Equation~\eqref{eq:variational} in LDM is to connect a structured generative prior to an acquisition value derived from a continually updated empirical surrogate. This connection allows external observations to direct both inference-time computation and sequential evaluation over structured, potentially open-ended design spaces. As we shall show later, a crucial part of our method is to have the acquisition value updated in a non-parametric fashion with the Gaussian Process to make it sample-efficient on-policy learning \citep{ramos2023bayesian}.

The resulting policy is recurrent (see Fig~\ref{fig:loop}). Candidates generated under the current search context are scored and refined using the surrogate-derived acquisition signal; selected candidates are then evaluated by an external empirical source; and the resulting observations update $\mathcal{D}_t$, the surrogate, and the next search context $\mathcal{C}_{t+1}$. LDM therefore does not merely reweight a fixed set of generated samples. It repeatedly couples generation, empirical evaluation, belief updating, and search-frontier expansion.

In the following sections, we first show that Equation~\eqref{eq:variational} admits a closed-form acquisition-tilted policy. We then describe how this ideal policy can be approximated through inference-time sampling, selection, and iterative refinement, with particular attention to test-time compute scaling. We subsequently instantiate the probabilistic surrogate as a Gaussian process and construct $\acq_t$ from its posterior predictive distribution. Finally, we describe how the same acquisition signal can support parameter adaptation: in addition to directing inference through $\alpha$, it may provide supervision for updating the generative parameters $\theta$ through fine-tuning.

\subsection{An Acquisition-Tilted Search Distribution}
\label{sec:core}

With the LLM prior $p_{\theta,\alpha}$ and the acquisition function $\acq_t$ defined, our desired search policy $\pi_t$ follows directly from the variational objective in Eq.~\eqref{eq:variational}. This variational problem is a canonical instance of the Gibbs variational principle (equivalently, the Donsker–Varadhan representation of the KL divergence)~\citep{donsker1975variational,bissiri2016general}, whose unique maximiser is the exponentially tilted distribution. This form of KL-regularised reward maximisation is the same mathematical foundation underlying maximum-entropy reinforcement learning, control-as-inference \citep{levine2018reinforcement}, and the closed-form policy objective in KL-regularised RLHF \citep{rafailov2023direct}.

\begin{proposition}[Optimal search distribution]
\label{prop:gibbs}
The unique maximiser of \eqref{eq:variational} is
\begin{equation}
\label{eq:core}
\pi_t(x)=\frac{1}{Z_t}\,p_{\theta,\alpha}(x\mid\mathcal{C}_t)\,\exp\!\big\{\eta\,\acq_t(x)\big\},
\qquad Z_t=\int_{x\in\mathcal{X}}p_{\theta,\alpha}(x\mid\mathcal{C}_t)\,\exp\!\big\{\eta\,\acq_t(x)\big\}\,\mathrm{d}x.
\end{equation}
\end{proposition}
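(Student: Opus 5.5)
The plan is the standard Gibbs variational argument: rewrite the objective in \eqref{eq:variational} as a single negative KL divergence to the tilted measure, and then use nonnegativity of KL together with its equality case. Abbreviate $p=p_{\theta,\alpha}(\cdot\mid\mathcal{C}_t)$ and define $\pi_t$ by \eqref{eq:core}. Throughout I will assume the minimal regularity that makes the statement meaningful: $\acq_t$ is measurable and $0<Z_t<\infty$ (for instance, $\acq_t$ bounded above, which holds for the GP-based acquisitions used later). Under this assumption $\pi_t$ is a well-defined probability measure, mutually absolutely continuous with $p$, with density $\mathrm{d}\pi_t/\mathrm{d}p=e^{\eta\acq_t}/Z_t$.

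\textbf{Step 1 (restricting the feasible set).} For any $q\in\Delta(\mathcal{X})$ with $q\not\ll p$ we have $\KL(q\,\|\,p)=+\infty$, so the objective equals $-\infty$ (or is undefined, in which case such $q$ are excluded). I will therefore only consider $q\ll p$ with $\KL(q\,\|\,p)<\infty$. For such $q$ I will also check that $\E_q[\acq_t]$ is not $+\infty$. This follows from the Donsker--Varadhan inequality $\E_q[f]\le \KL(q\,\|\,p)+\log\E_p[e^{f}]$ applied to $f=\eta\acq_t$, which gives the finite bound $\E_q[\acq_t]\le \eta^{-1}\bigl(\KL(q\,\|\,p)+\log Z_t\bigr)$. (The identity derived in Step 2 can alternatively be established by a truncation argument, and it yields this bound automatically.)

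\textbf{Step 2 (the key identity).} For $q\ll p$, chain rule for Radon--Nikodym derivatives gives $\log\frac{\mathrm{d}q}{\mathrm{d}p}=\log\frac{\mathrm{d}q}{\mathrm{d}\pi_t}+\eta\acq_t-\log Z_t$ $q$-a.s. Integrating against $q$ gives
\begin{equation*}
\E_{q}[\acq_t]-\frac{1}{\eta}\KL(q\,\|\,p)=\frac{1}{\eta}\log Z_t-\frac{1}{\eta}\KL(q\,\|\,\pi_t).
\end{equation*}
The first term on the right does not depend on $q$.

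\textbf{Step 3 (conclusion).} By Gibbs' inequality, $\KL(q\,\|\,\pi_t)\ge 0$, with equality if and only if $q=\pi_t$ as measures. Hence the objective is bounded above by $\eta^{-1}\log Z_t$, and this bound is attained exactly at $q=\pi_t$. This gives both optimality and uniqueness, and also the optimal value $\eta^{-1}\log Z_t$.

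The main obstacle I expect is the integrability bookkeeping in Step 2. Splitting $\E_q[\log\frac{\mathrm{d}q}{\mathrm{d}p}]$ into three expectations requires that the separate terms are not of the form $\infty-\infty$. I would handle this by first proving the identity for bounded $\acq_t$, where every term is finite, and then extending it via monotone convergence on truncations $\min(\acq_t,n)$. If truncation becomes awkward, the fallback is to simply state boundedness of $\acq_t$ from above as a standing assumption, which is harmless for the paper's setting.
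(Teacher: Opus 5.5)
Your proposal is correct and is the same argument as the paper's: the paper writes $J(q)=\tfrac1\eta\log Z_t-\tfrac1\eta\KL(q\|\pi_t)$, in discrete sum notation and with no integrability bookkeeping, and concludes by Gibbs' inequality, exactly as in your Steps 2--3. One refinement to your measure-theoretic version: $0<Z_t<\infty$ alone does not ensure $\KL(\pi_t\|p_{\theta,\alpha})<\infty$, since that requires $\E_{p_{\theta,\alpha}}[\acq_t^+e^{\eta\acq_t}]<\infty$, so the attainment at $q=\pi_t$ in Step 3 relies on your fallback assumption that $\acq_t$ is bounded above and not merely on $0<Z_t<\infty$.
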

\begin{proof}
Please refer to Appendix~\ref{prop_proof}.
\end{proof}

Eq.~\eqref{eq:core} shows that the optimal policy $\pi_t$ takes an acquisition-tilted form, where the base measure is reweighted by the exponent of the acquisition function. Interestingly, this policy draws an analogy to the infinite-armed bandit framework \citep{berry1997bandit,wang2008infinitely}: as scientific designs (arms) cannot be listed in advance, they are revealed from a reservoir distribution, and the policy tilts that reservoir toward arms it judges valuable. In this reading, $p_{\theta,\alpha}(\cdot\mid\mathcal{C}_t)$ plays the role of the reservoir, $\eta\,\acq_t(x)$ acts as the strength-of-reward tilt, and $Z_t$ is the usual log-sum-exp normaliser.

 A clear interpretation of the inference configuration $\alpha$ and the KL-regularisation coefficient $\eta$ can be drawn through standard analysis of infinite-armed bandits. We restrict attention to the common case in which the reservoir admits the temperature form
\begin{equation}
  \label{eq:temp}
  p_{\theta,\alpha}(x\mid\mathcal{C}_t) \propto p_\theta(x\mid\mathcal{C}_t)^\alpha,
\end{equation}
where $p_\theta$ denotes the unconditional base proposal. This is the exponential family of reservoirs commonly used in the bandit literature, and it makes $\alpha$ transparent as the strength of the prior: large values of $\alpha$ concentrate probability mass on modes of $p_\theta$, while $\alpha\to 0$ spreads mass toward a uniform measure on $\mathcal{X}$. Substituting this form into \eqref{eq:core} and taking the mode gives the bandit-style argmax
\begin{equation}
  \label{eq:argmax}
  x_{t+1}=\argmax_{x\in\mathcal{X}}\ \Big[\ \alpha\,\log p_\theta(x\mid\mathcal{C}_t)+\eta\,\acq_t(x)\ \Big],
\end{equation}
which picks the arm that maximises a weighted combination of the prior log-mass $\log p_\theta$ and the reward tilt $\acq_t$, i.e., the bandit analogue of selecting the most promising revealed arm.

Examining the two limiting cases of the temperature form $p_{\theta,\alpha}(x\mid\mathcal{C}_t)\propto p_\theta(x\mid\mathcal{C}_t)^\alpha$ situates LDM relative to related paradigms. As $\eta\to 0$, the tilt vanishes and $\pi_t$ collapses to the raw reservoir $p_{\theta,\alpha}(\cdot\mid\mathcal{C}_t)$, so search is driven entirely by the generative model with no data feedback. As $\alpha\to0$ the reservoir spreads toward a uniform measure on $\mathcal{X}$ and the policy becomes $\pi_t\propto\exp\{\eta\,\acq_t(\cdot)\}$, recovering classical Bayesian optimisation once $\eta$ is also taken large. LDM is the regime between these two limit points, where both the structured prior and the uncertainty-aware acquisition contribute to the policy. We note, however, that some LLM inference backends do not admit the temperature form $p_\theta(x\mid\mathcal{C}_t)^\alpha$, and the preceding analysis should be read as the canonical case under which the role of $\alpha$ is most cleanly interpreted.

With this construction, LDM defines a discovery policy $\pi_t$ from which new designs may be drawn using either a stochastic or a deterministic approach, as illustrated by Eq.~\eqref{eq:argmax}. In practice, the normalising constant $Z_t$ is never computed explicitly, and we instead rely on inference-time search to approximate draws from the tilted distribution.
\begin{figure}[t]
  \centering
\includegraphics[width=\textwidth]{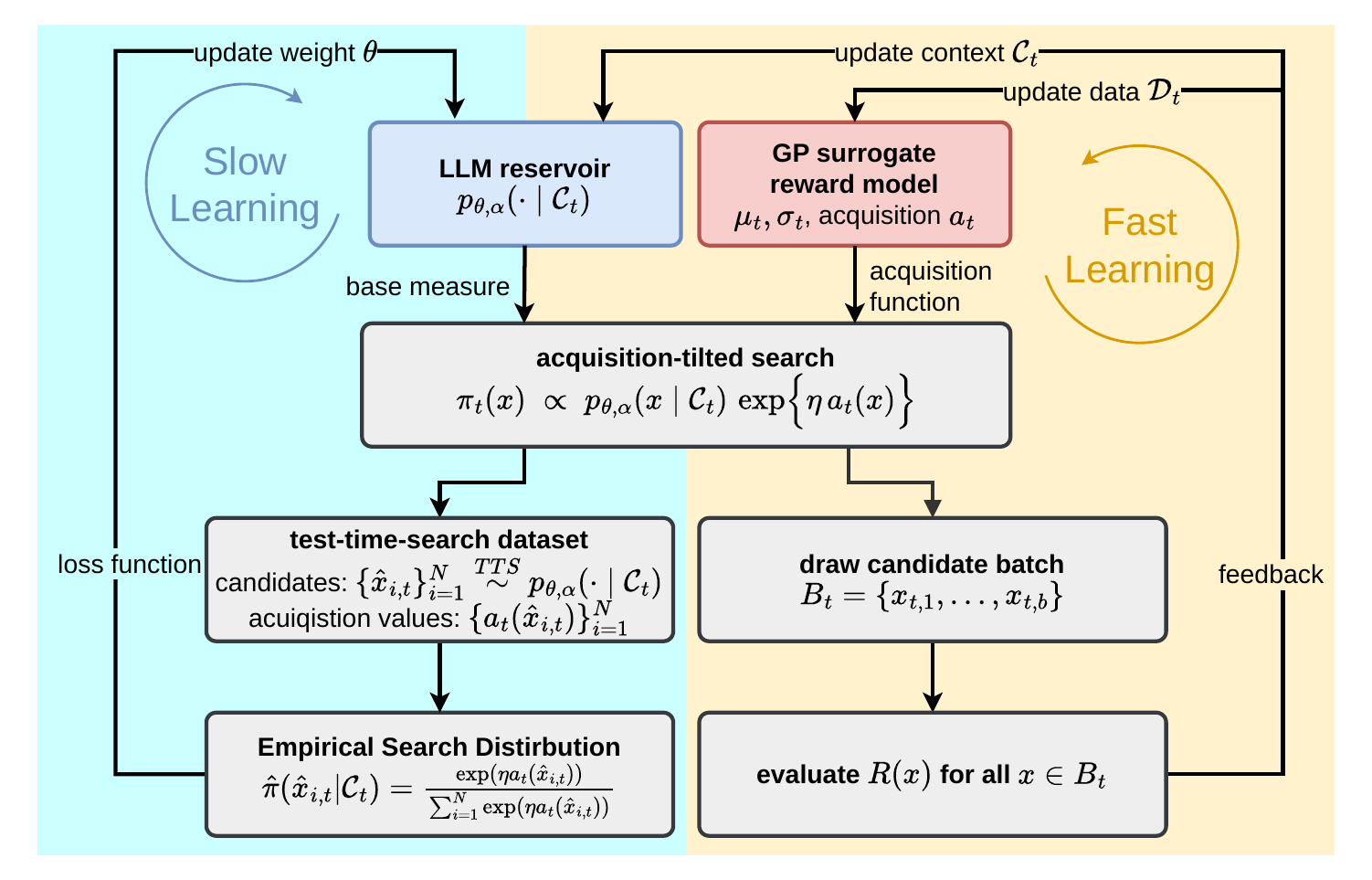}
    \caption{The Large Discovery Model learning loop. The right (cyan) region denotes the \emph{fast learning loop} of the LDM's recurrent discovery: a candidate batch $B_t$ is drawn and evaluated, and the observations update the surrogate data $\D_t$ and the LLM context $\C_t$. The left (yellow) side corresponds to the \emph{slow learning loop}, which amortises the acquisition-tilted search policy into the LLM's weights $\theta$ via LDM-TTS fine-tuning (see~\S\ref{sec:ldm-finetune}). All candidates $\{\hat{x}_{i,t}\}_{i=1}^{N}$ generated by the LLM during test-time search, alongside their respective acquisition values, are recorded to a TTS dataset. Using this dataset, we fit the empirical search distribution into the network, endowing the foundation model with generalisable discovery expertise to enable stronger discovery performance on subsequent tasks. The fast loop iterates with each batch of reward evaluation within a specific discovery task, while the slow loop is executed only after sufficient new samples spanning diverse tasks have been collected for the TTS dataset.}
  \label{fig:loop}
\end{figure}

\subsection{Surrogate Modelling of the Reward Function}
\label{sec:surrogate}

To efficiently navigate the design space, we require a probabilistic model over function space that treats the reward function $R$ as a random function and models the posterior belief $R \mid \mathcal{D}_t$ conditioned on our cumulative evaluations. Our LDM formulation is largely inspired by the uncertainty-aware decision-making paradigm of Bayesian optimisation \citep{garnettBayesianOptimization2023}.

We model this objective using a Gaussian Process (GP) \citep{rasmussen2006gaussian}. GPs serve as highly effective surrogates in this setting because they are exceptionally sample-efficient, admit exact Bayesian updates, and provide principled non-parametric uncertainty estimates across unexplored regions of the design space. Formally, we define the prior distribution over the reward function as:
\begin{equation}
\label{eq:gp_prior_main}
R(x) \sim \mathcal{GP}\bigl(m(x),\, k(x, x')\bigr),
\end{equation}
where $m: \mathcal{X} \to \mathbb{R}$ is the prior mean function and $k: \mathcal{X} \times \mathcal{X} \to \mathbb{R}$ is a positive-definite kernel function capturing spatial correlation.

By conditioning this prior on the historical data $\mathcal{D}_t = \{(x_i, r_i)\}_{i=1}^{t}$, we obtain a Gaussian posterior distribution characterised by a predictive mean $\mu_t(x)$ and a predictive variance $\sigma_t^2(x)$ for any candidate point $x \in \mathcal{X}$. These two statistical moments allow us to construct acquisition functions that elegantly balance exploitation (seeking high predicted rewards) and exploration (targeting regions of high uncertainty). For instance, the popular Upper Confidence Bound (UCB) acquisition function is formulated as:
\begin{equation}
\label{eq:ucb_main}
a^{\mathrm{UCB}}_t(x) = \mu_t(x) + \sqrt{\beta_t}\,\sigma_t(x),
\end{equation}
where $\beta_t > 0$ is a parameter scaling the exploration incentive. The complete technical details of the posterior inference, along with the formulation of other common acquisition functions, are provided in Appendix~\ref{sec:llm-bo}.

Because UCB decomposes the decision value into a high-mean and a high-uncertainty term, the tilted policy $\pi_t(x)\propto p_{\theta,\alpha}(x\mid\C_t)\exp\{\eta\,\acq_t(x)\}$ of Eq.~\eqref{eq:core} instantiates the discovery--exploration--exploitation triangle of Figure~\ref{fig:knowledge-regimes} directly: the LLM reservoir term realises discovery, the high-$\sigma_t$ tail of $\acq_t$ realises exploration, and the high-$\mu_t$ mode realises exploitation. We use UCB as the running example precisely because the trade-off is explicit in its closed form; in richer acquisitions the same triangle is realised only implicitly through the surrogate.

\subsection{Acquisition-Tilted Inference-Time Search}
\label{sec:inference-time-search}

Exact evaluation or direct sampling of $\pi_t$ is generally intractable because the base measure is induced by an LLM decoding process rather than given as a tractable density over a closed finite space. Moreover, the resulting design space is typically large and open-ended, and the normalising constant of the acquisition-tilted distribution cannot be computed exactly. We therefore approximate $\pi_t$ by inference-time search: the LLM first proposes a finite candidate pool, and the acquisition function then reweights or selects among these candidates. This turns test-time compute into search effort guided by the acquisition function, rather than by model-internal confidence or linguistic plausibility~\citep{wang2024openr}.

Concretely, at round $t$ we draw a finite pool of $N$ candidates from the LLM-induced proposal,
\[
\{\hat{x}_{t,i}\}_{i=1}^N \sim p_{\theta,\alpha}(\cdot \mid \mathcal{C}_t),
\]
where stochastic re-sampling or deterministic selection is then performed to produce the design $x_t$ to evaluate at the current step. The \emph{stochastic} approach approximates sampling from the acquisition-tilted distribution by softmax reweighting over this finite pool:
\begin{equation}
\label{eq:softmax_reweight}
\Pr(x_{t} = \hat{x}_{t,i})
=
\frac{\exp\bigl\{\eta \, \acq_t(\hat{x}_{t,i})\bigr\}}
{\sum_{j=1}^N \exp\bigl\{\eta \, \acq_t(\hat{x}_{t,j})\bigr\}}.
\end{equation}
When $\eta=0$, the selected candidate is sampled uniformly from the proposed pool, ignoring the acquisition scores. As $\eta$ increases, probability mass concentrates on candidates with larger acquisition values. In the limit $\eta \to \infty$, the stochastic rule degenerates to \emph{deterministic} best-of-$N$ selection:
\begin{equation}
x_{t+1}
=
\argmax_{i \in \{1,\dots,N\}}
\acq_t(\hat{x}_{t,i}).
\end{equation}

Candidate pool size $N$ plays a critical role in \emph{test-time compute scaling}, a powerful technique for LLM reasoning~\citep{wangTutorialLLMReasoning2025}, and LDM adapts this core advantage to discovery tasks. Larger $N$ improves the fidelity of the finite-pool approximation to the acquisition-tilted distribution and can improve empirical performance, at the cost of additional LLM inference and surrogate evaluation. This aligns with the established finding that test-time compute can substitute for model scale in LLM reasoning, with the key distinction that LDM uses the acquisition function as explicit search guidance rather than relying on the model's own self-evaluation or linguistic plausibility.

The same construction extends naturally to batched inference-time search that chooses multiple designs at the same time. The deterministic limit selects the top-$b$ candidates according to $\acq_t$. Meanwhile, the stochastic version samples a diverse acquisition-weighted batch, for example through Gumbel-top-$b$ sampling over the same softmax weights (see Appendix~\ref{sec:batch}).

The finite-pool reweighting scheme above is simple and generator-agnostic: it only requires the ability to sample candidate designs from the LLM-induced proposal and to evaluate their acquisition scores. It therefore applies uniformly across modalities (e.g., text, molecular strings, graphs, and programmes), without requiring a representation-specific search algorithm. This full-candidate reweighting scheme forms the basis of our experiments. For sequentially generated design spaces such as sequences and programs, the same principle can be extended to intermediate generation steps via a prefix value function:
\[
V(x_{t,\le i})
=
\mathbb{E}_{x_{t,>i}}
\bigl[
\acq_t(x_{t,\le i},x_{t,>i})
\bigr],
\]
enabling more advanced methods such as beam search and Monte Carlo Tree Search~\citep{fengAlphazerolikeTreeSearchCan2024}. These multi-step variants offer further search coverage at higher computational cost, and we leave their full empirical evaluation as future work.

\subsection{Consolidating acquisition-tilted inference via model fine-tuning}
\label{sec:ldm-finetune}


The preceding sections keep the model parameters $\theta$ fixed and realise the acquisition-tilted policy through inference-time computation. The procedure acts as a policy-improvement operator \emph{within} a discovery episode, and require large budget to repeat the same over-generation and scoring process at every search state. We now consider consider the complementary route of scaling computation \emph{across} episodes. Search states, candidate pools, and acquisition decisions accumulated by LDM-TTS become experience for slow-learning the proposal model. There two routes form a recurrent search--learn cycle as illustrated in Figure~\ref{fig:loop}. Search adapts decisions to the current empirical posterior; parameter learning consolidates recurring search improvements across states and tasks; and the learned proposer provides a stronger starting distribution for subsequent search.

\begin{figure}[t]
\centering
\begingroup
\resizebox{0.98\textwidth}{!}{%
\begin{tikzpicture}[
    >={Latex[length=2.1mm]},
    font=\small,
    lane/.style n args={2}{draw=#1,line width=1.05pt,fill=#2,rounded corners=7pt,
        minimum width=43mm,minimum height=88mm},
    card/.style n args={2}{draw=#1,line width=0.9pt,fill=#2,rounded corners=5pt,
        align=center,inner sep=3pt,text width=37mm,minimum height=15mm},
    train/.style={draw=axisgray,line width=0.8pt,fill=white,rounded corners=4pt,
        align=center,inner sep=3pt,text width=37mm,minimum height=11mm},
    flow/.style={->,line width=0.9pt,draw=bodygray},
    title/.style={font=\bfseries\large,align=center},
    subtitle/.style={font=\scriptsize\itshape,align=center},
    note/.style={font=\scriptsize,align=center,text=bodygray},
]

\coordinate (C1) at (-5.15,0);
\coordinate (C2) at (0,0);
\coordinate (C3) at (5.15,0);

\node[lane={chatredline}{chatred!30}] (lane1) at (C1) {};
\node[lane={reasonblueline}{reasonblue!32}] (lane2) at (C2) {};
\node[lane={discgreenline}{discgreen!40},line width=1.45pt] (lane3) at (C3) {};

\node[title,text=chatdeep] at ($(C1)+(0,3.5)$) {Chat LLM};
\node[subtitle,text=chatredline] at ($(C1)+(0,3.0)$) {aligns with human priors};
\node[title,text=reasondeep] at ($(C2)+(0,3.5)$) {Reasoning LLM};
\node[subtitle,text=reasonblueline] at ($(C2)+(0,3.0)$) {navigates closed-world logic};
\node[title,text=discdeep] at ($(C3)+(0,3.5)$) {Discovery LLM};
\node[subtitle,text=discgreenline] at ($(C3)+(0,3.0)$) {explores open-world boundary}; 

\node[card={chatredline}{white}] (human) at ($(C1)+(0,1.5)$)
  {\textbf{Preference value}\\subjective intuition, helpfulness, safety};
\node[card={reasonblueline}{white}] (logic) at ($(C2)+(0,1.5)$)
  {\textbf{Verification value}\\ground truth, tests, process verifiers};
\node[card={discgreenline}{white}] (acq) at ($(C3)+(0,1.5)$)
  {\textbf{Epistemic value}\\$\mu_t,\sigma_t$, information gain,\\Pareto expansion};

\node[train] (alignft) at ($(C1)+(0,-0.7)$)
  {\textbf{Alignment FT}\\distil human preferences};
\node[train] (reasonft) at ($(C2)+(0,-0.7)$)
  {\textbf{Reasoning FT}\\distil logical trajectories};
\node[train] (discft) at ($(C3)+(0,-0.7)$)
  {\textbf{Discovery FT}\\distil acquisition policy};

\node[card={chatredline}{chatred!12}] (assistant) at ($(C1)+(0,-2.8)$)
  {\textbf{Assistant}\\generate responses that satisfy user intent};
\node[card={reasonblueline}{reasonblue!14}] (solver) at ($(C2)+(0,-2.8)$)
  {\textbf{Solver}\\navigate paths to reach verifiable truth};
\node[card={discgreenline}{discgreen!18}] (manager) at ($(C3)+(0,-2.8)$)
  {\textbf{Research manager}\\propose experiments\\to maximise epistemic\\progress};

\foreach \a/\b in {human/alignft,alignft/assistant,logic/reasonft,reasonft/solver,acq/discft,discft/manager}{
  \draw[flow] (\a) -- (\b);
}

\draw[discgreenline,line width=0.95pt,dashed,rounded corners=8pt]
  ($(lane3.south west)+(-0.10,-0.10)$) rectangle ($(lane3.north east)+(0.10,0.10)$);
\node[font=\small\itshape,text=discdeep,fill=white,inner sep=2pt]
  at ($(lane3.north)+(0,0.10)$) {our target};

\end{tikzpicture}%
}
\endgroup
\caption{\textbf{Fine-tuning as value distillation across model regimes.}
Chat LLMs distil human preference value to align with subjective intuition. Reasoning LLMs distil verification value to navigate deterministic proof paths. In contrast, Discovery LLMs distil epistemic acquisition value. Rather than merely memorising static, domain-specific solutions, discovery fine-tuning trains a research manager that learns an intrinsic acquisition strategy that prioritising high-information experiments to expand open-world boundaries and advance an evolving research trajectory.}
\label{fig:finetune-values}
\end{figure}

\paragraph{Acquisition-guided policy learning, not reward-guided}
The distinguishing feature of this update is the decision value used for supervision. As summarised in Figure~\ref{fig:finetune-values}, alignment learning commonly follows human preference value~\citep{ouyangTrainingLanguageModels2022,rafailov2023direct}, while reasoning-model training commonly follows trajectories leading to verifier-approved answers~\citep{wang2024openr}. Scientific models may also be trained directly on these measured properties, teaching them which completed molecules, proteins, or programs received high reward~\citep{liu2025drugimprovergpt,cao2026reinforcement}. These are useful but different targets compared to LDM. LDM learns from the \emph{state-dependent acquisition value of the next experimental action}: whether a candidate is worth developing or evaluating given the observations, constraints, and search frontier at the current round. 

Acquisition value is not identical to predicted or realised reward. In scientific discovery, the decision value of a move is inherently Bayesian: it balances predicted quality ($\mu_t$), epistemic uncertainty ($\sigma_t$), physical constraint satisfaction, and Pareto-frontier expansion. An experimental move can therefore be highly valuable even when it does not immediately maximize the posterior mean. This mirrors the counter-intuitive, high-information exploratory actions seen in breakthrough AI systems, such as AlphaZero's famous ``Move 37'' \citep{silver2017mastering,schut2025bridging}. A move is valuable because it resolves a critical unknown, opens a novel region of the search space, or alters the future trajectory of the research process. LDM fine-tuning aims to embed this hidden acquisition-value structure directly into the model's internal decision bias. By doing so, the model learns the strategic risk-taking required to manage an evolving research state, rather than merely imitating historical end solutions.


\paragraph{Acquisition-reweighted data collection.}
At search state $\C_t$, let
$\widehat{\mathcal X}_t=\{\hat{x}_{t,i}\}_{i=1}^{N}$ be the candidate pool
sampled from $p_{\theta,\alpha}(\cdot\mid\C_t)$ as in
Section~\ref{sec:inference-time-search}. The surrogate assigns an acquisition
score to every candidate, inducing the finite-pool sampling distribution
\begin{equation}
\label{eq:empirical-tilt}
\widehat{\pi}^{(N)}_t(\hat{x}_{t,i}\mid\C_t)
=
\frac{\exp\!\left\{\eta\,\acq_t(\hat{x}_{t,i})\right\}}
{\sum_{j=1}^{N}\exp\!\left\{\eta\,\acq_t(\hat{x}_{t,j})\right\}}.
\end{equation}
Because the pool itself is sampled from the LLM reservoir, this empirical
reweighting is a finite-sample approximation of the complete tilted policy in
Equation~\eqref{eq:core}; the structured generative prior is implicit in which
candidates enter the pool, while the exponential factor supplies the
empirically grounded policy improvement.

This construction yields more supervision than the single candidate that is
eventually sent for costly evaluation. Candidates generated under the same
history form counterfactual next actions, and their surrogate scores reveal how
their exploitation, exploration, feasibility, and frontier-expansion values
compare at that particular search state. We record these search states and
candidate traces as LDM-TTS search experience and retain high-value actions
according to $\widehat{\pi}^{(N)}_t$. This acquisition-reweighted collection
defines the training distribution. Rather than sampling traces explicitly, the
same distribution can be implemented by weighting each record in proportion
to its acquisition tilt when learning $p_\phi$. In this sense, parameter
learning amortises part of the computation performed by repeated search without
making the learned proposer identical to the online search policy.

\paragraph{Reasoning augmentation.}
Although an acquisition score provides a precise ranking, a scalar alone if less informative and does
not expose the reusable reason why an action advances the search~\citep{feng2024natural,song2026learning}. We therefore
optionally augment a retained action with a concise rationale $z_{t,i}$ that
interprets the decision in the current research state. The rationale identifies
relevant evidence in the history, diagnoses progress or stagnation, and states
whether the proposed move should exploit a promising family, explore an
uncertain one, preserve diversity, satisfy a constraint, or expand the active
support. In contrast to a reasoning trace for a closed problem, it explains
\emph{why the next experiment is worth running}, rather than how to derive a
known correct answer. At the meantime, converting acquisition scalar value to natural language enable the model/agent to allocate more tokens at test time and scale up this "discovery reasoning".


Let $\bar w_{t,i}$ denote normalised acquisition weights derived from
Equation~\eqref{eq:empirical-tilt}. We train the proposer with the weighted
autoregressive objective
\begin{equation}
\label{eq:sft-loss}
\mathcal L_{\mathrm{SFT}}(\phi)
=
-\sum_{t,i}\bar w_{t,i}
\left[
\lambda_z\log p_\phi(z_{t,i}\mid\C_t)
+
\log p_\phi(\hat{x}_{t,i}\mid\C_t,z_{t,i})
\right],
\end{equation}
where $\lambda_z=1$ gives reasoning-augmented policy learning. For action-only
policy learning, $z_{t,i}$ is omitted, $\lambda_z=0$, and the candidate is
conditioned directly on $\C_t$. If traces are sampled from
$\widehat{\pi}^{(N)}_t$ during dataset construction, the same objective may be
implemented with uniform example weights.


\paragraph{Complete LDM training pipeline} Figure~\ref{fig:app-training-pipeline} in Appendix provide an overview of the complete training workflow from test-time search to data construction and supervised fine-tuning. In our experiments, we use strong model to perform high-budget search as teacher, and smaller-size open-sourced model as student, and the intermediate data construction and post-process determine what and how the student learn from its teacher.

The two compute routes in Figure~\ref{fig:loop} therefore remain complementary. Within a task, the fast
search loop uses new observations to recalibrate value and redirect
computation. Across accumulated LDM-TTS trajectories, the slow learning loop
consolidates recurring acquisition-guided decisions into the proposal model.
The experiments in Section~\ref{sec:finetune-experiments} test whether this consolidation improves
the reservoir and whether the learned research-management policy transfers to
targets and tasks absent from the fine-tuning data.


\section{The Algorithm}
\label{sec:algo}

\begin{algorithm}[htbp]
\caption{The Large Discovery Model Loop}
\label{alg:ldm_sequential}
\begin{algorithmic}[1]
\Require Reward function $R:\mathcal{X}\to\mathbb{R}$; prior mean $m(\cdot)$; kernel $k(\cdot,\cdot)$; initial dataset $\mathcal{D}_1$; initial context $\mathcal{C}_1$; evaluation budget $T$; batch size $b$; candidate pool size $N$.
\For{$t = 1, \dots, T$}
  \State Update the active decision domain: $A_t \gets \operatorname{supp}\,p_{\theta,\alpha}(\cdot\mid\C_t)$
  \State Fit Gaussian-process surrogate over $A_t$ using data $\mathcal{D}_t$ to obtain the posterior $R \mid \mathcal{D}_t \sim \mathcal{GP}(\mu_t, k_t)$, given by Eqs.~\eqref{eq:postmean}, \eqref{eq:postvar} \Comment{Full derivation in Appendix~\ref{sec:llm-bo}.}
  \State Construct acquisition function $\acq_t$ by combining $\{\mu_t, \sigma_t\}$ \Comment{e.g. Eq.~\eqref{eq:ucb} or \eqref{eq:ei}.}
  \State Draw $B_t = \{x_{t,1},\dots,x_{t,b}\}$ from $\pi_t \propto p_{\theta,\alpha}(\cdot\mid\C_t)\exp\{\eta\,\acq_t\}$ using Algorithm~\ref{alg:its}
  \State Observe black-box noisy rewards $r_{t,i} = R(x_{t,i}) + \epsilon_{t,i}$ for all $x_{t,i} \in B_t$
  \State Update dataset: $\mathcal{D}_{t+1} \gets \mathcal{D}_t \cup \{(x_{t,i}, r_{t,i})\}_{i=1}^b$
  \State Update context: $\mathcal{C}_{t+1} \gets \mathcal{C}_t \cup \{(x_{t,i}, r_{t,i})\}_{i=1}^b \cup \{\text{Reflection feedback (if applicable)}\}$
\EndFor
\State \Return $\argmax_{(x,r) \in \mathcal{D}_{T+1}} r$
\end{algorithmic}
\end{algorithm}

\begin{algorithm}[htbp]
\caption{Acquisition-Guided Inference-time Sampling}
\label{alg:its}
\begin{algorithmic}[1]
\Require LLM prior $p_{\theta,\alpha}(\cdot \mid \mathcal{C}_t)$; acquisition function $\acq_t$; tilt parameter $\eta$; pool size $N$; batch size $b$; selection mode $\in \{\text{deterministic}, \text{stochastic}\}$.
\State Sample $N$ candidate designs from the LLM prior: $\{\hat{x}_{t,i}\}_{i=1}^N \sim p_{\theta, \alpha}(\cdot \mid \mathcal{C}_t)$
\State Compute acquisition scores $s_i \gets \acq_t(\hat{x}_{t,i})$ for $i=1,\dots,N$
\If{selection mode = deterministic}
  \State Let $i_1,\dots,i_b$ be the indices of the $b$ largest scores $s_i$, and set $B_t \gets \{\hat{x}_{t,i_k}\}_{k=1}^{b}$ \Comment{Top-$b$ by acquisition.}
\ElsIf{selection mode = stochastic}
  \State Compute exponentiated-acquisition weights $w_i \gets \exp\{\eta\, s_i\}$ for $i=1,\dots,N$
  \State Sample $B_t = \{\hat{x}_{t,i_k}\}_{k=1}^{b}$ from the weights $w_1,\dots,w_N$ via Gumbel-top-$b$ \Comment{See \S\ref{sec:batch}.}
\EndIf
\State \Return $B_t$
\end{algorithmic}
\end{algorithm}

In this section, we present the algorithm to apply an LDM in sequential experimental designs for scientific discovery. As illustrated in Figure~\ref{fig:loop}, the LDM drives an iterative loop that tightly coordinates a generative foundation model with a data-driven verification surrogate. Each round constructs the tilted search policy, draws a batch of promising candidates, evaluates them with real experiments, and updates both the numeric results and the generative context with new observations. If the surrogate identifies a stale regime (cf.\ \S\ref{sec:eval}), the LLM is invoked to append a \emph{reflection feedback} summary to $\mathcal{C}_t$, thereby ensuring that subsequent proposals remain aligned with the most recent mechanistic understanding. This operation is orthogonal to the tilted-search core. The complete iterative procedure is formalised in Algorithm~\ref{alg:ldm_sequential}, while the inner inference-time sampling loop is specified in Algorithm~\ref{alg:its}.

\subsection{Policy Construction and Realisation}

We construct the approximate tilted policy by reweighting an LLM-derived base measure with the acquisition function. Depending on the design space, the base measure $p_{\theta,\alpha}(\cdot \mid \mathcal{C}_t)$ can be instantiated in two ways.
\paragraph{Direct Generation.} The LLM directly generates complete candidate designs from its autoregressive decoding distribution. Invalid candidates can be removed by rejection sampling with a validity checker.
\paragraph{Indirect Parameterisation.} When direct generation of valid designs is inefficient, the LLM instead specifies a parameterised search region, such as its centre, radius, or local bounds. An external sampler then generates candidates within this region. This decouples LLM inference from candidate generation and allows the pool size $N$ to be scaled more efficiently.

Both paradigms integrate seamlessly with the inference-time sampling pipeline. Deterministic top-$b$ selection ranks the finite pool by acquisition. Gumbel-top-$b$ draws a Plackett--Luce weighted sample without replacement, avoiding duplicate selections in batched evaluation \citep{kool2019gumbeltopk}. The candidate pool size $N$ serves as the primary dial for test-time compute scaling, enabling a continuous trade-off between inference cost and search quality.

\subsection{Surrogate on the Dynamic Support}
\label{sec:dynamic-support}

Under indirect parameterisation, the LLM defines not only candidate proposals but also the active search domain. We write this domain as
\[
A_t := \operatorname{supp}\, p_{\theta,\alpha}(\cdot\mid\C_t)\subseteq\X.
\]
Here, $\X$ may denote the full space of experimental designs, while $A_t$ is a lower-dimensional parameterised subspace selected by the LLM at round $t$, such as a set of schedule variables, architectural choices, or local search bounds.

This restricted support simplifies both search and surrogate modelling. Rather than modelling the full space $\X$, the GP is defined over $A_t$ and can be fit using historical observations whose designs lie in the current support:
$\mathcal D_t|_{A_t}=
\{(x,r)\in\mathcal D_t : x\in A_t\}$,
where $r$ denotes the observed reward associated with design $x$. Since $A_t$ is explicitly parameterised, the GP kernel and mean function can be defined directly in this reduced space. As the LLM changes or expands the active search domain across rounds, the surrogate is updated accordingly on the new $A_t$.

\subsection{Practical Implementation}

The framework supports several extensions commonly needed in scientific discovery: (1) \emph{constrained and cost-aware acquisition}, where feasibility constraints and heterogeneous evaluation costs are incorporated through constrained expected improvement or cost-normalised acquisition functions; (2) \emph{decomposed acquisition feedback}, where surrogate outputs such as predictive mean, epistemic uncertainty, constraint slack, and estimated cost are provided separately to the LLM to support more targeted proposal updates; (3) \emph{multi-objective optimisation}, where the scalar acquisition function is replaced by Expected Hypervolume Improvement (EHVI) for vector-valued objectives; and (4) \emph{batch selection}, where, for parallel evaluations with $b>1$, candidates are sampled without replacement from the finite-pool tilted distribution using Gumbel-top-$b$ sampling (see \S\ref{sec:batch}). Technical details are provided in Appendix~\ref{sec:impl-details}.

\section{Theoretical Analyses}
\label{sec:theory-main}

We now present a theoretical analysis of LDM that explicitly captures its
distinctive interaction between an LLM-defined, dynamically evolving search
space and acquisition-guided Bayesian optimisation. Unlike standard GP-UCB
analyses, which assume optimisation over a fixed and fully accessible domain,
our analysis allows the set of reachable candidates to depend on the LLM
reservoir and the evolving discovery context. This yields a new regret
decomposition that separates three sources of error: the
\emph{discovery gap} caused by the reservoir not yet reaching high-quality
regions, the conventional surrogate-based optimisation error within the
currently reachable region, and the \emph{LDM sampling shortfall} incurred
because candidates are sampled from an acquisition-tilted proposal rather
than obtained by exact acquisition maximisation. The resulting bound therefore
makes explicit how both the support and the probability allocation of the LLM
proposal affect discovery performance. This section is self-contained, and
readers primarily interested in the empirical results may safely skip it.

Let $\X$ denote the design space, and let
$x^\star\in\argmax_{x\in\X}R(x)$ be a global maximiser of the unknown objective
function. Importantly, performance is measured with respect to the true
objective $R$, rather than the uncertainty-aware acquisition function used to
guide search. After $T$ evaluations, the simple regret is
\begin{equation}
\regret_T
=
R(x^\star)-\max_{1\leq t\leq T}R(x_t).
\end{equation}

At round \(t\), the inference configuration $\alpha$ induces the effective proposal distribution $p_{\theta,\alpha}(\cdot\mid\C_t)$, and its support (the LLM-controlled dynamic search domain of Section~\ref{sec:algo}) is
\[
A_t \;:=\; \operatorname{supp}\, p_{\theta,\alpha}(\cdot\mid\C_t) \;=\; \{x\in\X : p_{\theta,\alpha}(x\mid\C_t) > 0\}.
\]

The LDM then samples the next candidate \(x_t \sim \pi_t\), where
\begin{equation}
\label{eq:theory-pi}
\pi_t(x) =
\frac{\exp(\eta\,\acq_t(x))\,p_{\theta,\alpha}(x\mid\C_t)}
{\sum_{u \in A_t} \exp(\eta\,\acq_t(u))\,p_{\theta,\alpha}(u\mid\C_t)}.
\end{equation}
Here, $\acq_t(x) = \mu_t(x) + \sqrt{\beta_t}\sigma_t(x)$ is the UCB acquisition function (matching the LDM convention in Section~\ref{sec:method}).

To relate this performance criterion to the per-round behaviour of the algorithm, define the instantaneous regret at round \(t\) as $\regret_t = R(x^\star) - R(x_t)$. Here, \(t\) indexes evaluations under the true objective \(R\), rather than internal LLM interaction steps. For simplicity, the analysis focuses on sequential selection and does not model batch selection. Equivalently, the simple regret after \(T\) trials is $\regret_T = \min_{1 \le t \le T}\regret_t$. In the analysis below, we first study the cumulative behaviour of \(\{\regret_t\}_{t=1}^T\), from which a simple regret guarantee follows via the standard relation $\regret_T \le \frac{1}{T}\sum_{t=1}^T \regret_t$.

\noindent\textbf{Instantaneous Regret Decomposition as Discovery Gap and Optimisation Regret.}
For each round \(t\), define the best objective value attainable within the current reservoir support as $R_{t,A}^\star = \max_{x \in A_t} R(x)$. Then, the instantaneous regret admits the exact decomposition
\begin{align}
\regret_t
&= R(x^\star) - R(x_t) \\
&=
\underbrace{R(x^\star) - R_{t,A}^\star}_{\regret_t^{\rm disc}}
+
\underbrace{R_{t,A}^\star - R(x_t)}_{\regret_t^{\rm opt}} .
\end{align}
The first term, \(\regret_t^{\rm disc}\), is the \emph{discovery gap}: it measures the loss incurred because the current reservoir support \(A_t\) may not contain designs sufficiently close to the global optimum. This term is zero whenever \(x^\star \in A_t\). The second term, \(\regret_t^{\rm opt}\), is the \emph{optimisation regret} within the currently discovered region: it measures how far the sampled point \(x_t\) is from the best candidate available in \(A_t\).

This decomposition separates the two roles of the LDM. The reservoir distribution controls whether high-quality regions are discovered at all, while the acquisition tilt controls how effectively the algorithm selects promising candidates among those currently reachable.

Below, we will give the necessary assumptions and definitions to support the regret bound analysis. 
\begin{assumption}[UCB calibration]
\label{ass:ucb-calibration}
For a non-decreasing sequence $(\beta_t)_{t\ge1}$, with probability at least $1-\delta_{\rm gp}$, simultaneously
for all $t\le T$ and $x\in\X$,
\[
|R(x)-\mu_t(x)|
\le
\sqrt{\beta_t}\,\sigma_t(x).
\]
This event is implied by the standard kernel bandit conditions: $R(x)=m(x)+g(\phi(x))$ with $g\in\mathcal H_k$, bounded RKHS norm, bounded kernel diagonal, and conditionally sub-Gaussian observation noise \citep{srinivas2010gaussian,chowdhury2017kernelized}.
\end{assumption}

\begin{assumption}[Reservoir coverage and local regularity]
\label{ass:coverage}
Let $\mathrm d_\X$ be a distance on the design space and define the reservoir
coverage radius $r_t^{\rm cov}=\inf_{x\in A_t}\mathrm d_\X(x,x^\star)$. The objective is one-sided H\"older around the optimum: there exist $L>0$ and $\chi\in(0,1]$ such that, for all
$x\in\X$,
\[
R(x^\star)-R(x)\le L\,\mathrm d_\X(x,x^\star)^\chi.
\]
\end{assumption}
This local H\"older condition is a standard regularity device in continuum- and X-armed bandits: it converts the reservoir coverage distance into an objective gap \citep{bubeck2011xarmed}. Related infinite-armed bandit analyses characterise a reservoir through the probability that a newly sampled arm is near-optimal \citep{wang2008infinitely}; this complementary viewpoint motivates treating the LLM proposal as a reservoir over structured designs.
Let $a_{t,A}^\star=\sup_{x\in A_t}\acq_t(x)$ be the maximum value of the acquisition function at $t$-th round. Then, for any tolerance $\zeta_t\ge0$, let $U_t(\zeta_t)=
\{x\in A_t:\acq_t(x)\ge a_{t,A}^\star-\zeta_t\}$ be the set of candidates in the current reservoir support $A_t$ whose acquisition value is within $\zeta_t$ of the best acquisition value. We denote the reservoir probability mass assigned to this near-best set by
\begin{equation}
\label{eq:coverage}
\kappa_t(\zeta_t)=p_{\theta,\alpha}\bigl(U_t(\zeta_t)\mid\C_t\bigr).
\end{equation}
The bound below depends on the size of $\kappa_t(\zeta_t)$: a small near-UCB mass means that the LLM reservoir makes high-acquisition designs hard to sample even after exponential tilting. As an observable reservoir-quality quantity, $\kappa_t$ records how much proposal mass is available near the best current acquisition value.

\begin{lemma}[Gibbs near-UCB sampling]
\label{lem:gibbs-near-ucb}
Conditioned on the history \(\C_t\), fix any \(\zeta_t\ge 0\) such that the near-UCB set has reservoir mass \(\kappa_t(\zeta_t)\). Then, for any \(\rho_t\in(0,1)\), we have
\begin{equation}
 \mathbb P\!\left(
a_{t,A}^\star-\acq_t(x_t)
\le
\zeta_t+\frac{1}{\eta}\log\frac{1}{\rho_t\kappa_t(\zeta_t)}
\,\middle|\, \C_t
\right)
\ge
1-\rho_t .
\end{equation}

\end{lemma}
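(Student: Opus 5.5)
The plan is to bound directly the $\pi_t$-probability of the bad set, using the Gibbs form \eqref{eq:theory-pi}. Write $\varepsilon_t := \zeta_t+\frac{1}{\eta}\log\frac{1}{\rho_t\kappa_t(\zeta_t)}$ and define the bad set
\[
B_t := \{x\in A_t : \acq_t(x) < a_{t,A}^\star-\varepsilon_t\}.
\]
The event in the statement is the complement of $\{x_t\in B_t\}$. Since $x_t\sim\pi_t$ conditionally on $\C_t$, it suffices to show $\pi_t(B_t)\le\rho_t$. If $\kappa_t(\zeta_t)=0$, then $\varepsilon_t=+\infty$ and the claim is trivial, so assume $\kappa_t(\zeta_t)>0$.

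The core step is a ratio comparison between the numerator mass of $B_t$ and the normaliser. First bound the numerator over the bad set:
\[
\sum_{x\in B_t} e^{\eta\acq_t(x)}p_{\theta,\alpha}(x\mid\C_t)\le e^{\eta(a_{t,A}^\star-\varepsilon_t)}\,p_{\theta,\alpha}(B_t\mid\C_t)\le e^{\eta(a_{t,A}^\star-\varepsilon_t)}.
\]
Then lower-bound the normaliser by restricting it to the near-UCB set $U_t(\zeta_t)$. On that set $\acq_t\ge a_{t,A}^\star-\zeta_t$, so
\[
Z_t\ge e^{\eta(a_{t,A}^\star-\zeta_t)}\kappa_t(\zeta_t).
\]
Dividing the two bounds gives
\[
\pi_t(B_t)\le \frac{e^{-\eta(\varepsilon_t-\zeta_t)}}{\kappa_t(\zeta_t)}=\frac{\rho_t\kappa_t(\zeta_t)}{\kappa_t(\zeta_t)}=\rho_t,
\]
which is exactly where the choice of $\varepsilon_t$ comes from.

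The only delicate points are measure-theoretic. I will replace the sums by integrals against $p_{\theta,\alpha}(\cdot\mid\C_t)$ to match the general measurable setting of Proposition~\ref{prop:gibbs}. I also need $a_{t,A}^\star$ to be finite, so that $Z_t<\infty$ and $\pi_t$ is well defined. This holds, for example, when $\acq_t$ is bounded on $A_t$, as it is under a bounded kernel. Finally, measurability of $U_t$ and $B_t$ follows from measurability of $\acq_t$. I expect no substantive obstacle beyond stating these regularity conditions; the main conceptual step is lower-bounding the normaliser through the near-UCB mass $\kappa_t(\zeta_t)$.
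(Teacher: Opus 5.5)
Your proposal is correct and matches the paper's proof: both lower-bound $Z_t$ by restricting it to $U_t(\zeta_t)$, upper-bound the bad-set numerator using $p_{\theta,\alpha}(B\mid\C_t)\le 1$, and take the ratio. The only difference is that the paper first proves the general tail bound $e^{-\eta z}/\kappa_t(\zeta_t)$ for arbitrary $z\ge 0$ and then substitutes $z=\frac{1}{\eta}\log\frac{1}{\rho_t\kappa_t(\zeta_t)}$, whereas you plug this value in directly (and you add regularity remarks that the paper's discrete-sum presentation leaves implicit).
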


Let
\[
\gamma_T=
\sup_{B\subset\X,\ |B|\le T}
\frac12\log\det(I+\lambda^{-1}K_B),
\qquad
C_\lambda=\frac{2}{\log(1+\lambda^{-1})},
\]
where $K_B$ is the kernel matrix on $B$.

\begin{theorem}[Average-regret bound of the large discovery model]
\label{thm:regret}
Suppose Assumptions~\ref{ass:ucb-calibration} and~\ref{ass:coverage} hold. Fix $\delta_{\rm gp},\delta_{\rm samp}\in(0,1)$ and choose
$\rho_t>0$ such that $\sum_{t=1}^T\rho_t\le\delta_{\rm samp}$. Then, for any $\zeta_t\ge0$ with
$\kappa_t(\zeta_t)>0$, with probability at least $1-\delta_{\rm gp}-\delta_{\rm samp}$,
\[
  \boxed{
  \frac{1}{T}\sum_{t=1}^T \regret_t
  \le
  \underbrace{\frac{L}{T}\sum_{t=1}^T(r_t^{\rm cov})^\chi}_{\text{discovery gap}}
  +
  \underbrace{2\sqrt{\frac{C_\lambda\beta_T\gamma_T}{T}}}_{\text{GP-UCB optimisation}}
  +
  \underbrace{
  \frac1T\sum_{t=1}^T
  \left[
  \zeta_t+
  \frac1\eta\log\frac1{\rho_t\kappa_t(\zeta_t)}
  \right]
  }_{\text{LDM sampling shortfall}} .
  }
\]
\end{theorem}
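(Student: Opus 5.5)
We bound each instantaneous regret using the decomposition $\regret_t=\regret_t^{\rm disc}+\regret_t^{\rm opt}$, then average over $t$. We work on the intersection of two events. The first is the calibration event of Assumption~\ref{ass:ucb-calibration}, which has probability at least $1-\delta_{\rm gp}$. The second is the event $\mathcal E_{\rm samp}$ on which, for every $t\le T$, the conclusion of Lemma~\ref{lem:gibbs-near-ucb} holds with parameter $\rho_t$. Each conditional failure probability is at most $\rho_t$, so the tower property and a union bound over $t$ give $\mathbb P(\mathcal E_{\rm samp}^c)\le\sum_t\rho_t\le\delta_{\rm samp}$. A further union bound yields the overall probability $1-\delta_{\rm gp}-\delta_{\rm samp}$.

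\emph{Discovery gap.} For any $x\in A_t$, Assumption~\ref{ass:coverage} gives $R(x^\star)-R_{t,A}^\star\le R(x^\star)-R(x)\le L\,\mathrm d_\X(x,x^\star)^\chi$. Taking the infimum over $x\in A_t$, and using that $u\mapsto u^\chi$ is increasing and continuous, we get $\regret_t^{\rm disc}\le L(r_t^{\rm cov})^\chi$. If the supremum defining $R^\star_{t,A}$ is not attained, we simply read $R^\star_{t,A}$ as a supremum throughout.

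\emph{Optimisation regret.} Let $\tilde x_t\in A_t$ (approximately) attain $R_{t,A}^\star$. On the calibration event, $R_{t,A}^\star=R(\tilde x_t)\le\acq_t(\tilde x_t)\le a_{t,A}^\star$. Write $s_t:=a_{t,A}^\star-\acq_t(x_t)$. Then $a_{t,A}^\star=\acq_t(x_t)+s_t$. Calibration at $x_t$ gives $\acq_t(x_t)-R(x_t)\le 2\sqrt{\beta_t}\sigma_t(x_t)$, hence
\[
\regret_t^{\rm opt}\le 2\sqrt{\beta_t}\,\sigma_t(x_t)+s_t .
\]
On $\mathcal E_{\rm samp}$, $s_t\le\zeta_t+\eta^{-1}\log\bigl(1/(\rho_t\kappa_t(\zeta_t))\bigr)$, and summing these terms produces the LDM sampling shortfall. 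For the variance terms we use monotonicity of $\beta_t$ and Cauchy--Schwarz: $\sum_t 2\sqrt{\beta_t}\sigma_t(x_t)\le 2\sqrt{\beta_T T\sum_t\sigma_t^2(x_t)}$. We then apply the standard Srinivas et al.\ information-gain lemma. With $\sigma_t^2\le1$ (bounded kernel diagonal, normalised), we have $\sigma_t^2(x_t)\le C_\lambda\cdot\tfrac12\log(1+\lambda^{-1}\sigma_t^2(x_t))$, and $\sum_t\tfrac12\log(1+\lambda^{-1}\sigma_t^2(x_t))$ equals the information gain of $\{x_1,\dots,x_T\}$, which is at most $\gamma_T$. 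This gives $\sum_t\sigma_t^2(x_t)\le C_\lambda\gamma_T$, and dividing by $T$ yields the middle term $2\sqrt{C_\lambda\beta_T\gamma_T/T}$.

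\emph{Main obstacle.} The delicate point is the information-gain step. The design points $x_t$ lie in a changing support $A_t$, and in the dynamic-support variant the GP may be refit on $\mathcal D_t|_{A_t}$. In that case $\sigma_t$ is not the posterior variance from all past points, and the telescoping log-determinant identity could fail. The plan is to carry out the analysis with the single GP over $\X$ conditioned on the full history $\mathcal D_t$, which is how Assumption~\ref{ass:ucb-calibration} is stated (uniformly over $x\in\X$). Under this convention, $\gamma_T$, defined as a supremum over all $B\subset\X$, covers every realised sequence regardless of which $A_t$ it came from. Sampling from $\pi_t$ instead of maximising the acquisition does not affect this step, because the log-determinant bound holds for arbitrary, even randomised and adapted, query sequences. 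A second, minor point is the union bound across rounds in Lemma~\ref{lem:gibbs-near-ucb}: the conditioning on $\C_t$ must be removed by taking expectations before the failure probabilities are summed.
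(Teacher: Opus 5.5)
Your proposal is correct and follows the paper's proof essentially step for step. It uses the same discovery/optimisation decomposition, the H\"older bound via the coverage radius, the union-bounded sampling event from Lemma~\ref{lem:gibbs-near-ucb}, UCB optimism together with two-sided calibration at $x_t$, and Cauchy--Schwarz with the information-gain bound $\sum_t\sigma_t^2(x_t)\le C_\lambda\gamma_T$. Your extra care (a normalised kernel diagonal, the tower property for the conditional failure probabilities, and a single GP over $\X$ on the full history so the log-determinant identity survives the dynamic support) spells out conventions the paper relies on implicitly when it invokes the standard bound.
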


\begin{corollary}[Simple regret]
\label{cor:simple-regret}
Under the conditions of Theorem~\ref{thm:regret}, the same right-hand side also upper-bounds the simple regret $\regret_T=\min_{1\le t\le T}\regret_t$, because $\min_t\regret_t\le T^{-1}\sum_{t=1}^T\regret_t$.
\end{corollary}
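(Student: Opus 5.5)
The statement to prove is Corollary~\ref{cor:simple-regret}, and it is essentially immediate from Theorem~\ref{thm:regret}. The plan has two steps: first establish the elementary inequality $\min_t \regret_t \le T^{-1}\sum_t \regret_t$, then intersect with the high-probability event on which Theorem~\ref{thm:regret} holds.

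\textbf{The deterministic inequality.} For any reals $\regret_1,\dots,\regret_T$, let $m=\min_{1\le t\le T}\regret_t$. Then $\regret_t\ge m$ for every $t$. Summing over $t$ gives $\sum_t\regret_t\ge Tm$, and dividing by $T>0$ gives
\[
m\le \frac1T\sum_{t=1}^T\regret_t .
\]
This holds pathwise, so it needs no probabilistic assumption.

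Note also that $\min_t \regret_t = R(x^\star)-\max_t R(x_t)$, because $R(x^\star)$ is a constant that does not depend on $t$. This is exactly the simple regret $\regret_T$ defined at the start of Section~\ref{sec:theory-main}. I would state this identification explicitly so the reader sees the two definitions of $\regret_T$ agree.

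\textbf{Combining with the theorem.} Let $\mathcal E$ be the event, of probability at least $1-\delta_{\rm gp}-\delta_{\rm samp}$, on which the boxed bound of Theorem~\ref{thm:regret} holds for the chosen $\zeta_t$ and $\rho_t$. On $\mathcal E$, chaining the pathwise inequality with the boxed bound gives
\[
\regret_T \le \frac1T\sum_{t=1}^T\regret_t \le \text{(discovery gap)} + \text{(GP-UCB optimisation)} + \text{(LDM sampling shortfall)} .
\]
So the simple regret obeys the same right-hand side, with the same probability and under the same conditions.

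There is no genuine obstacle here. The only point needing care is that the event is the one supplied by Theorem~\ref{thm:regret}, so no further union bound or extra failure probability is incurred.
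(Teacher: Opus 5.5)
Your proposal is correct and matches the paper's own justification: the pathwise inequality $\min_t \regret_t \le T^{-1}\sum_{t=1}^T \regret_t$, chained with the boxed bound on the same event from Theorem~\ref{thm:regret}, so no extra failure probability is incurred. Your explicit identification $\min_t \regret_t = R(x^\star)-\max_t R(x_t)$ is a small clarification that the paper leaves implicit.
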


The first term is the gap of not yet having generated candidates near the true optimum. The second is the usual GP-UCB information-gain term within the covered region. The third is the cost of sampling from the tilted LDM distribution rather than exactly maximising UCB over $A_t$; it decreases when the acquisition tilt $\eta$ is sharper or when the near-UCB reservoir mass $\kappa_t(\zeta_t)$ is larger.

The LLM affects the regret bound through the reservoir support \(A_t\) and its distribution \(p_{\theta,\alpha}(\cdot\mid\C_t)\). First, an informative LLM can propose candidates closer to the global optimum, thereby reducing the coverage radius \(r_t^{\rm cov}\) and hence the discovery-gap term. Second, it can assign substantial probability mass to candidates with high acquisition values, increasing \(\kappa_t(\zeta_t)\) and reducing the LDM sampling-shortfall term. Thus, the LLM provides a structured, history-dependent proposal distribution that can focus evaluations on semantically plausible and promising regions of the design space, rather than searching the entire space uniformly.

\section{Experiments}
\label{sec:eval}

A central premise of LDM is that the same general discovery engine can operate
across heterogeneous domains without reducing discovery to a fixed,
domain-specific search procedure. We therefore evaluate LDM on three case
studies spanning markedly different scientific objects, representations,
objectives, and evaluation processes: neural-network training programs,
antibody sequences, and small molecules. Together, these case studies test
whether LDM can combine a generative prior, an empirically calibrated value
model, and acquisition-guided search across both discrete and open-ended
hypothesis spaces.

All three case studies are evaluated through in-silico or digital-oracle benchmarks, which make it possible to run controlled sequential-search experiments. The LDM loop is agnostic to the evaluator and can use wet-lab measurements in place of these digital objectives; validating such wet-lab deployments is an important direction for future work.

Section~\ref{sec:cases} introduces the three discovery settings and their
experimental protocols: an autonomous research loop for improving
neural-network training programs \citep{karpathy2026autoresearch}; antibody
CDRH3 sequence design \citep{khan2022antbo}; and multi-objective
small-molecule discovery targeting KRAS G12D
\citep{kumarOverviewKRASG12D2026}. For each setting, we also present a
representative discovery trajectory to illustrate how LDM generates,
evaluates, and refines hypotheses through sequential feedback.
Section~\ref{sec:ablation-main} isolates the mechanisms responsible for this
behaviour, including the test-time search budget, the acquisition function,
and the distinction between acquisition-guided discovery and search without a
calibrated value model. Section~\ref{sec:performance} compares LDM with the
relevant domain-specific baselines. Finally,
Section~\ref{sec:finetune-experiments} investigates whether the discovery
experience accumulated through high-budget LDM test-time search can be
distilled into a reusable proposer, and evaluates its out-of-distribution
generalisation to previously unseen targets and tasks. Additional ablations
and detailed analyses are provided in the appendix.

\subsection{Overview of the Case Studies}
\label{sec:cases}

The three case studies are selected to expose complementary challenges in
general-purpose discovery and to represent different epistemic regimes
introduced in Section~\ref{sec:ldm}. They vary not only in the objects being
designed, but also in how the hypothesis space is represented, how candidates
are modified, and how empirical feedback is obtained.

The \texttt{autoresearch} setting studies discovery through
\emph{multi-turn program editing}. The agent maintains a persistent research
state, repeatedly modifies a neural-network training program, and evaluates
each modification through an actual training run. Because program edits can
introduce new abstractions, mechanisms, and combinations of ideas, the agent
can reshape the effective hypothesis space as discovery proceeds.

Antibody CDRH3 design represents a large discrete combinatorial problem in the
\emph{known-unknown} regime. The sequence space and its basic representation
are specified in advance, but the relationship between a sequence and its
binding affinity is unknown and costly to evaluate. This setting therefore
tests whether LDM can efficiently explore a well-defined but extremely large
design space under limited empirical feedback.

Small-molecule discovery represents the more open-ended
\emph{unknown-unknown} regime. The space of valid and synthetically
plausible
molecular structures cannot be reduced to a fixed candidate catalogue, while
desirable compounds must simultaneously satisfy multiple and potentially
competing objectives. This setting tests whether LDM can propose new
scaffolds, expand the reachable molecular space, and direct evaluations toward
promising regions that were not specified in advance.

Taken together, these settings progress from executable programs, through
biological sequences, to chemical structures. They allow us to examine
whether LDM functions as a common discovery engine across domains while
adapting its generation and empirical evaluation mechanisms to the structure
of each problem. For each case study, we first motivate the discovery setting,
then specify the experimental protocol, and finally examine a representative
trajectory of sequential generation, evaluation, and refinement.
\subsubsection{Autonomous research loop}
\label{sec:autoresearch}

We use a popular public \texttt{autoresearch} project \citep{karpathy2026autoresearch} as a testbed for acquisition-guided LLM search. In \texttt{autoresearch}, a coding agent repeatedly modifies a single file, \texttt{train.py}; each candidate program is evaluated by training a small language model for a fixed five-minute budget and measuring validation bits per byte (\texttt{val\_bpb}, lower is better). The original system provides only the LLM proposal-and-evaluation loop, to which we add the surrogate and acquisition-guided search. Under the LDM notation, a program is a design $x$ with the reward:
\[
R(x)=-\texttt{val\_bpb}(x),
\]
where the agent, conditioned on the current context, defines the proposal distribution $p_\theta(\cdot\mid\mathcal C_t)$, and the accumulated program--performance pairs form the dataset $\mathcal D_t$ used to fit the surrogate. This benchmark provides a natural setting for LDM: candidates are executable programs, each evaluation returns a quantitative non-differentiable objective, and the fixed training schedule enables controlled comparison across search methods. Crucially, it is not a fixed-dimensional hyperparameter problem: as the agent discovers new training-program mechanisms, the surrogate must acquire new coordinates on which to model them, exercising LDM's dynamic search-support mechanism (\S\ref{sec:dynamic-support}).

We instantiate LDM using indirect parameterisation. Rather than generating raw program text, the LLM identifies tunable degrees of freedom in \texttt{train.py}, such as learning rate, width, depth, and related hyperparameters, thereby defining a continuous search space. We fit a Gaussian process with an RBF kernel over this space, keeping its hyperparameters fixed for stability in the small-data regime. Expected Improvement (Eq.~\eqref{eq:ei}) is used as the acquisition function.

Each round, the LLM proposes a pool of candidate configurations conditioned on $\mathcal C_t$, and the surrogate selects the next configuration to evaluate by maximising EI, with a diversity term for batched rounds. When progress stalls, reflection allows the LLM to revise the parameterisation by adding or reshaping search dimensions. The primary comparison holds the coding model, environment, and five-minute-per-run budget fixed, contrasting the original Karpathy LLM-only loop against the full LDM.

\begin{figure}[ht]
    \centering
    \includegraphics[width=0.95\linewidth]{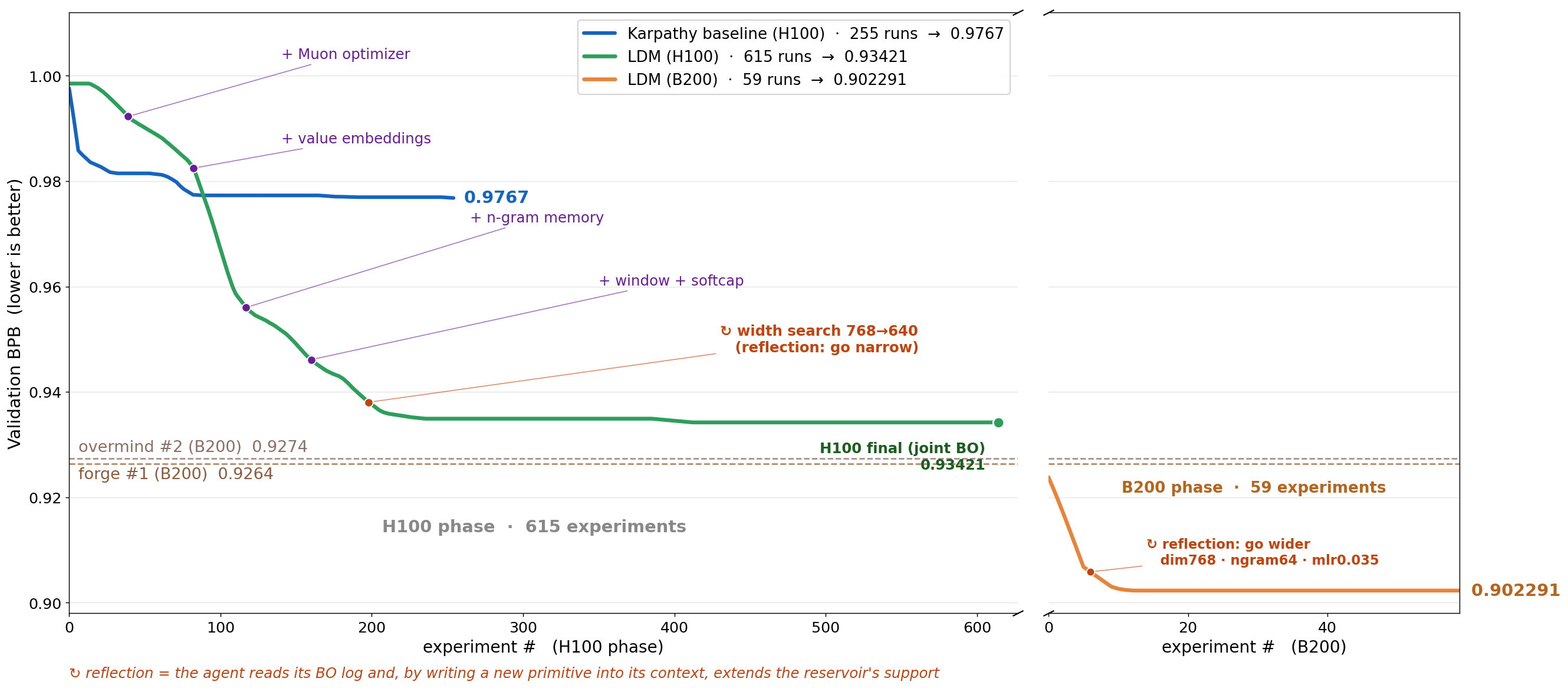}
    \caption{\textbf{The discover loop on \texttt{autoresearch}.} Validation \texttt{val\_bpb} (lower is better) versus search progress. The H100 panel compares the no-discover Karpathy baseline with the LDM; the B200 panel reports a matched-hardware leaderboard run of the LDM. B200 width scaled for legibility.}
    \label{fig:autoresearch}
\end{figure}

Figure~\ref{fig:autoresearch} shows an initial period of improvement followed by a plateau and a second improvement phase after the search space is revised: the LDM detects that local tuning of the current parameterisation has stalled, and the reflection mechanism expands the active search support to reach a new, more promising regime. This plateau-and-revision pattern is the discovery operation of \S\ref{sec:ldm} in action, and it recurs across the three domains.

\begin{table}[ht]
\centering
\footnotesize
\begin{tabular}{@{}llr@{}}
\toprule
Change & Why it helps under a fixed time budget & $\Delta$\texttt{val\_bpb} \\
\midrule
$n$-gram hash memory             & memorises local $n$-gram patterns at zero FLOP     & $-0.0275$ \\
QK-norm                          & steadier steps, more learning per step             & $-0.0137$ \\
Narrower model (dim 768$\to$640) & cheaper steps $\Rightarrow$ more steps in 300\,s   & $-0.0109$ \\
Windowed attention + softcap     & cheaper attention $\Rightarrow$ more tokens/sec    & $-0.0108$ \\
Squared-ReLU                     & richer per-token MLP representation                & $-0.0087$ \\
Value embeddings                 & extra per-token signal at near-zero cost           & $-0.0079$ \\
Muon optimiser                  & orthogonalised gradients, more learning per token  & $-0.0048$ \\
\bottomrule
\end{tabular}
\caption{\textbf{The \texttt{autoresearch} run as physics-grounded discovery.} Each row lists a change the LDM retained under a fixed 300\,s budget, the physical mechanism by which it helps, and the resulting change in \texttt{val\_bpb}.}
    \label{tab:autoresearch-discovery}
\end{table}

Table~\ref{tab:autoresearch-discovery} summarises the retained changes and their measured contributions. Under the fixed training-time budget, improvements must come from processing more tokens or learning more effectively from each token; each retained change is a physics-grounded discovery about how to make a language model learn faster inside a fixed wall-clock budget. Component-level results that isolate the contribution of each design choice are reported in Section~\ref{sec:ablation-main} and Appendix~\ref{app:ablation}.

\subsubsection{Antibody CDRH3 design}
\label{sec:cs-antibody-results}

We select antibody CDRH3 loop design as our second case study because it represents a canonical high-dimensional combinatorial optimisation problem with a well-defined design space. The CDRH3 loop is the primary determinant of antibody binding specificity, with sequences of length \(L \approx 11\) drawn from 20 amino acids, yielding a discrete space of size \(20^{11}\). This setting falls into the \emph{known unknown} regime of \S\ref{sec:ldm}: the syntax and boundaries of the design space are fully specified, but the sequence-to-affinity mapping is unknown, highly rugged, and expensive to evaluate. Traditional combinatorial Bayesian optimisation methods struggle with the intractable size of this discrete space, while pure LLM generation lacks calibrated value guidance. The task therefore tests whether LDM can combine structured proposal generation with acquisition-guided selection in a large discrete search space.

We use the \textsc{Absolut!} structure-based simulator as a black-box oracle that returns the lattice binding energy \(E_{\mathrm{bind}}\) \citep{khan2022antbo}. We maximise \(R(x) = -E_{\mathrm{bind}}(x)\) because lower binding energy implies stronger affinity, and each evaluation is a single expensive oracle call.

Experiments run sequentially, with one antibody candidate selected per iteration. We evaluate across five PDB targets, including 1ADQ\_A, 1FBI\_X, 1H0D\_C, 1NSN\_S, and 1OB1\_C, with a total budget of 200 evaluations per target, reporting mean performance and one standard deviation over multiple random seeds.

We implement four LDM variants formed by two orthogonal design choices: two LLM base generation strategies, paired with two acquisition weighting rules. Here \textbf{Policy} denotes the indirect parameterisation of \S\ref{sec:algo}: the LLM parameterises a sampling policy (with internal complexity), and the policy implicitly induces the base measure $p_{\theta,\alpha}(\cdot\mid\C_t)$. The \textbf{Direct} base measure generates sequences token-by-token with a small sample pool of \(m=5\). The \textbf{Policy} base measure lets the LLM output structured search parameters to spawn a much larger candidate pool. Following \S\ref{sec:inference-time-search}, inference-time search includes both deterministic and stochastic variants: \textbf{Max} picks the highest-acquisition candidate, while \textbf{Softmax} samples proportionally to exponentiated acquisition scores to approximate the full tilted distribution. Table~\ref{tab:ldm-variants-antibody} summarises all combinations.

\begin{table}[htbp]
    \centering
    \begin{tabular}{ccc}
        \hline
        LDM Variant & Base Measure \(p_\theta(x \mid \mathcal{C}_t)\) & Reduction Rule \\
        \hline
        Direct\_Max & Direct token-level sequence generation & Max \\
        Direct\_Softmax & Direct token-level sequence generation & Softmax \\
        Policy\_Max & Structured search policy from LLM output & Max \\
        Policy\_Softmax & Structured search policy from LLM output & Softmax \\
        \hline
    \end{tabular}
    \caption{Implemented LDM algorithm variants for computational antibody design.}
    \label{tab:ldm-variants-antibody}
\end{table}

\begin{figure}[!ht]
\centering
\includegraphics[width=0.95\linewidth]{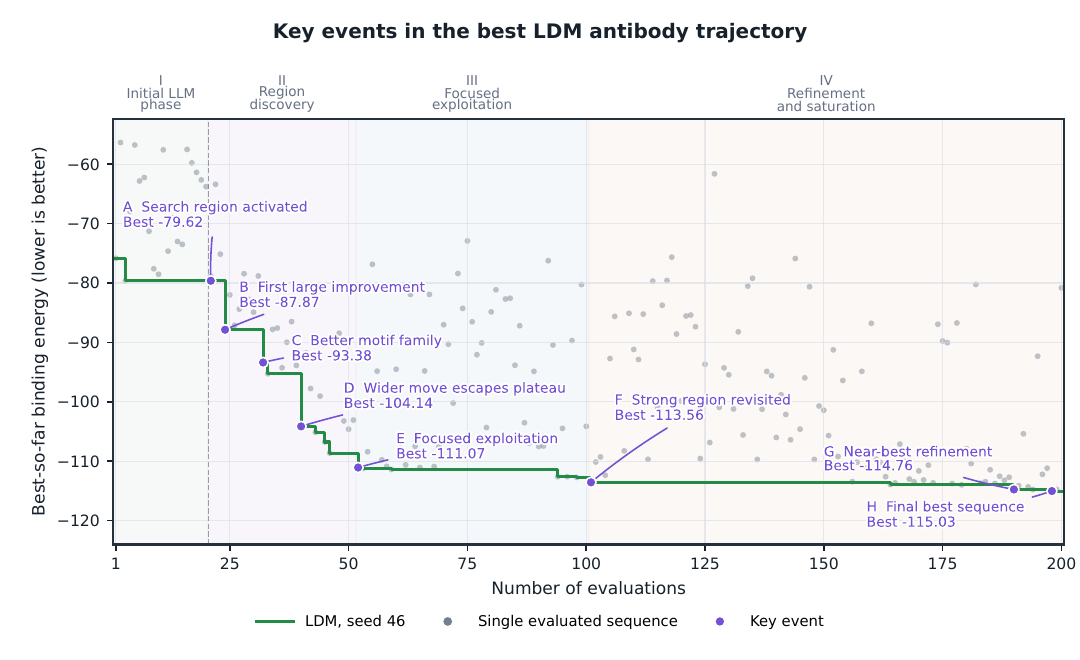}
\caption{Optimisation trajectory of the top-performing LDM run on antigen 1FBI\_X. The horizontal axis counts evaluation rounds; the vertical axis tracks the best binding energy found to date (lower = better). Step curve records global improvement milestones.}
\label{fig:antibody_trajectory}
\end{figure}

Figure~\ref{fig:antibody_trajectory} visualises a representative LDM optimisation trajectory on the 1FBI\_X antigen, illustrating the core exploit--explore--discover cycle described in \S\ref{sec:algo}. Initial LLM sampling quickly hits a local performance ceiling; the LDM detects stagnation and triggers discovery steps to expand the sequence families under consideration. Each numbered marker indicates a search-space update followed by further improvement and local refinement.

\subsubsection{Small-molecule drug discovery}
\label{sec:cs-molecule-results}

We select multi-objective small-molecule lead discovery as the third case study to test LDM's handling of the \emph{unknown unknown} regime of \S\ref{sec:ldm}. Unlike the antibody setting, the molecular design space is not given as a fixed finite set of candidates. Valid SMILES span a vast and open-ended chemical space, making exhaustive enumeration infeasible. Standard multi-objective Bayesian optimisation (MOBO) with string kernels lacks strong structural priors and saturates rapidly. Auxiliary expansion tools such as ReaSyn only generate analogues around fixed seed molecules, limiting exploration to narrow local chemical neighbourhoods. This setting tests whether LDM can discover new molecular scaffolds beyond the initial seed neighbourhoods.

Our multi-objective task optimises two conflicting rewards for the KRAS G12D target \citep{kumarOverviewKRASG12D2026}: the negated AutoDock Vina \citep{trott2010AutoDockVina} docking score (\(R_{\mathrm{Vina}}\)) and neural-network binding activity (\(R_{\mathrm{act}}\)), both maximised. We quantify performance via the hypervolume of the Pareto front, with a fixed reference point across all experiments and a total evaluation budget of 80 iterations.

Independent Gaussian Process surrogates are fitted for each reward dimension, using a subsequence string kernel to measure similarity between SMILES strings. Expected Hypervolume Improvement (EHVI) \citep{emmerich2011ehvi} is used as the multi-objective acquisition function. We evaluate two LDM variants that share the same tilted sampling pipeline but differ in candidate pool construction: (1) \textbf{Direct-Softmax}, where the LLM generates 128 SMILES candidates for acquisition-weighted resampling; and (2) \textbf{ReaSyn-Softmax}, where the LLM first proposes a small set of seed SMILES and ReaSyn generates local analogues to form the candidate pool before softmax selection.

\begin{figure}[!ht]
\centering
\includegraphics[width=1.0\linewidth]{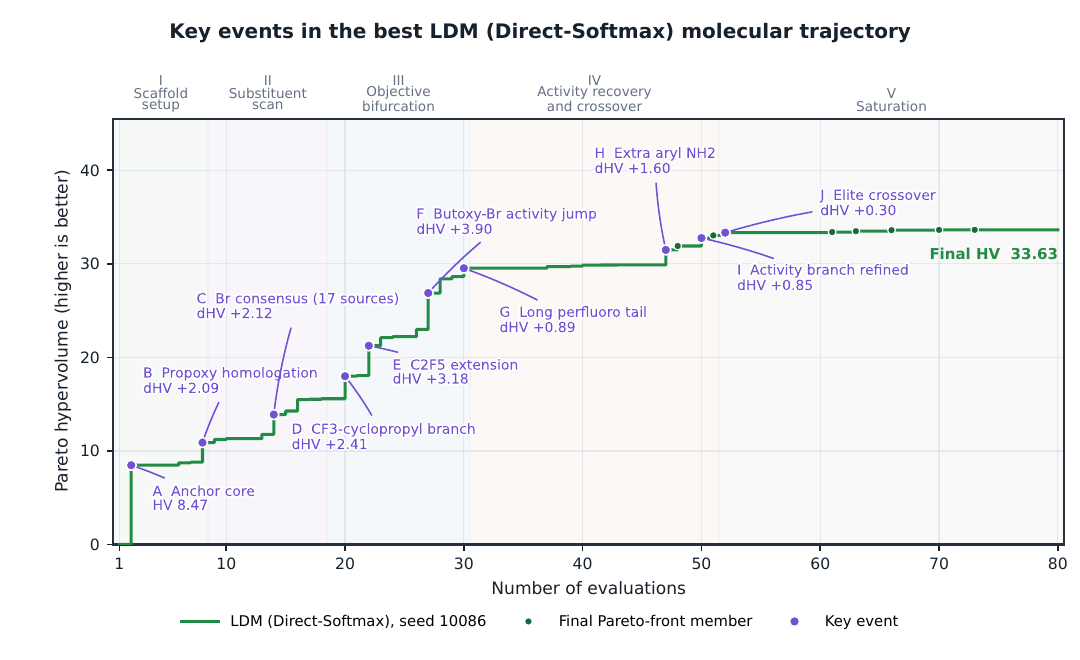}
\caption{Key milestone events along a Direct-Softmax molecular optimisation trajectory. X-axis counts evaluation steps; Y-axis plots cumulative Pareto hypervolume (higher = better). Step curve tracks global front improvements, with labelled markers denoting structural discovery events that open new families of active molecules. Shaded bands partition the optimisation into five sequential search phases. In each text box, \texttt{HV} represents Hyper-Volume, \texttt{Vina} represents the score of AutoDock Vina, and \texttt{NN} represents the neural network as the oracle activity prediction.}
\label{fig:direct_softmax_key_events}
\end{figure}

\begin{figure}[!ht]
\centering
\includegraphics[width=1.0\linewidth]{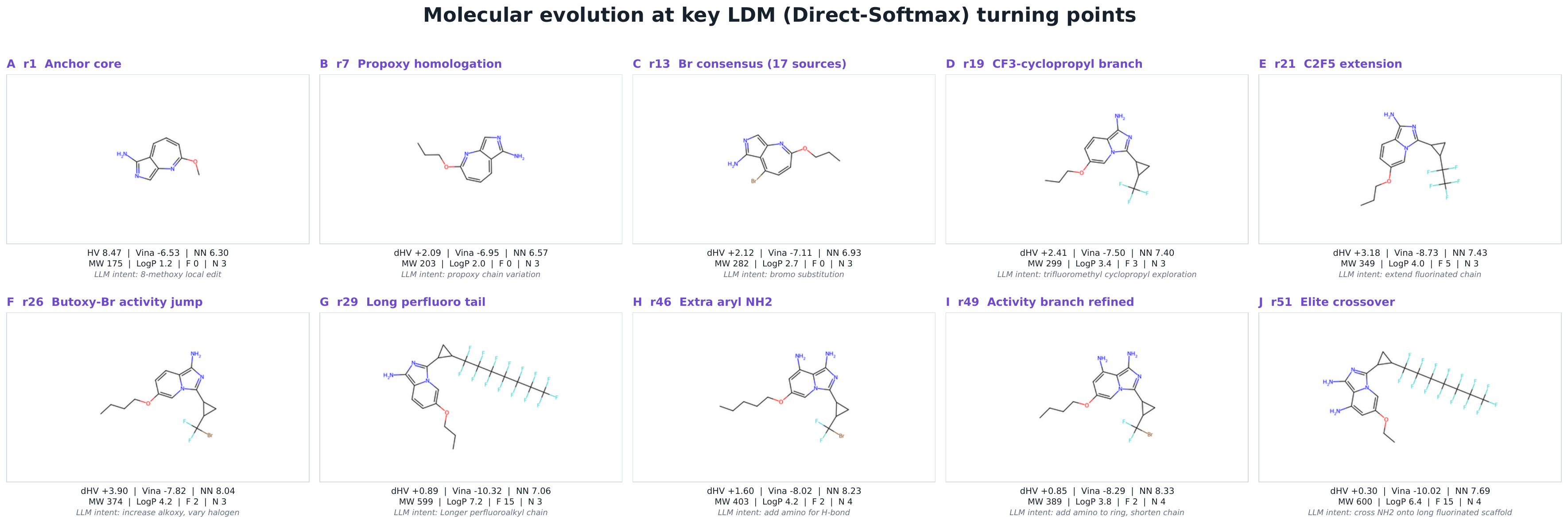}
\caption{Two-dimensional molecular structures corresponding to the ten core discovery milestones labelled in Figure~\ref{fig:direct_softmax_key_events}. Each structure is generated from canonical trajectory SMILES strings, arranged in chronological order of discovery to visualise the evolution of core chemical scaffolds and functional group substitutions.}
\label{fig:direct_softmax_molecular_evolution}
\end{figure}

Figures~\ref{fig:direct_softmax_key_events} and~\ref{fig:direct_softmax_molecular_evolution} together illustrate a complete LDM discovery workflow for small molecules. The hypervolume curve partitions optimisation into distinct stages: initial scaffold identification, substituent screening, objective bifurcation into docking- and activity-focused chemotypes, cross-family feature recombination, and final fine-tuning. The structural plot tracks how the model iteratively uncovers unrelated molecular backbones, a capability absent from standard BO or seed-local expansion tools.

\subsection{Ablation Studies}
\label{sec:ablation-main}

We next study how LDM depends on the test-time search budget and on its sampling and acquisition controls. Additional ablations, including the pure-LLM \texttt{autoresearch} loop, antibody hyperparameter sweeps, and single-step budget sweeps, are reported in Appendix~\ref{app:ablation}.

\subsubsection{Test-time search scaling and acquisition functions}

We ask three questions. First, does LDM exhibit the test-time scaling behaviour that motivates inference-time search in language-model reasoning \citep{snell2025scaling}, when the value is an expensive scientific objective approximated by a surrogate rather than a cheap verifier? Second, does \emph{discovery}, i.e., expanding the search frontier into previously unknown-unknown regions, contribute to the final performance? Third, does an acquisition function that explicitly balances exploitation and exploration (EI or UCB) outperform using only the posterior mean, which is purely exploitative?

We answer these questions in \texttt{autoresearch} as a representative setting. We keep the outer five-minute training budget fixed and vary only the cheap inner-loop budget of test-time search, which has two knobs: $N$, the number of candidate \texttt{train.py} programs the LLM proposes at each outer iteration, and $H$, the number of hypothesis/refinement branches each candidate is expanded into before the surrogate scores it. Together they control the number of acquisition-scored nodes per round, which we label \textsc{N$m$H$n$}: \textsc{N4H4} scores $4 \times 4 = 16$ nodes per round and \textsc{N8H8} scores $8 \times 8 = 64$. Because \texttt{autoresearch} is not a fixed-dimensional hyperparameter problem, LDM instantiates \emph{discovery} as a dynamic feature set: an extendable surrogate over a growing set of program features, so that newly discovered mechanisms enter the model as new coordinates (cf.\ the discovery operation of \S\ref{sec:ldm}). As an ablation, we remove the discovery mechanism, keeping the feature set fixed from the first iteration onward. We also compare acquisition functions---EI, UCB, and the posterior mean alone.

\begin{figure}[htbp]
    \centering
    \includegraphics[width=\linewidth]{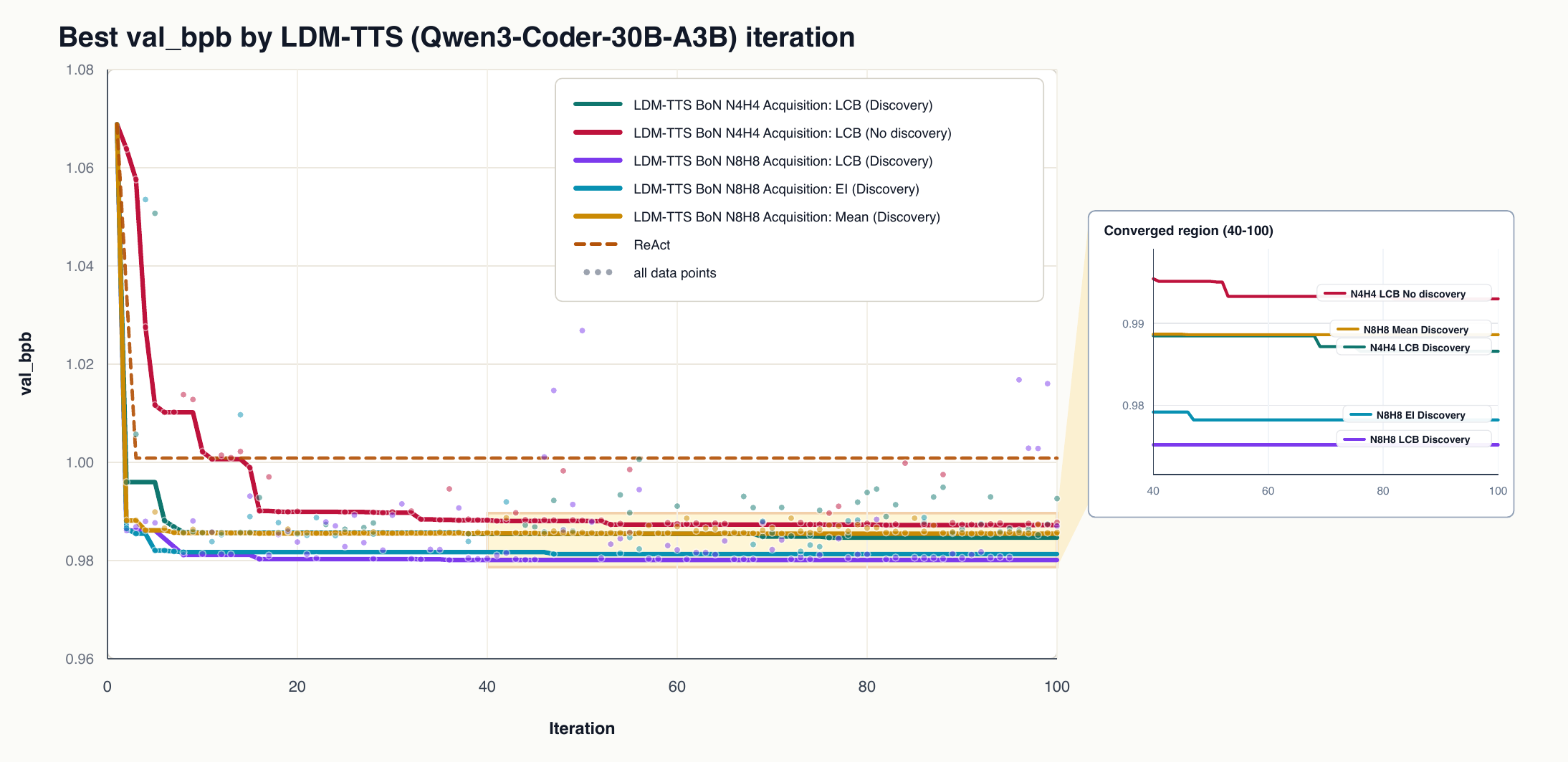}
    \caption{\textbf{Test-time search ablation on \texttt{autoresearch}.} Best validation \texttt{val\_bpb} found up to iteration $t$ (lower is better). Solid curves are LDM-TTS variants; the dashed curve is a ReAct-style no-acquisition reference \citep{yao2022react}; grey points are all evaluated candidate programs. Each LDM-TTS variant is named \textsc{N$m$H$n$} for its inner budget, where $N$ is the number of candidate programs the LLM proposes, and $H$ is the number of hypothesis/refinement branches each candidate is expanded into. }
    \label{fig:abl-autoresearch}
\end{figure}

Figure~\ref{fig:abl-autoresearch} supports three conclusions. First, \emph{increasing the test-time search budget helps}. The \textsc{N8H8} curve reaches the best plateau, while the lower-budget \textsc{N4H4} curves converge at a higher value; since the outer training budget is unchanged, the gain comes from spending more cheap computation before each expensive evaluation. Second, \emph{the discovery mechanism helps}. Without it, the search converges at a higher plateau, confirming that expanding the search frontier---not merely searching harder within a fixed support---is what sustains improvement after local plateaus. Third, \emph{the acquisition function matters}. Posterior-mean search is competitive early, when the most promising direction is already known, but it plateaus above both EI and UCB. EI improves on mean-only selection by rewarding improvement over the incumbent, while UCB performs best because it deliberately allocates part of the test-time budget to under-modelled program families. Overall, these uncertainty-aware acquisitions consistently outperform the exploitation-only posterior mean. These results validate the central claim of \S\ref{sec:method}: the LLM is the semantic search engine, but the calibrated, uncertainty-aware acquisition is what makes additional test-time compute useful.

Discovery is distinct from both the search budget and the acquisition function. The ReAct-style no-acquisition reference (dashed, Figure~\ref{fig:abl-autoresearch}) already shows that removing the acquisition value degrades search, and removing the discovery mechanism confines the search to a fixed support; neither reaches LDM's frontier. We push this further in Appendix~\ref{app:abl-pure-llm}: the pure LLM research loop---the $\eta=0$ limit of Eq.~\eqref{eq:core}, with no surrogate, no acquisition, and no uncertainty signal---plateaus near \texttt{val\_bpb}$\approx0.956$ after $875$ experiments, well above the LDM result of $0.93421$ (Figure~\ref{fig:autoresearch}). The bottleneck is not proposal expressivity, for the LLM can write useful training code and reflect on its own transcript; it is value. Without $\mu_t$, $\sigma_t$, and $\acq_t$, the run log remains a narrative memory rather than a searchable value landscape.

We run the same budget and acquisition sweeps for the molecular and antibody design tasks of \S\ref{sec:cs-molecule-results} and \S\ref{sec:cs-antibody-results}, with full results in Appendix~\ref{app:abl-molecule} and \ref{app:abl-antibody}. For molecular design, larger search budgets yield more effective discovery. The pattern is weaker for antibody design, where the scaling effect is small even though the optimum remains target-dependent across antigens (Appendix~\ref{app:abl-antibody}); a likely explanation is that the open-source LLM provides only weak sequence-level priors in this domain. Across acquisition functions, well-chosen acquisitions give consistent gains in both domains once the search budget is sufficiently large. Notably, the mean acquisition (a $0.5/0.5$ scalarisation of the Vina and activity objectives) outperforms EHVI under low search budgets for molecular design: EHVI scores candidates against the current Pareto front, so it needs a large screening pool to be informative, whereas scalarised acquisitions do not and remain effective when the pool is small.

\subsubsection{Hyperparameter sensitivity}

We next study the sensitivity of LDM to its sampling and acquisition temperatures on the KRAS G12D molecular design task of \S\ref{sec:cs-molecule-results}. The LLM sampling temperature $T_{\mathrm{LLM}}$ controls the diversity of LLM-generated SMILES candidates, while the acquisition temperature $\eta$ controls the strength of acquisition-guided selection. As $\eta\to0$ the policy collapses to the raw LLM reservoir and as $\eta\to\infty$ it reduces to acquisition-function argmax ($\eta=\infty$ in the experiments below).

\begin{figure*}[htbp]
    \centering
    \begin{subfigure}[t]{0.48\textwidth}
        \centering
        \includegraphics[width=\linewidth]{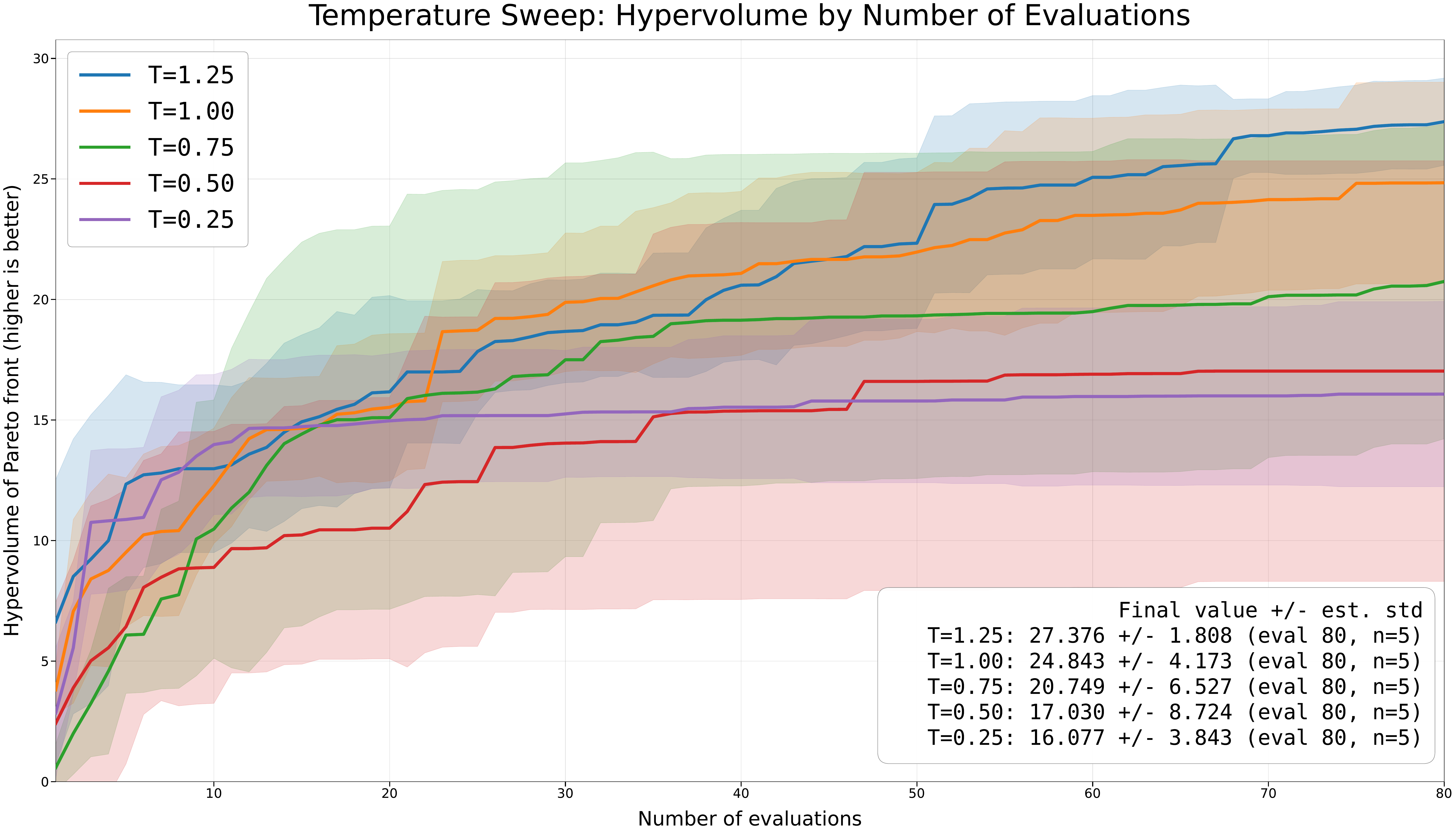}
        \caption{Effect of LLM sampling temperature $T_{\mathrm{LLM}}$ with fixed acquisition temperature $\eta=1$.}
        \label{fig:molecule-temperature-ablation}
    \end{subfigure}
    \hfill
    \begin{subfigure}[t]{0.48\textwidth}
        \centering
        \includegraphics[width=\linewidth]{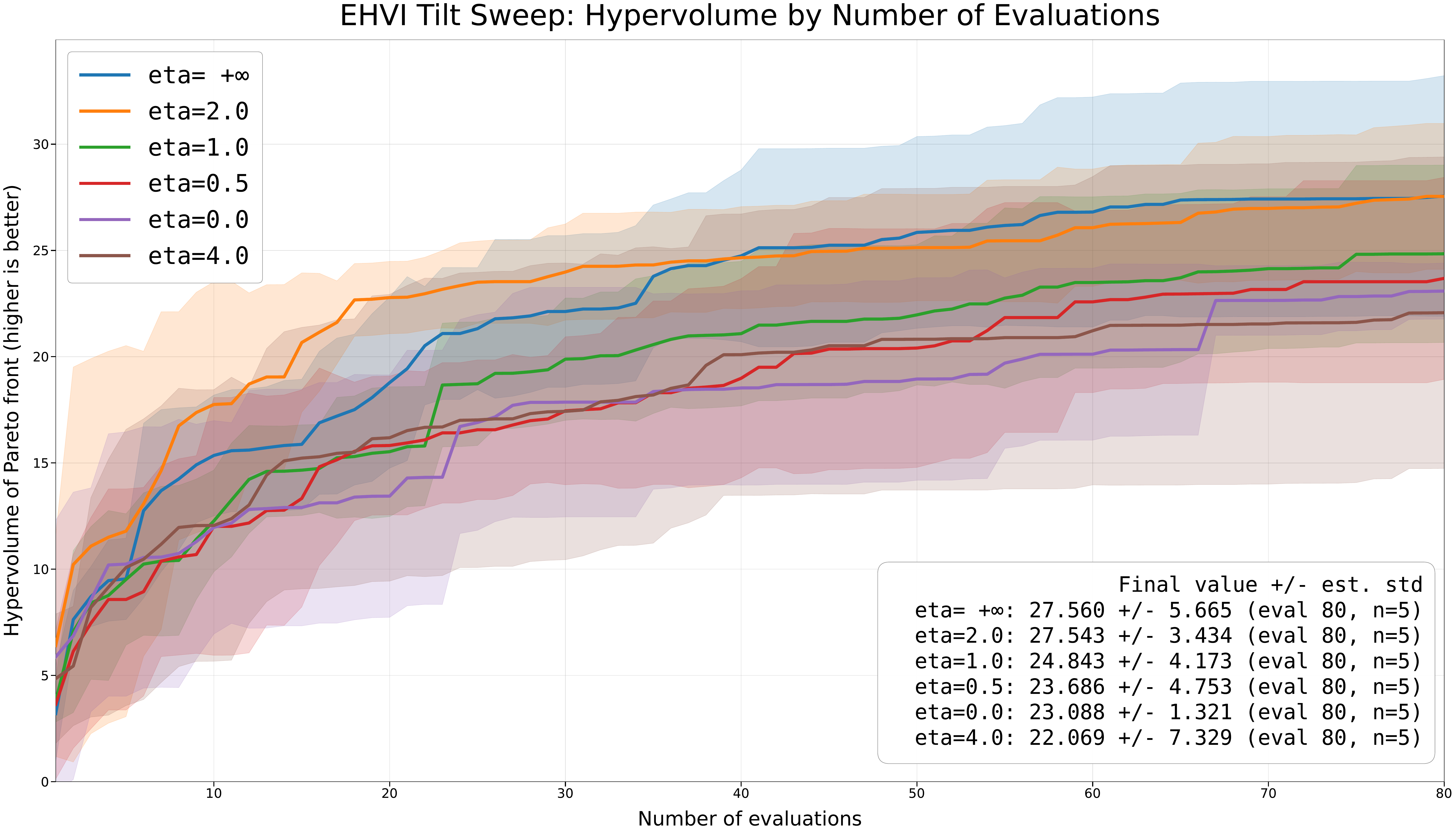}
        \caption{Effect of acquisition temperature $\eta$ with fixed LLM sampling temperature $T_{\mathrm{LLM}}=1$.}
        \label{fig:molecule-eta-ablation}
    \end{subfigure}

    \caption{\textbf{Hyperparameter ablations of LDM-TTS on molecular design.} Each curve reports the hypervolume of the Pareto front over the evaluation trajectory. Higher values indicate better multi-objective optimisation performance.}
    \label{fig:molecule-temperature-eta-ablation}
\end{figure*}

For the LLM sampling temperature ablation, we fix $\eta=1$ and vary the LLM-decoding temperature $T_{\mathrm{LLM}}$ in $\{0.25, 0.5, 0.75, 1, 1.25\}$. As shown in Figure~\ref{fig:molecule-temperature-ablation}, increasing $T_{\mathrm{LLM}}$ generally improves the final optimisation performance: the setting $T_{\mathrm{LLM}}=1.25$ achieves the highest final hypervolume, while lower temperatures degrade performance. For the acquisition temperature ablation, we fix $T_{\mathrm{LLM}}=1$ and vary $\eta\in\{0,0.5,1,\infty\}$. Figure~\ref{fig:molecule-eta-ablation} shows that increasing $\eta$ generally improves the final hypervolume. Overall, on molecular design, both sufficient LLM exploration and effective acquisition guidance are necessary for LDM, with the acquisition tilt $\eta$ acting as the dominant driver of final Pareto-front expansion.

Antibody CDRH3 design (\S\ref{sec:cs-antibody-results}) exhibits a different sensitivity pattern. The open-source LLM provides relatively weak sequence-level priors for this domain, so directly generated candidates are often uninformative in the large combinatorial search space. Policy-based LDM mitigates this limitation by using the LLM to parameterise a search region and then generating a larger candidate pool within that region. As a result, performance is less sensitive to additional test-time reweighting through the sampling and acquisition temperatures. This is consistent with the results in \S\ref{sec:cs-antibody-results}, where policy-mode LDM outperforms direct generation primarily through structured search-space parameterisation rather than strong sequence-level priors or aggressive acquisition weighting. Full temperature ablations for antibody design are provided in Appendix~\ref{app:abl-hyper-ldm}.

\subsection{Performance Comparison with Baselines}
\label{sec:performance}

We now compare LDM against the relevant baselines in each domain, focusing on the final optimisation curves.

\subsubsection{AutoResearch}

On \texttt{autoresearch}, the example trajectory in Figure~\ref{fig:autoresearch} (Section~\ref{sec:autoresearch}) already contrasts the LDM against the original Karpathy LLM-only loop under an identical H100 five-minute-per-run schedule: the no-discover baseline plateaus at \texttt{val\_bpb}$\approx0.9767$ after roughly $255$ runs, while LDM continues to $0.93421$ . Relative to the shared $\texttt{val\_bpb}=1.0069$ starting point, LDM reduces validation BPB by $0.0727$, a $2.4\times$ larger reduction than the LLM-only baseline's $0.0301$. Separately, a matched-hardware leaderboard run on B200 reaches $\texttt{val\_bpb}=0.902291$ in $59$ runs, compared with \textsc{forge} ($0.9264$) and \textsc{overmind} ($0.9274$) on the \texttt{autoresearch@home} leaderboard.\footnote{\url{https://www.ensue-network.ai/lab/autoresearch}} These improvements must come from processing more tokens or learning more effectively from each token under the fixed training-time budget; Table~\ref{tab:autoresearch-discovery} attributes them to the retained program changes.

\subsubsection{Antibody CDRH3 design}

The baseline methods considered in this study can be grouped into four categories: \textbf{Pure LLM} approaches, which perform direct autoregressive sequence generation; combinatorial Bayesian optimisation (BO) frameworks (\textbf{TURBO}, \textbf{COMBO}) that have been adapted to operate over discrete sequence spaces; the domain-specialised optimiser \textbf{AntBO}; and a naive random search strategy. Our goal is to position LDM as a competitive, general-purpose optimisation methodology, rather than to claim superiority over the domain-specialised AntBO approach.

\begin{figure}[ht!]
\centering
\includegraphics[width=1.0\linewidth]{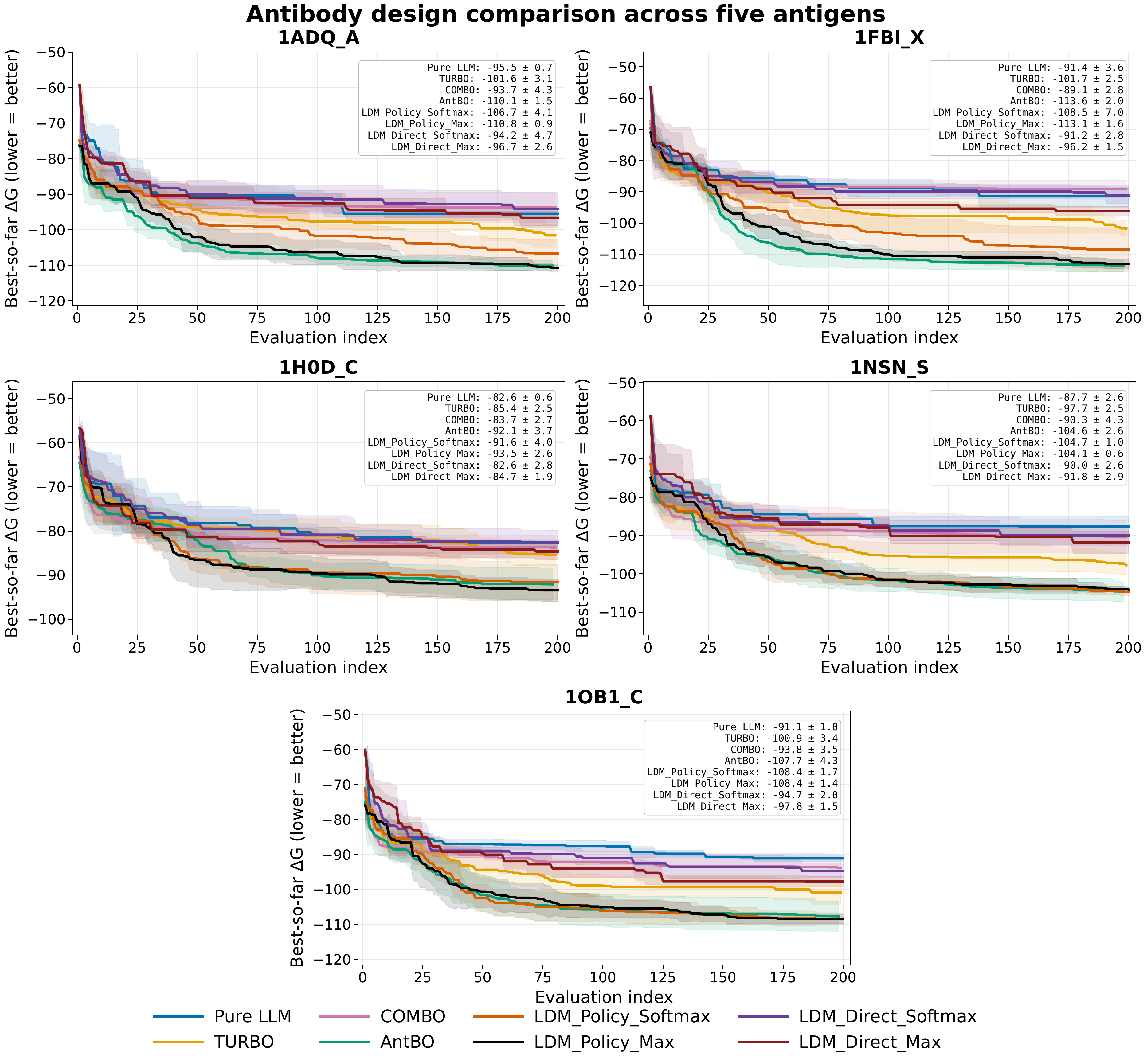}
\caption{Binding affinity performance across five antibody antigens. Axes: horizontal = number of sequence evaluations; vertical = best-so-far binding energy score (lower values indicate stronger binding). Curves correspond to all tested baselines and LDM variants.}
\label{fig:cs-antibody-results}
\end{figure}

Figure~\ref{fig:cs-antibody-results} compares all methods across the five protein targets. Policy-based LDM variants consistently outperform direct generation variants, achieving affinity levels comparable to AntBO, while Direct variants perform only marginally better than standalone LLMs.

Protein fitness landscapes are sparse and rugged, restricting the utility of small batches of directly generated sequences; even post-hoc acquisition reweighting cannot recover promising candidates the LLM fails to propose. In contrast, policy-mode LDM uses the LLM to define where to search rather than to emit final sequences, while the surrogate selects which candidates to evaluate; this expands the candidate pool without additional oracle calls.

\subsubsection{Small-molecule drug discovery}

\begin{figure}[!ht]
\centering
\includegraphics[width=0.6\linewidth]{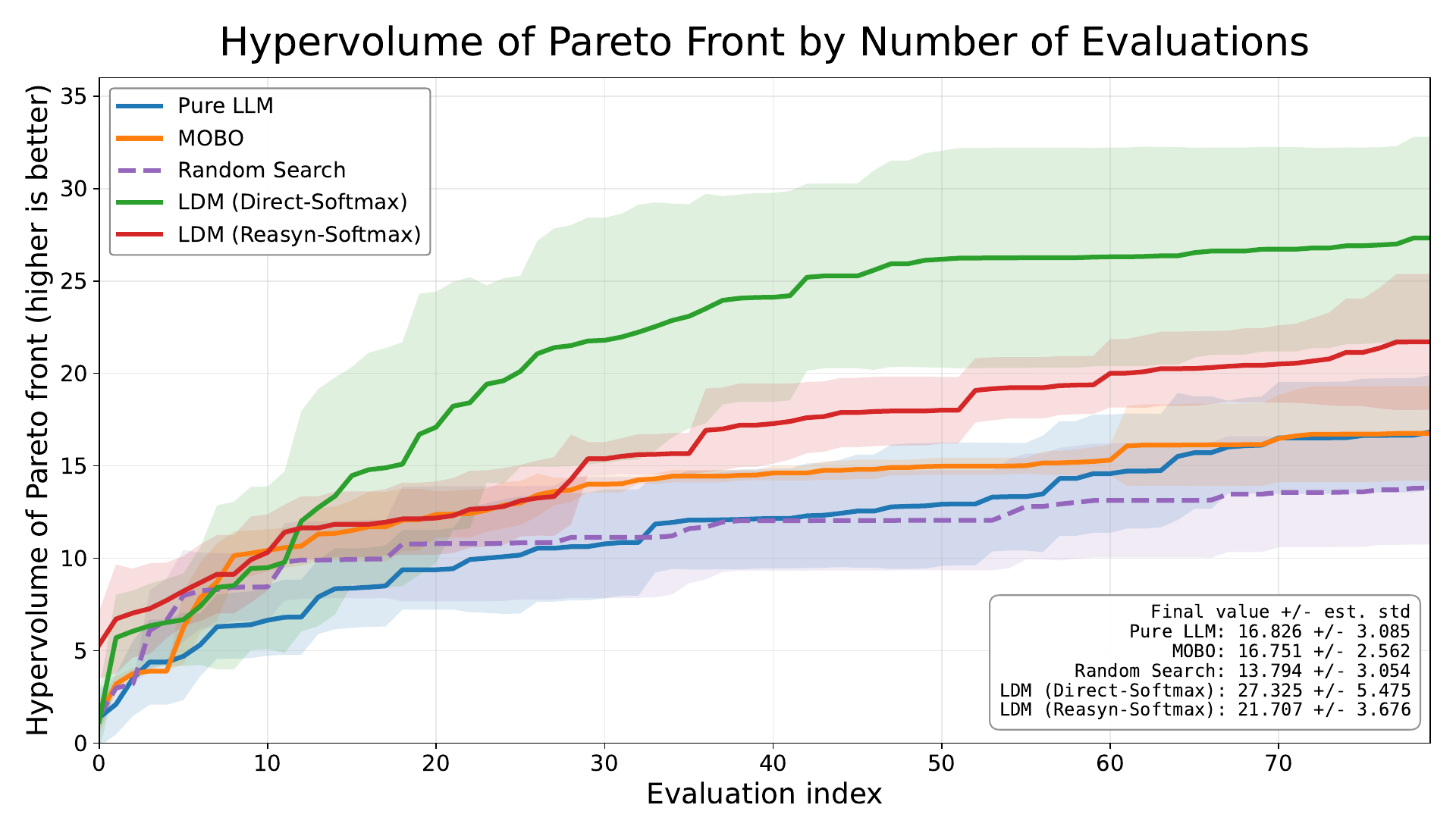}
\caption{Pareto hypervolume growth for molecular design. Horizontal axis = evaluation count; vertical axis = dominated hypervolume of the Pareto front (higher values denote better multi-objective trade-offs). Separate curves track performance of random search, standard MOBO, and the two LDM variants.}
\label{fig:cs-molecule-results}
\end{figure}

Baselines include random search and vanilla multi-objective Bayesian optimisation (MOBO), which also uses independent GPs and an EHVI acquisition function. Since the SMILES space is unbounded, for both baselines, we maintain a dynamic candidate pool. Specifically, starting from a set of seeding SMILES, we iteratively employ ReaSyn on historical SMILES (uniformly chosen in random search, and the historical best in MOBO), and add the generated analogues to the current pool.

Figure~\ref{fig:cs-molecule-results} shows that both LDM variants outperform all baselines by a substantial margin, with Direct-Softmax delivering the strongest hypervolume gains. Conventional MOBO plateaus after roughly 40 evaluations, trapped within limited chemical regions, while direct LLM generation sustains steady front expansion across the full budget.

The performance gap between the two LDM variants arises from a breadth--synthesizability trade-off. General-purpose LLMs carry broad pre-training knowledge of diverse SMILES structures; large-batch direct generation unlocks wider scaffold exploration. ReaSyn's local analogue generation improves synthetic tractability but restricts structural diversity, slowing Pareto expansion for early-stage lead discovery where novel scaffolds are prioritised.


\subsection{Learning Discovery Experience with Fine-Tuning}
\label{sec:finetune-experiments}

We evaluate discovery fine-tuning across three structured design domains: multi-turn program search in AutoResearch, combinatorial antibody CDRH3 design, and multi-objective molecular discovery. In every experiment, the fine-tuned proposer is placed inside the same acquisition-guided LDM loop as the base proposer. Fine-tuning therefore changes the proposal reservoir, whereas the online surrogate continues to provide value and uncertainty estimates from the current experimental history.

We compare action-only discovery fine-tuning with reasoning-augmented fine-tuning. The latter trains the proposer to generate a surrogate-grounded research-progress rationale before emitting the candidate action.
\begin{figure*}[!ht]
    \centering
    \includegraphics[width=0.82\textwidth]{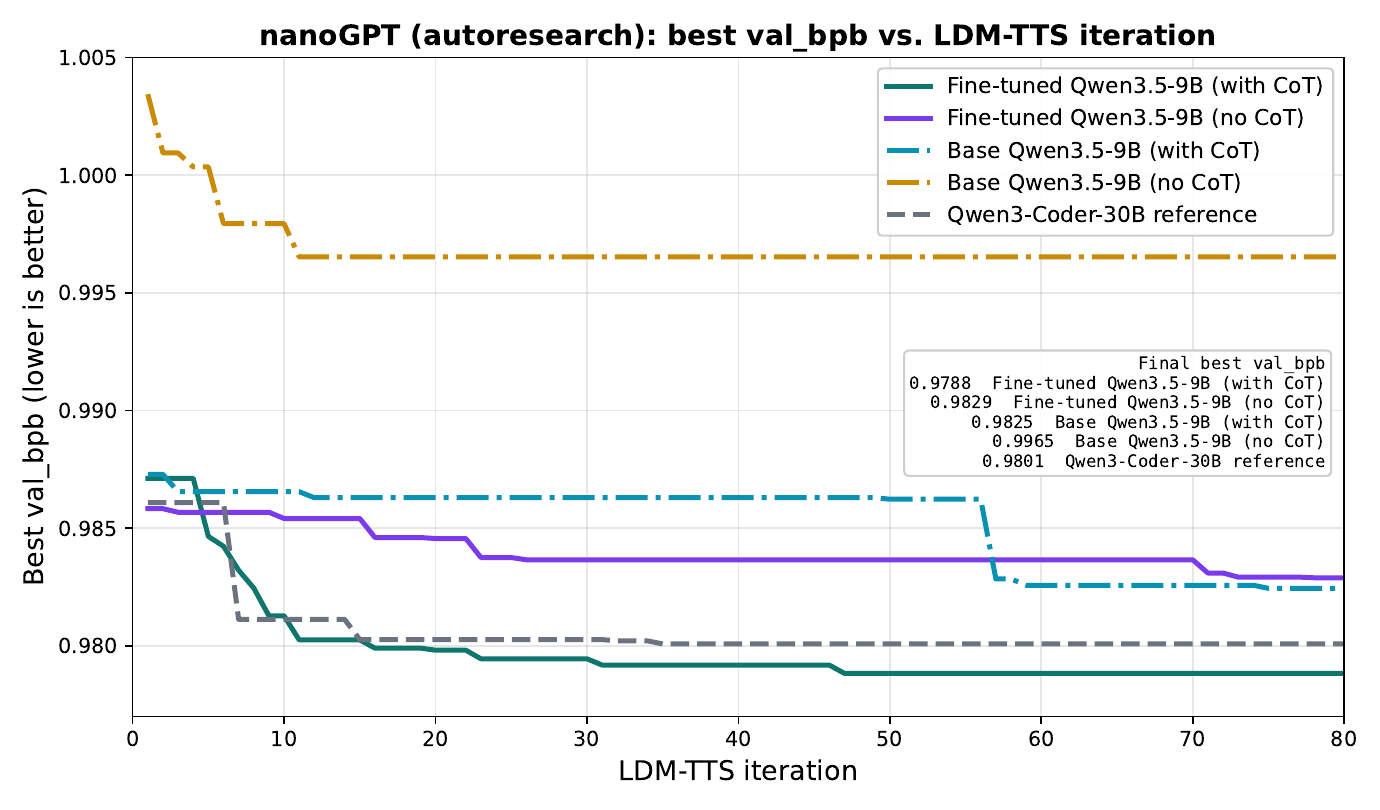}
    \caption{\textbf{Reasoning-augmentation ablation on AutoResearch.} Best validation bits-per-byte (lower is better) versus LDM-TTS iteration inside an identical acquisition-guided loop, comparing fine-tuned and base proposers with and without reasoning augmentation and the Qwen3-Coder-30B reference.}
    \label{fig:finetune-autoresearch}
\end{figure*}

\subsubsection{AutoResearch}

We first study fine-tuning in AutoResearch, where an agent repeatedly edits a persistent training program and evaluates each proposal through a fixed-budget nanoGPT training run. This setting requires the model to interpret an evolving experiment ledger, recognise performance plateaus, and propose changes that test new program mechanisms.

Figure~\ref{fig:finetune-autoresearch} compares fine-tuned and base Qwen3.5-9B proposers over 80 LDM-TTS iterations. The reasoning-augmented fine-tuned model achieves the best validation performance, reaching $\mathrm{val\_bpb}=0.9788$, lower than the Qwen3-Coder-30B reference at $0.9801$. Fine-tuning without reasoning reaches $0.9829$, the base model with reasoning reaches $0.9825$, and the base model without reasoning stalls at $0.9965$. These results indicate that fine-tuning enhances the quality and diversity of the program-search reservoir, and that the explicit trace of research progress further facilitates the model's ability to escape local program families and explore a broader region of the program space.

\subsubsection{Antibody CDRH3 Design}

We next evaluate antibody CDRH3 design, where the proposer searches a sparse and rugged sequence space under a limited binding-energy evaluation budget. We compare fine-tuned and base Qwen3.5-9B models with and without reasoning augmentation across five antigens. All variants use the same acquisition-guided LDM loop with expected-improvement acquisition and the same parallel evaluation budget.

Figure~\ref{fig:finetune-antibody} shows that fine-tuning improves over the corresponding base proposer across the five antigens. Both fine-tuned variants achieve lower binding energies than the base models, while the reasoning-augmented fine-tuned model obtains the lowest average binding energy among the four configurations. This result indicates that the distilled policy improves sequence-family proposals even in a large discrete design space.

\begin{figure*}[!ht]
    \centering
    \includegraphics[width=0.94\textwidth]{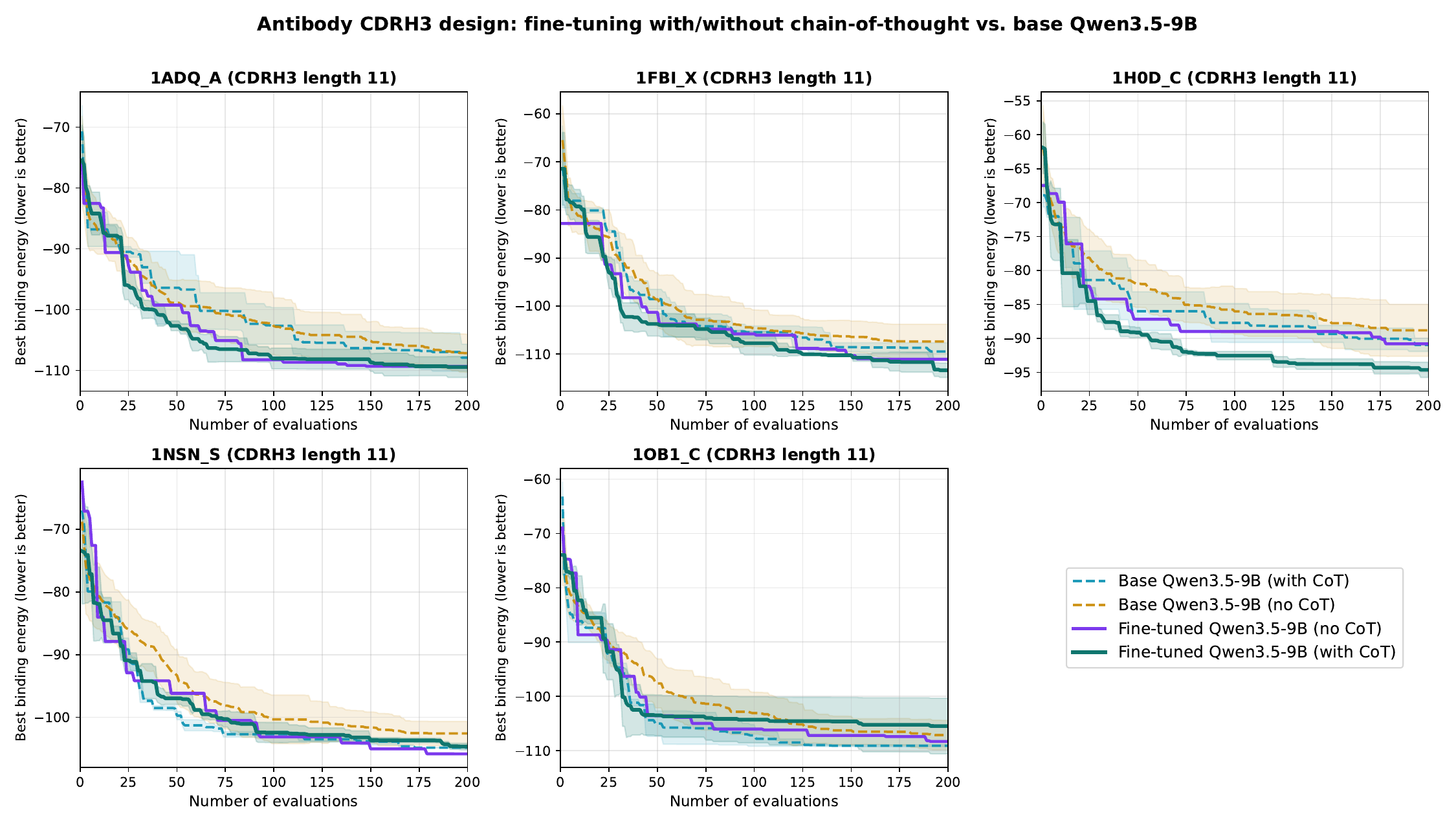}
    \caption{\textbf{Discovery fine-tuning for antibody CDRH3 design.} Best Absolut! binding energy, lower is better, versus the number of evaluations across five antigens. Fine-tuned Qwen3.5-9B variants with and without reasoning augmentation are compared with their corresponding base proposers in the same acquisition-guided LDM loop. Shaded bands denote mean $\pm$ standard deviation across available seeds.}
    \label{fig:finetune-antibody}
\end{figure*}

\subsubsection{Multi-objective Small-Molecule Discovery}

Finally, we evaluate cross-task transfer on the KRAS G12D molecular-design task. The task jointly optimises docking and neural activity, and performance is measured by best-so-far Pareto-front hypervolume over 80 expensive evaluations.

Figure~\ref{fig:finetune-molecule} compares five inference-time policies inside the same LDM loop. The reasoning-augmented fine-tuned Qwen3.5-9B model achieves the highest final hypervolume, $26.660 \pm 2.608$. It exceeds the same fine-tuned student without reasoning, $22.279 \pm 3.828$, and both DeepSeek V4 Flash teacher variants, which reach $24.548 \pm 3.589$ with reasoning and $23.582 \pm 2.832$ without reasoning. The untuned Qwen3.5-9B base proposer reaches $16.489 \pm 5.867$. The result shows that the distilled policy transfers beyond the source task and that reasoning augmentation improves this transfer, although the intervals overlap.

\begin{figure*}[h!]
    \centering
\includegraphics[width=0.82\textwidth]{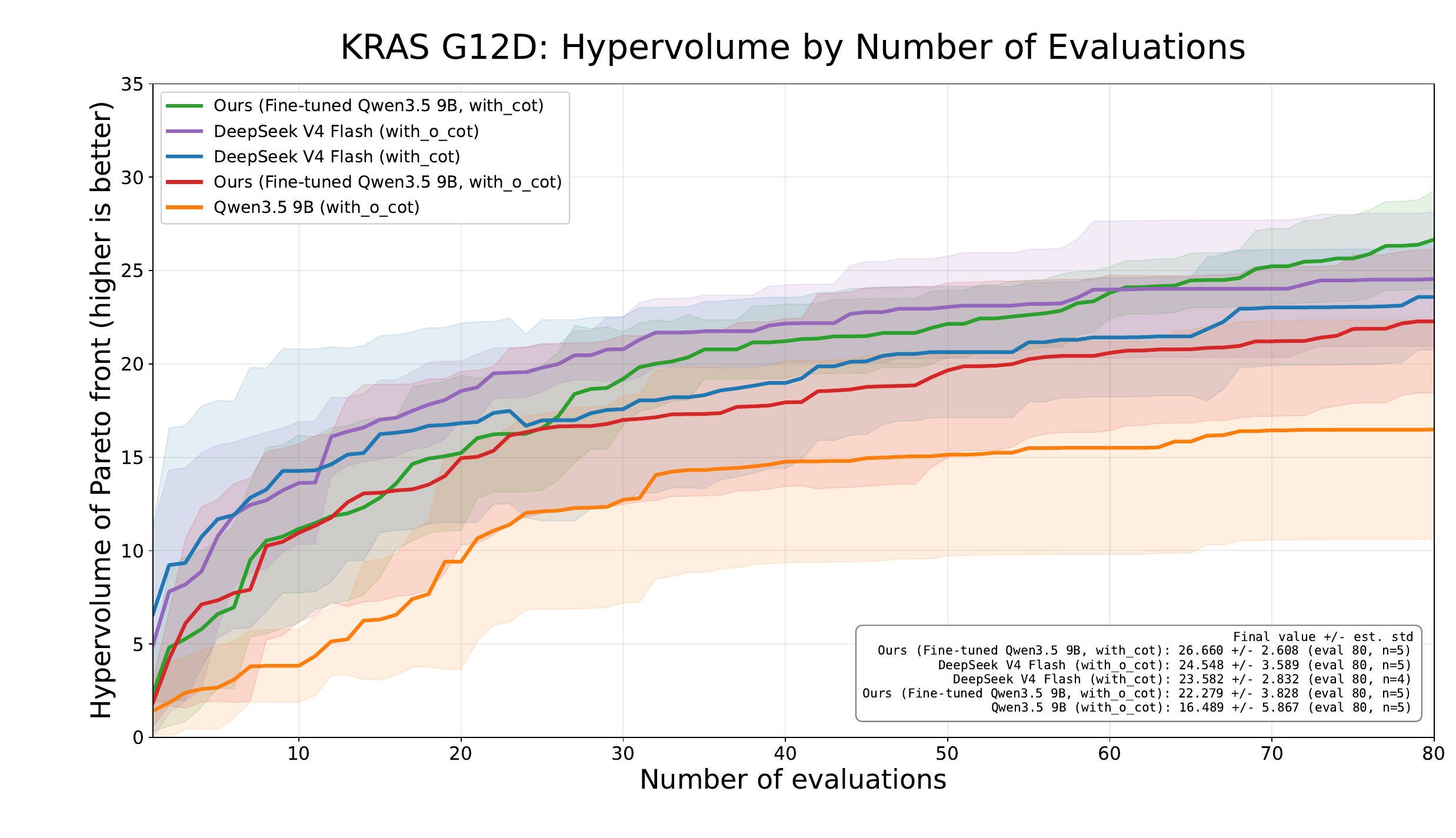}
    \caption{\textbf{Fine-tuning with reasoning-augmented discovery data on KRAS G12D.} Best-so-far Pareto-front hypervolume (higher is better) versus the number of expensive evaluations, comparing fine-tuned and base proposers inside the same LDM loop. Shaded bands denote one standard deviation.}
    \label{fig:finetune-molecule}
\end{figure*}

\subsubsection{Out-of-distribution Generalisation}

The fine-tuned policy transfers not only across the source task but also to unseen targets and tasks. In a case-level study, we hold two of the five antibody targets (\texttt{1FBI\_X}, \texttt{1H0D\_C}) out of training entirely, fine-tune on the other three, and evaluate on the held-out pair inside the same acquisition-guided loop. Figure~\ref{fig:protein-ood-mixed32} shows that this out-of-distribution model matches or exceeds the in-distribution model (which \emph{was} trained on these antigens) on both held-out targets, and clearly beats the base proposer. In a more demanding task-level study, we remove the entire protein task from training and fine-tune only on the \texttt{nanoGPT} and small-molecule trajectories; Figure~\ref{fig:app-task-ood} shows that this model, which has never seen an antibody, matches the in-distribution model on four of the five targets. Since these models never observed the test antigens, the gains cannot be memorisation---the distilled acquisition policy itself generalises. The quantitative details and analysis of both studies are reported in Appendix~\ref{app:finetuning-details}.

\begin{figure*}[!ht]
\centering
\includegraphics[width=0.98\linewidth]{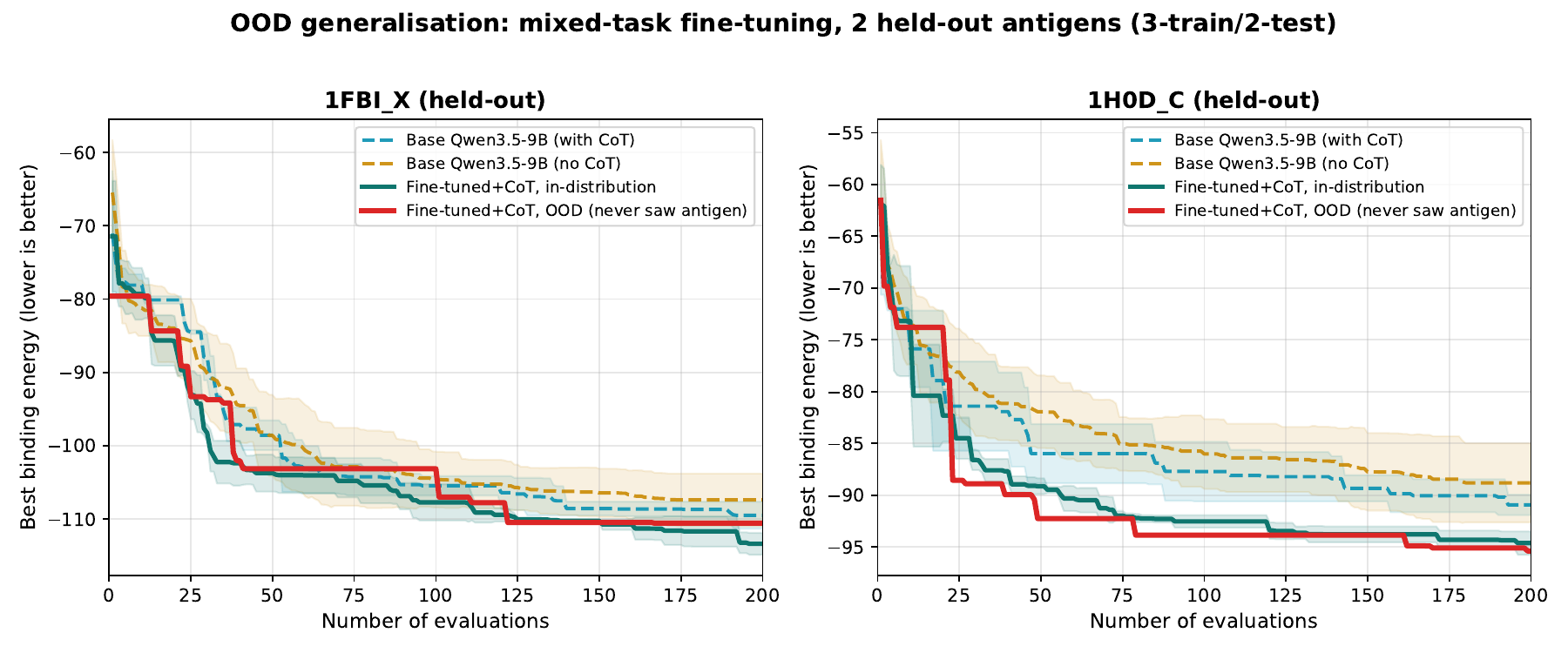}
\caption{\textbf{Out-of-distribution generalisation to held-out antigens (3-train/2-test).} Best binding energy versus number of evaluations on the two held-out antigens. The mixed-task fine-tuned model evaluated on antigens it never saw during training (red) matches or exceeds the in-distribution fine-tuned model (teal) and clearly beats base Qwen3.5-9B with and without chain-of-thought (dashed).}
\label{fig:protein-ood-mixed32}
\end{figure*}

\begin{figure*}[!ht]
\centering
\includegraphics[width=0.98\linewidth]{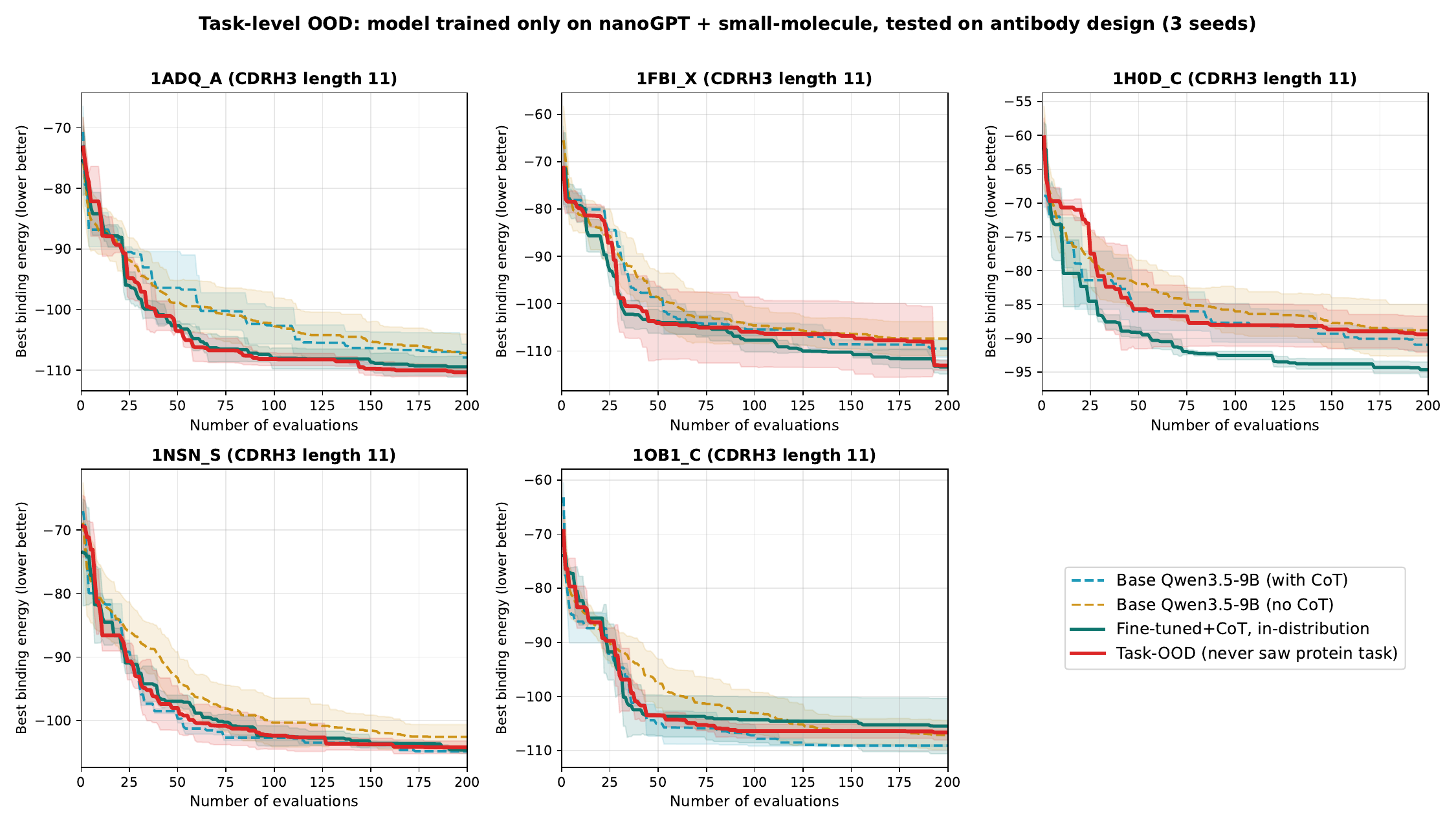}
\caption{\textbf{Task-level out-of-distribution generalisation.} Best binding energy versus number of evaluations on all five antibody targets. The task-level model (red) was fine-tuned on \emph{only} the \texttt{nanoGPT} and small-molecule trajectories and never saw the protein task, yet inside the antibody acquisition loop it matches or exceeds the in-distribution fine-tuned model (teal) on four targets (\texttt{1FBI\_X}, \texttt{1ADQ\_A}, \texttt{1OB1\_C}, \texttt{1NSN\_S}) and clearly beats base Qwen3.5-9B with and without chain-of-thought (dashed). The exception is \texttt{1H0D\_C}, whose narrow optimum requires \emph{de novo} motif design rather than candidate selection, the one capability that needs in-domain knowledge.}
\label{fig:app-task-ood}
\end{figure*}

Across programs, biological sequences, and molecules, fine-tuning improves the proposal reservoir within the online LDM loop. The gains are strongest when the model is trained to represent not only the next action but also the surrogate-grounded rationale for why that action advances scientific search, and they persist for targets and tasks that the distilled policy never encountered during training.

\subsection{Overall Discussion}
\label{sec:overall-discussion}

Across all three experimental domains, our evaluation identifies four consistent properties of the LDM framework, aligned with the states of scientific knowledge in \S\ref{sec:ldm}.

First, the case studies in \S\ref{sec:cases} consistently reproduce the exploit-explore-discover cycle defined in the LDM. In every domain, surrogate-guided optimisation delivers rapid initial gains before reaching a local plateau. The LDM detects stagnation and triggers a discovery step that expands or revises the active search space: new program mechanisms in \texttt{autoresearch}, new sequence families in antibody design, and new molecular scaffolds in small-molecule discovery. Performance improvement resumes after each search-space revision. This plateau-and-revision pattern distinguishes LDM from standard Bayesian optimisation, which operates on a fixed, pre-defined design space.

Second, the ablation studies in \S\ref{sec:ablation-main} isolate the individual contributions of three core components. Increasing the test-time search budget improves performance only when candidates are selected by a calibrated, uncertainty-aware acquisition function, not by raw LLM likelihood alone. Acquisition functions that balance exploitation and exploration (EI, UCB) consistently outperform the purely exploitative posterior mean. The discovery mechanism is an independent, necessary component: when disabled, search is confined to a fixed feature set and converges prematurely to a worse plateau. This confirms that expanding the search frontier, rather than only refining within a fixed space, drives sustained improvement in large, open-ended design problems.

Third, the baseline comparisons in \S\ref{sec:performance} show that combining Bayesian surrogate acquisition with LLM-driven proposal generation yields competitive performance across all three domains. The optimal LDM variant is domain-dependent: policy-mode LDM performs best in antibody design, where it expands candidate coverage via structured search-space parameterisation; direct-generation LDM performs best in code and molecular design, where the LLM’s pre-trained priors provide strong structural guidance. The unified tilted-distribution framework supports both modes without changes to the core acquisition logic.

Fourth, the fine-tuning experiments in \S\ref{sec:finetune-experiments} demonstrate that the LDM search policy can be distilled into the LLM reservoir. Fine-tuning improves proposal quality across all domains, with further gains from reasoning-augmented training that includes surrogate-grounded rationales. The distilled policy generalises out-of-distribution to unseen targets and entirely unseen tasks, indicating that it learns a general search strategy rather than memorising task-specific designs.

Taken together, these results frame scientific design as a sequential decision problem, rather than a pure prediction problem. The LLM constructs and adapts candidate proposals, while the surrogate and acquisition function allocate limited evaluation budget to the most informative candidates. Limitations of the current framework include the cubic scaling of exact Gaussian processes with observation count, and the risk of surrogate misspecification or reward exploitation. Future work will address scalability to larger experimental budgets and more robust surrogate modelling.

\section{Related Work}
\label{sec:related}

This section situates the LDMs within the broader literature on generative models, statistical optimisation, and their intersection for scientific discovery. The aim is not to provide an exhaustive survey, but a focused positioning of LDM within the key strands of related work that directly motivate its design and delineate its contributions. We structure the discussion in three parts. We first review the remarkable progress in LLMs, including their scaling laws, emergent reasoning capabilities, and post-training methods; we then highlight the limitations of pure generative models for discovery, particularly the lack of calibrated uncertainty and expensive, sparse reward signals. Next, we survey the statistical learning paradigm for scientific discovery, from multi-armed bandits to Bayesian optimisation (BO), and discuss its strengths and challenges in high-dimensional or open-ended, unrepresented scientific spaces. Finally, we examine the growing body of work on hybrid approaches that combine LLMs with BO, clarify the differences between LDM and the most closely related methods, and articulate the open problems our framework addresses.

\subsection{Large Language Models and Their Limitations in Scientific Discovery}

The early development of language models was marked by several landmark architectures, including the Transformer \citep{vaswaniAttentionAllYou2017}, GPT-1 \citep{radfordImprovingLanguageUnderstanding2018}, and BERT \citep{devlinBERTPretrainingDeep2019}. The past few years have witnessed transformative progress in large language models. Scaling laws \citep{kaplanScalingLawsNeural2020} have established a predictable correlation in LLMs' performance with the scale of model weights, data, and test-time compute. The remarkable success of GPT-3 \citep{brownLanguageModelsAre2020} further propelled research along the scaling direction. As models scale, they exhibit emergent abilities \citep{weiEmergentAbilitiesLarge2022} not present in smaller counterparts, including complex reasoning via chain-of-thought prompting \citep{weiChainofThoughtPromptingElicits2023,kojimaLargeLanguageModels2023}, in-context learning \citep{hanUnderstandingEmergentInContext2025}, and instruction following \citep{ouyangTrainingLanguageModels2022}. In addition, post-training methods, such as supervised fine-tuning (SFT) and reinforcement learning from human feedback (RLHF) \citep{ouyangTrainingLanguageModels2022}, can better adapt LLMs to downstream tasks and the users' preferences.

Since 2023, \emph{Test-time compute} has been a crucial development that pushes LLMs from language understanding to reasoners for complex problems, where the key insight is that the use of additional computation at inference time greatly improves output quality. Early research utilises verifiers \citep{cobbeTrainingVerifiersSolve2021, lightmanLetsVerifyStep2023} to guide reasoning to solve complex mathematical problems. Further, \emph{tree of thoughts} \citep{yao2023tree} performs deliberate search over intermediate reasoning states with self-evaluation. Self-Refine \citep{madaan2023selfrefine} iterates generate--critique--revise cycles. More generally, \citet{snell2025scaling} show that optimal allocation of test-time compute allows small models to outperform much larger models. The underlying search machinery, including UCT \citep{kocsis2006bandit} and the PUCT rule popularised by AlphaZero \citep{silver2017mastering}, has been applied to LLM decoding and training \citep{fengAlphazerolikeTreeSearchCan2024}. The combination of inference-time search with verifier-guided refinement underpins renowned systems such as OpenAI's o1 \citep{openaiOpenAIO1System2024} and DeepSeek-R1 \citep{deepseek-aiDeepSeekR1IncentivizingReasoning2025}. When cheap supervision is repeatedly available, \emph{reinforcement learning with verifiable rewards} (RLVR) \citep{deepseek-aiDeepSeekR1IncentivizingReasoning2025, shaoDeepSeekMathPushingLimits2024} has led to impressive results in mathematics, coding, and reasoning tasks. A unified tutorial perspective on these methods is provided by \citet{wangTutorialLLMReasoning2025}, and OpenR \citep{wang2024openr} further establishes a comprehensive technical framework.

Despite the significant progress of LLM reasoning, how LLMs engage in open-ended scientific discovery remains questionable. Studies show that LLMs are also poorly calibrated for scientific decisions---their internal confidence does not reliably reflect uncertainty about an external property \citep{guptaLLMsBayesianOptimization2025,kristiadiSoberLookLLMs2024}. Moreover, end-to-end execution benchmarks consistently show that frontier LLM agents cannot yet reliably discover novel, high-quality solutions: on ResearchClawBench the strongest autonomous agent scores only 21.5/50 across 40 real-paper-derived tasks \citep{xu2026researchclawbench}, and on PaperBench agents replicate ICML papers at 21.0\% against 41.4\% by PhD researchers \citep{starace2025paperbench}. Even at the ideation stage, novelty rankings that initially favour LLMs \citep{si2024novelideas} reverse once ideas are executed \citep{si2025executiongap}. LLM-generated ideas also exhibit narrow exploration, converging on the centroid of existing literature \citep{tang2026narrowexploration}. 

Therefore, scientific discovery essentially requires a novel paradigm of foundation models.  Crucially, the key challenge is not intrinsic LLM failures: when proposals are paired with automated evaluators and search, genuinely novel results are recovered \citep{novikov2025alphaevolve}. The bottleneck is the absence of a calibrated external value signal.  Test-time compute and reinforcement learning typically rely on cheap verifiers or reward models, such as ground-truth answers or trained networks from large data, to guide the search of reasoning steps. This is built upon access to rich feedback data or ground-truth answers, which, however, is usually scarce in scientific domains.

\subsection{Statistical Learning for Scientific Discovery}

A mature body of statistical learning approaches addresses the problem of optimising a black-box reward function with noisy evaluations. \emph{Multi-arm bandit} constitutes a representative framework conceptualising the exploration--exploitation trade-off \citep{lattimoreBanditAlgorithms2020}, where the upper confidence bound (UCB) \citep{auerUsingConfidenceBounds2002} provides a simple way to balance these competing objectives with sublinear regret guarantees. However, classical multi-arm bandit settings assume a finite set of arms, yet scientific design spaces are often effectively unbounded.

This challenge of unbounded space is then captured by the \emph{infinitely-armed bandit} \citep{berry1997bandit, wang2008infinitely} frameworks, where new arms must be discovered from a \emph{reservoir}. The performance in such settings is governed by the reservoir's tail of near-optimal arms, a quantity known as the \emph{discovery gap}. The LDM framework explicitly adopts this view, treating the LLM as an adaptive reservoir and analysing the discovery gap through a missing-mass argument \citep{mcallester2000convergence, berend2013concentration} in \S~\ref{sec:theory-main}. In comparison, LDM utilises an LLM as its reservoir, endowing the search process with broad scientific priors and rich semantic knowledge to inform candidate generation. In addition, the LLM’s context $\mathcal{C}_t$ encapsulates the full evaluation history, enabling in-context learning to synthesise insights from prior trials and propose progressively more effective candidate designs. The regret analysis in \S~\ref{sec:theory-main} decomposes the cost of search into a surrogate-estimation term and a reservoir-quality term, with the latter shrinking as more inference-time compute is spent.

Scientific discovery tasks are further constrained by costly evaluation procedures and limited computational budget. This demand motivates \emph{model-based} bandit variants that explicitly extract knowledge from evaluation history and make the most of such knowledge in decision making. To this end, \emph{Bayesian optimisation} (BO) \citep{mockusBayesMethodsSeeking1975,jonesEfficientGlobalOptimization1998,garnettBayesianOptimization2023} provides a powerful framework for such a variant that uses a probabilistic surrogate following the rigour of Bayesian inference, where typically a Gaussian process (GP) \citep{rasmussen2006gaussian} serves as the surrogate. The GP provides predictive mean and variance, which are combined into an \emph{acquisition function} that guides the selection of the next evaluation. For example, canonical implementations such as Efficient Global Optimisation (EGO) \citep{jonesEfficientGlobalOptimization1998} and GP-UCB \citep{srinivas2010gaussian} use expected improvement (EI) and UCB \citep{auerUsingConfidenceBounds2002} as their acquisition function, respectively. Theoretical analysis shows that GP-UCB enjoys sublinear cumulative regret in terms of the maximum information gain, a measure of how informative the observations are about the objective \citep{srinivas2010gaussian}. These theoretical guarantees make BO a principled tool for experimental design for scientific discovery \citep{yuEfficientPrincipledScientific2026}. Recent work further demonstrates this value in costly mathematical discovery: a BO--MCTS framework that searches sphere-packing SDP formulations achieved new state-of-the-art upper bounds in twelve dimensions, despite evaluations that can take days \citep{tutunov2025modelbased}.

BO has been successfully applied across chemistry, materials science, and biology, often requiring domain-specific adaptations. Examples include the PHOENICS \citep{hasePhoenicsBayesianOptimizer2018} and Gryffin \citep{haseGryffinAlgorithmBayesian2021} frameworks for chemical discovery. Combinatorial BO frameworks such as AntBO \citep{khan2022antbo} have been developed to handle the discrete sequence space of CDRH3 loops for antibody design. Benchmarking studies across multiple materials domains \citep{liangBenchmarkingPerformanceBayesian2021} have compared different surrogate models and acquisition functions, while applications in chemical synthesis \citep{shieldsBayesianReactionOptimization2021} and material discovery \citep{shoyebraihanAcceleratingMaterialDiscovery2024} further demonstrate the versatility of BO. For a comprehensive overview of BO for chemical problems, see \citep{wuRaceBottomBayesian2024}. To search among discrete sequences (e.g., SMILES and proteins), methods such as BOSS (Bayesian Optimisation over String Spaces) \citep{mossBOSSBayesianOptimization2020} use string kernels to directly construct GPs over the string space. To handle molecule design, latent-space approaches combine variational autoencoders with BO, though they struggle with validity and out-of-distribution generation \citep{griffiths2020constrained}. COMBO uses a graph Cartesian product to define a Gaussian-process BO method for combinatorial variables \citep{oh2019combo}; the later MCBO framework provides benchmarks for combinatorial and mixed-variable BO \citep{dreczkowskiFrameworkBenchmarksCombinatorial2023}. For multi-objective BO, model-assisted S-metric selection \citep{ponweiserMultiobjectiveOptimizationLimited2008} and EHVI \citep{emmerich2011ehvi} both build on hypervolume, while qEHVI/NEHVI address parallel and noisy settings \citep{daulton2020differentiable,daulton2021parallel}.

Despite these advances, pure statistical methods face fundamental challenges. First, they struggle to propose valid candidates in vast, unbounded spaces of practical scientific problems. Next, they cannot easily incorporate high-level semantic domain knowledge. Additionally, they are often unresponsive to user-specified design constraints or preferences. These limitations create a natural opportunity for complementarity with generative models, which can propose semantically meaningful candidates and be steered by domain knowledge. This brings us to the growing body of hybrid approaches that combine the strengths of LLMs and BO.

\subsection{Hybrid Approaches: LLM-Guided Discovery}

The recognition that LLMs and statistical learning offer complementary strengths has led to a growing body of hybrid methods. A high-level approach is OPRO \citep{yangLargeLanguageModels2024}, which uses an LLM as an optimiser by describing the task in natural language and iteratively generating solutions from a prompt containing a history of solution--score pairs. While powerful in prompt optimisation, OPRO lacks the calibrated uncertainty and principled exploration of a Bayesian surrogate.

Several works have integrated LLMs directly into the BO pipeline. LLAMBO \citep{liuLargeLanguageModels2024} uses LLMs for zero-shot warm-starting, surrogate modelling, and candidate sampling, showing gains in low-data hyperparameter optimisation regimes. BoChemian \citep{rankovicBoChemianLargeLanguage2023} and its extension \citep{rankovicLargeLanguageModels2025a} use LLM embeddings as features for BO over chemical reactions, demonstrating that fine-tuning LLMs with uncertainty-aware objectives can improve discovery rates. LABO \citep{chenLABOLLMAcceleratedBayesian2026} proposes a gating criterion to dynamically balance LLM predictions against experimental observations. Other works use LLMs to generate or refine kernels for GP surrogates \citep{suwandiAdaptiveKernelDesign2025}, handle natural language feedback \citep{ramos2023bayesian}, or perform analog circuit design \citep{yinADOLLMAnalogDesign2024,chenLLMEnhancedBayesianOptimization2024}. The LLM-as-warm-start paradigm, where the LLM suggests the initial candidates or a search region, is also explored in \citep{ramos2023bayesian, yinADOLLMAnalogDesign2024}. In catalysis, \citet{ramos2023bayesian} use frozen LLMs for in-context learning to enable BO without feature engineering, demonstrating the potential of LLMs to operate directly in language space.

Despite various attempts to combine the advantages of LLMs and BO, some recent studies have challenged their solidity. A sobering perspective is provided by \citet{guptaLLMsBayesianOptimization2025}, who find that current LLMs show no sensitivity to experimental feedback in BO tasks, and classical methods such as linear bandits and GP optimisation consistently outperform LLM agents. Similarly, \citet{kristiadiSoberLookLLMs2024} take a dispassionate stance, concluding that uncertainty taken directly from point-estimated LLMs is unreliable, and that LLMs are useful for BO over molecules primarily only when they serve as fixed feature extractors for principled GP surrogates and are pretrained or finetuned on domain-specific data. These critiques directly motivate LDM's design: we retain an explicit calibrated posterior (the GP surrogate) and use the LLM only as a candidate reservoir.

Another closely related work to LDM is dLLM \citep{yuan2026diffusion}, which for \emph{offline} black-box optimisation runs a masked-diffusion tree search where leaf candidates are scored by expected improvement under a GP fitted to an offline dataset. In contrast, LDM targets the standard \emph{online} recurrent loop with sequential, expensive evaluations. The core is that LDM must raise designs that are valuable not only for the current trial but also for gaining knowledge for the future.

In summary, LLMs provide powerful generative priors but lack calibration and robust optimisation; meanwhile, statistical methods provide principled, uncertainty-aware search but struggle to propose candidates in semantic, combinatorial spaces. LDM bridges this gap by making the acquisition function the value signal of an LLM-driven inference-time search, creating a principled, model-based framework for scientific discovery.

\section{Conclusion, Limitations, and Outlook}
\label{sec:conclusion}

The core challenge of scientific discovery lies in searching for optimal solutions within vast, structured, and open-ended design spaces under a limited budget of costly evaluations. Existing paradigms exhibit complementary limitations: large language models (LLMs) possess strong structured generative capabilities and domain priors, but cannot reliably estimate the true value of external objectives or their associated uncertainty. Bayesian optimisation, by contrast, delivers uncertainty-aware experimental decisions from sparse observations, yet struggles to autonomously propose valid candidates in complex discrete spaces. This work presents the Large Discovery Model (LDM), an experiment-grounded recurrent architecture that deeply couples the generative prior of LLMs with the value signal from a Gaussian process surrogate. Framing scientific design as a sequential inverse decision problem under an evaluation budget, LDM operates a closed loop of generation, evaluation, and update, which simultaneously expands the search frontier and allocates experimental resources precisely.

At the theoretical and algorithmic level, LDM establishes a KL-regularised, acquisition-guided optimisation framework and derives a closed-form acquisition-tilted optimal search distribution. This formulation extends the classical dichotomy between exploitation and exploration into a tripartite scientific search regime of exploitation, exploration and discovery, unifying the generative prior and empirical value signal within a single mathematical framework. We further provide a complete regret decomposition that delineates the roles of generative coverage gap, surrogate fitting error, and sampling strategy suboptimality. Algorithmically, the framework supports two candidate generation paradigms, direct generation and indirect parameterisation, and is complemented by a decomposed feedback mechanism to guide targeted iterative refinement, making it adaptable to design spaces of varying structure. Building on this core, we introduce a post-training amortisation scheme: through reasoning-augmented supervised fine-tuning, the high-budget test-time search policy is amortised into the model weights, substantially reducing inference-time computational cost while preserving search quality.

Empirical evaluation across three heterogeneous domains, including the auto-research scenario of neural-network training program, antibody CDRH3 sequence design, and multi-objective molecular optimisation, systematically validates the generality of the LDM framework and its core findings. First, the structured generative capacity of LLMs and the uncertainty calibration of Bayesian surrogates exhibit strong complementarity, with their integrated performance outperforming the reported LLM-only and BO-only baselines. Second, expansion of the search frontier is associated with improvements after observed local plateaus. Third, both test-time compute scaling and post-training fine-tuning improve search efficiency in the reported studies. Fine-tuning experiments further demonstrate stable gains across cross-target and cross-task transfer, indicating that the model learns generalisable discovery decision logic rather than task-specific memorisation.

Nevertheless, the current framework is subject to several constraints and limitations. The Gaussian process surrogate incurs cubic computational complexity in the number of observations, limiting its scalability under large experimental budgets. Test-time search relies on batch candidate sampling, which entails substantial LLM inference overhead. The kernel functions and feature representations of the surrogate still require manual customisation per domain, resulting in limited automation for cross-domain transfer. Additionally, framework performance is bounded by the coverage of the LLM's pretrained domain knowledge; in domains with weak priors, it is difficult to achieve substantial superiority over specialised methods. Finally, the current closed loop is updated solely based on in-process experimental observations, and has not yet systematically incorporated existing external knowledge and data, leaving a large body of unknown knowns unutilised in the search process.

For future work, a core direction is to address the utilisation of unknown knowns. We will introduce memory retrieval mechanisms and federated discovery frameworks to incorporate external knowledge bases, published experimental results, and data from parallel exploration processes into the search closed loop. Combined with techniques such as sparse Gaussian processes \citep{snelsonSparseGaussianProcesses2005,titsiasVariationalLearningInducing2009}, this will simultaneously address limited context capacity and high computational complexity as observational data scales. Building on this, we will explore end-to-end surrogate specification, in which the LLM autonomously learns kernel functions and prior settings for the surrogate, eliminating manual adaptation across domains. We will also advance discovery-oriented post-training paradigms, exploring preference learning and reinforcement learning to further internalise acquisition-guided decision logic into the model's generative distribution.

\bibliographystyle{plainnat}
\bibliography{references}

\newpage
\appendix

\begin{center}
  {\sffamily\bfseries\LARGE\color{ldmdarkred}
   Appendix of Large Discovery Models\par}
  \vspace{5pt}
  {\color{ldmedge}\hrule height 0.5pt}
\end{center}

\vspace{8pt}

\startcontents[appendix]
\printcontents[appendix]{l}{0}[2]{%
  \noindent{\sffamily\bfseries\large\color{ldmdarkred}Appendix Contents}\par
  \vspace{4pt}}

\section{Notation}
\label{app:notation}

\begingroup
\small
\renewcommand{\arraystretch}{1.15}
\begin{longtable}{@{}p{0.24\textwidth}p{0.70\textwidth}@{}}
\toprule
\textbf{Symbol} & \textbf{Meaning} \\
\midrule
\endfirsthead
\toprule
\textbf{Symbol} & \textbf{Meaning (continued)} \\
\midrule
\endhead
\midrule
\multicolumn{2}{r@{}}{\emph{Continued on the next page}}\\
\endfoot
\bottomrule
\endlastfoot
$\X$ & Design space. \\
$x$ & Candidate design. \\
$x^\star$ & Global maximiser of the unknown objective. \\
$R(x)$ & Unknown objective or reward evaluated at $x$. \\
$\D_t$ & Empirical evaluation history available at round $t$. \\
$\C_t$ & Broader search context used to condition candidate generation. \\
$r_i$ & Observed reward for evaluated design $x_i$. \\
$p_\theta$ & Base generative-model distribution. \\
$p_{\theta,\alpha}$ & Effective proposal induced by inference configuration $\alpha$; in the controlled base-measure-exponent case, $p_{\theta,\alpha}\propto p_\theta^\alpha$. \\
$\theta$ & Parameters of the generative model. \\
$\alpha$ & Inference configuration, including prompting, decoding, reasoning, refinement, and inference-time compute. \\
$A_t$ & Active proposal support at round $t$. \\
$\mu_t(x)$ & Surrogate posterior mean at $x$. \\
$\sigma_t(x)$ & Surrogate posterior standard deviation at $x$. \\
$a_t(x)$ & Acquisition value at $x$ (also written $\acq_t(x)$). \\
$\pi_t$ & Acquisition-tilted search policy. \\
$\eta$ & Acquisition-tilt strength. \\
$N$ & Candidate-pool size. \\
$b$ & Batch size for parallel evaluation. \\
$T$ & Sequential evaluation budget. \\
$B_t$ & Selected evaluation batch at round $t$. \\
$k$ & Number of candidates selected by Gumbel-top-$k$; $k(\cdot,\cdot)$ denotes the GP kernel when used with function arguments. \\
$m(\cdot)$ & GP prior mean function. \\
$k(\cdot,\cdot)$ & GP kernel function. \\
$\phi(\cdot)$ & Representation map used by the kernel. \\
$\epsilon_t$ & Observation noise at round $t$. \\
$\lambda$ & Noise regulariser in the information-gain term. \\
$\beta_t$ & UCB confidence parameter. \\
$\gamma_T$ & Maximum information gain over $T$ evaluations. \\
$\zeta_t$ & Near-UCB acquisition tolerance. \\
$\kappa_t(\zeta_t)$ & Proposal mass assigned to the near-UCB set. \\
$\rho_t$ & Per-round sampling-failure probability allocation. \\
$\mathrm d_\X$ & Distance on the design space. \\
$r_t^{\rm cov}$ & Reservoir coverage radius at round $t$. \\
$L$ & Local regularity constant. \\
$\chi$ & H\"older exponent in the local regularity condition. \\
$T_{\mathrm{LLM}}$ & LLM decoding temperature used in experiments; in the controlled exponent ablation, $T_{\mathrm{LLM}}=1/\alpha$. \\
$H$ & Number of refinement branches. \\
$N_t^{\rm good}$ & Number of retained high-value candidates at round $t$. \\
$r^{\rm thresh}$ & Reward threshold used by task-specific filtering. \\
\end{longtable}
\endgroup

\section{Detailed Proofs}
\label{sec:theory}

We give the full proof of Proposition~\ref{prop:gibbs}, Lemma~\ref{lem:gibbs-near-ucb} and Theorem~\ref{thm:regret}.
\subsection{Proof of Proposition~\ref{prop:gibbs}}
\label{prop_proof}
\begin{proof}
Write $J(q)=\E_{x\sim q}[\acq_t]-\tfrac1\eta\KL(q\|p_{\theta,\alpha})$. For any $q$,
\[
J(q)=\sum_x q(x)\acq_t(x)-\tfrac1\eta\sum_x q(x)\log\frac{q(x)}{p_{\theta,\alpha}(x)}
=-\tfrac1\eta\sum_x q(x)\log\frac{q(x)}{p_{\theta,\alpha}(x)e^{\eta\acq_t(x)}}.
\]
Define $\pi_t(x)\propto p_{\theta,\alpha}(x)e^{\eta\acq_t(x)}$ with normaliser $Z_t$. Substituting,
\[
J(q)=-\tfrac1\eta\,\KL(q\|\pi_t)+\tfrac1\eta\log Z_t\ \le\ \tfrac1\eta\log Z_t,
\]
with equality iff $q=\pi_t$, since $\KL(q\|\pi_t)\ge 0$ with equality iff $q=\pi_t$. As
$p_{\theta,\alpha}\propto p_\theta^{\alpha}$, the normalising constants combine into $Z_t$, giving
\eqref{eq:core}.
\end{proof}

\subsection{Proof of Lemma~\ref{lem:gibbs-near-ucb}}
\label{app:gibbs-lemma}

Fix a round $t$ and condition on the history $\C_t$. Under this conditioning, the effective
reservoir support $A_t$, the inference-configured proposal $p_{\theta,\alpha}(\cdot\mid\C_t)$, and
the acquisition function $\acq_t(x)=\mu_t(x)+\sqrt{\beta_t}\sigma_t(x)$ are fixed. Let
\[
Z_t=\sum_{u\in A_t}\exp\{\eta\,\acq_t(u)\}\,p_{\theta,\alpha}(u\mid\C_t)
\]
be the normalising constant of \eqref{eq:theory-pi}. For a tolerance $\zeta_t\ge0$, write
\[
U=U_t(\zeta_t)
=
\{x\in A_t:\acq_t(x)\ge a_{t,A}^\star-\zeta_t\}.
\]
Every $u\in U$ has acquisition at least $a_{t,A}^\star-\zeta_t$. Hence the contribution of $U$ alone gives the
lower bound
\begin{align}
Z_t
&=
\sum_{u\in A_t}\exp\{\eta\,\acq_t(u)\}\,p_{\theta,\alpha}(u\mid\C_t) \notag\\
&\ge
\sum_{u\in U}\exp\{\eta\,\acq_t(u)\}\,p_{\theta,\alpha}(u\mid\C_t) \notag\\
&\ge
\exp\{\eta(a_{t,A}^\star-\zeta_t)\}\,p_{\theta,\alpha}(U\mid\C_t) \notag\\
&=
\kappa_t(\zeta_t)\exp\{\eta(a_{t,A}^\star-\zeta_t)\}.
\label{eq:zt-lower}
\end{align}

Now fix $z\ge0$ and define the bad event
\[
B_z=
\{x\in A_t:a_{t,A}^\star-\acq_t(x)>\zeta_t+z\}.
\]
For every $x\in B_z$,
\[
\acq_t(x)<a_{t,A}^\star-\zeta_t-z.
\]
Therefore,
\begin{align}
\Prob(x_t\in B_z\mid\C_t)
&=
\pi_t(B_z) \notag\\
&=
\frac{\sum_{x\in B_z}\exp\{\eta\,\acq_t(x)\}\,p_{\theta,\alpha}(x\mid\C_t)}{Z_t} \notag\\
&\le
\frac{\exp\{\eta(a_{t,A}^\star-\zeta_t-z)\}\,p_{\theta,\alpha}(B_z\mid\C_t)}
{\kappa_t(\zeta_t)\exp\{\eta(a_{t,A}^\star-\zeta_t)\}} \notag\\
&\le
\frac{\exp\{-\eta z\}}{\kappa_t(\zeta_t)},
\label{eq:gibbs-tail}
\end{align}
where the first inequality uses \eqref{eq:zt-lower}, and the second uses $p_{\theta,\alpha}(B_z\mid\C_t)\le1$. This proves the
tail form
\[
\Prob\left(
a_{t,A}^\star-\acq_t(x_t)>\zeta_t+z
\mid
\C_t
\right)
\le
\frac{e^{-\eta z}}{\kappa_t(\zeta_t)}.
\]

To obtain the high-probability statement, take
\[
z=\frac1\eta\log\frac1{\rho_t\kappa_t(\zeta_t)}.
\]
Since $\rho_t\in(0,1)$ and $\kappa_t(\zeta_t)\le1$, this $z$ is non-negative. Substitution into
\eqref{eq:gibbs-tail} gives an upper bound of $\rho_t$ on the bad event, so with conditional probability at
least $1-\rho_t$,
\[
a_{t,A}^\star-\acq_t(x_t)
\le
\zeta_t+\frac1\eta\log\frac1{\rho_t\kappa_t(\zeta_t)}.
\]
This proves Lemma~\ref{lem:gibbs-near-ucb}.

\subsection{Proof of Theorem~\ref{thm:regret}}
\label{app:main-proof}

The proof decomposes regret into a discovery term and an optimisation term, then controls the optimisation term
using UCB calibration and Lemma~\ref{lem:gibbs-near-ucb}.

As stated earlier, we can control the simple regret by analysing the upper bound of the cumulative regret, which can be decomposed by the discovery gap and optimisation regret as below.
\begin{equation}
\regret_t
=
\underbrace{R(x^\star)-R_{t,A}^\star}_{\regret_t^{\rm disc}}
+
\underbrace{R_{t,A}^\star-R(x_t)}_{\regret_t^{\rm opt}}.
\label{eq:disc-opt-decomp}
\end{equation}

We begin with bounding the discovery gap. By Assumption~\ref{ass:coverage}, for every $x\in A_t$,
\[
R(x^\star)-R(x)
\le
L\,\mathrm d_\X(x,x^\star)^\chi.
\]
Taking the infimum over $x\in A_t$ gives
\begin{align}
\regret_t^{\rm disc}
&=
R(x^\star)-\sup_{x\in A_t}R(x) \notag\\
&=
\inf_{x\in A_t}\{R(x^\star)-R(x)\} \notag\\
&\le
L\inf_{x\in A_t}\mathrm d_\X(x,x^\star)^\chi \notag\\
&=
L\left(\inf_{x\in A_t}\mathrm d_\X(x,x^\star)\right)^\chi \notag\\
&=
L(r_t^{\rm cov})^\chi.
\label{eq:disc-gap-bound}
\end{align}
Thus, the discovery price is exactly controlled by the current reservoir coverage radius.

Next, we form the simultaneous sampling event. For each $t$, apply Lemma~\ref{lem:gibbs-near-ucb} with failure probability $\rho_t$. Define
\[
\xi_t
=
\zeta_t+\frac1\eta\log\frac1{\rho_t\kappa_t(\zeta_t)}.
\]
The lemma gives, conditionally on $\C_t$,
\[
\Prob(a_{t,A}^\star-\acq_t(x_t)\le \xi_t\mid \C_t)\ge1-\rho_t.
\]
Equivalently, the conditional failure probability is at most $\rho_t$. Iterating this bound over
$t=1,\ldots,T$ and applying a union bound yields an event $\mathcal E_{\rm samp}$ such that
\[
\Prob(\mathcal E_{\rm samp})\ge 1-\sum_{t=1}^T\rho_t\ge1-\delta_{\rm samp},
\]
and on $\mathcal E_{\rm samp}$,
\begin{equation}
a_{t,A}^\star-\acq_t(x_t)\le \xi_t
\qquad\text{for all }t\le T.
\label{eq:sampling-event}
\end{equation}

As such, we can bound the optimisation term on the good event. Let $\mathcal E_{\rm gp}$ be the event in Assumption~\ref{ass:ucb-calibration}. On this event, for every
$x\in\X$ and $t\le T$,
\begin{equation}
\begin{aligned}
R(x)\le \acq_t(x),
\qquad
R(x)\ge \mu_t(x)-\sqrt{\beta_t}\sigma_t(x).
\end{aligned}
\label{eq:ucb-two-sided}
\end{equation}
On $\mathcal E_{\rm gp}\cap\mathcal E_{\rm samp}$,
\begin{align}
R_{t,A}^\star
&=
\sup_{x\in A_t}R(x) \notag\\
&\le
\sup_{x\in A_t}\acq_t(x) \notag\\
&=
a_{t,A}^\star \notag\\
&\le
\acq_t(x_t)+\xi_t,
\label{eq:best-in-a-upper}
\end{align}
where the first inequality uses UCB optimism and the last inequality uses \eqref{eq:sampling-event}. Therefore
\[
\regret_t^{\rm opt}
=
R_{t,A}^\star-R(x_t)
\le
\acq_t(x_t)-R(x_t)+\xi_t.
\]
Using the lower side of \eqref{eq:ucb-two-sided} at the sampled point $x_t$,
\begin{align}
\acq_t(x_t)-R(x_t)
&=
\mu_t(x_t)+\sqrt{\beta_t}\sigma_t(x_t)-R(x_t) \notag\\
&\le
2\sqrt{\beta_t}\sigma_t(x_t).
\end{align}
Hence
\begin{equation}
\regret_t^{\rm opt}
\le
2\sqrt{\beta_t}\sigma_t(x_t)+\xi_t.
\label{eq:opt-gap-bound}
\end{equation}

Combining \eqref{eq:disc-opt-decomp}, \eqref{eq:disc-gap-bound}, and \eqref{eq:opt-gap-bound}, on $\mathcal E_{\rm gp}\cap\mathcal E_{\rm samp}$ we have, for every $t\le T$,
\[
\regret_t
\le
L(r_t^{\rm cov})^\chi
+
2\sqrt{\beta_t}\sigma_t(x_t)
+
\xi_t.
\]
Summing over $t$ and using the monotonicity $\beta_t\le\beta_T$,
\begin{equation}
\sum_{t=1}^T\regret_t
\le
L\sum_{t=1}^T(r_t^{\rm cov})^\chi
+
2\sqrt{\beta_T}\sum_{t=1}^T\sigma_t(x_t)
+
\sum_{t=1}^T\xi_t.
\label{eq:sum-delta-before-info}
\end{equation}

By the standard information-gain variance bound for GP-UCB,
\[
\sum_{t=1}^T\sigma_t^2(x_t)
\le
C_\lambda\gamma_T,
\qquad
C_\lambda=\frac{2}{\log(1+\lambda^{-1})}.
\]
Therefore, by Cauchy--Schwarz,
\begin{equation}
\sum_{t=1}^T\sigma_t(x_t)
\le
\sqrt{T\sum_{t=1}^T\sigma_t^2(x_t)}
\le
\sqrt{T C_\lambda\gamma_T}.
\label{eq:variance-sum}
\end{equation}
Substituting \eqref{eq:variance-sum} into \eqref{eq:sum-delta-before-info} gives
\[
\sum_{t=1}^T\regret_t
\le
L\sum_{t=1}^T(r_t^{\rm cov})^\chi
+
2\sqrt{T C_\lambda\beta_T\gamma_T}
+
\sum_{t=1}^T\xi_t.
\]
Finally, substituting the definition of $\xi_t$,
\[
\frac{1}{T}\sum_{t=1}^T \regret_t
\le
\frac{L}{T}\sum_{t=1}^T(r_t^{\rm cov})^\chi
+
2\sqrt{\frac{C_\lambda\beta_T\gamma_T}{T}}
+
\frac1T\sum_{t=1}^T
\left[
\zeta_t+
\frac1\eta\log\frac1{\rho_t\kappa_t(\zeta_t)}
\right].
\]

Assumption~\ref{ass:ucb-calibration} gives $\Prob(\mathcal E_{\rm gp})\ge1-\delta_{\rm gp}$, and the sampling
argument gives $\Prob(\mathcal E_{\rm samp})\ge1-\delta_{\rm samp}$. A final union bound gives
\[
\Prob(\mathcal E_{\rm gp}\cap\mathcal E_{\rm samp})
\ge
1-\delta_{\rm gp}-\delta_{\rm samp}.
\]
This completes the proof of Theorem~\ref{thm:regret}.

\section{Technical Implementation Details}
\label{sec:impl-details}

This appendix elaborates on the core algorithmic and implementation details of the LDM framework, covering Gaussian process surrogate modelling, acquisition function design and batch sampling schemes. All implementations strictly follow the acquisition-tilted search framework presented in the main text, and only expand on the specific execution logic for surrogate model training and sampling.

\subsection{Gaussian Process Surrogate}
\label{sec:llm-bo}

LDM employs a Gaussian Process (GP) as the probabilistic surrogate model. It fits the posterior distribution of the black-box reward function from historical experimental observations, and outputs calibrated predictive means and epistemic uncertainty to provide quantitative inputs for acquisition function computation.

\subsubsection{Prior Specification}
Let $\mathcal{X}$ denote the design space, $R(x)$ the true black-box reward function. Each experimental observation contains independent Gaussian noise:
\[
r_i = R(x_i) + \epsilon_i,\quad \epsilon_i \sim \mathcal{N}(0, \sigma_n^2)
\]
where $\sigma_n^2$ is the observation noise variance.

The GP prior is defined as:
\[
R(x) \sim \mathcal{GP}\left(m(x),\, k(x, x')\right)
\]
where:
\begin{itemize}
  \item The prior mean function $m(x)$ adopts a constant prior, i.e. $m(x) = m_0$, with $m_0$ a scalar parameter that can be optimised from data;
  \item $k(x, x')$ is a positive-definite kernel function that characterises the correlation between samples in the design space. Its specific form is adapted to the design space, for example an RBF kernel for continuous parameter spaces or a string kernel for sequence spaces;
  \item The kernel hyperparameters, observation noise variance $\sigma_n^2$ and prior constant $m_0$ together form the set of parameters to be estimated for the GP.
\end{itemize}

\subsubsection{Posterior Inference}
Let $\mathcal{D}_t = \{(x_i, r_i)\}_{i=1}^{t}$ denote the cumulative historical observation dataset at iteration $t$. We define the $t \times t$ kernel matrix $\mathbf{K}_t$ and the $t$-dimensional cross-covariance vector $\mathbf{k}_t(x)$:
\[
\mathbf{K}_t =
\begin{bmatrix}
k(x_1, x_1) & k(x_1, x_2) & \dots & k(x_1, x_t) \\
k(x_2, x_1) & k(x_2, x_2) & \dots & k(x_2, x_t) \\
\vdots      & \vdots      & \ddots & \vdots      \\
k(x_t, x_1) & k(x_t, x_2) & \dots & k(x_t, x_t)
\end{bmatrix},\quad
\mathbf{k}_t(x) =
\begin{bmatrix}
k(x_1, x) \\
k(x_2, x) \\
\vdots \\
k(x_t, x)
\end{bmatrix} \in \mathbb{R}^t
\]

By Bayesian conditioning, the posterior reward distribution for any candidate $x \in \mathcal{X}$ is Gaussian, with closed-form expressions for the posterior predictive mean and predictive variance:
\begin{align}
\mu_t(x) &= m_0 + \mathbf{k}_t(x)^\top \left(\mathbf{K}_t + \sigma_n^2 \mathbf{I}\right)^{-1} \left(\mathbf{r}_t - m_0 \cdot \mathbf{1}\right) \label{eq:postmean} \\
\sigma_t^2(x) &= k(x, x) - \mathbf{k}_t(x)^\top \left(\mathbf{K}_t + \sigma_n^2 \mathbf{I}\right)^{-1} \mathbf{k}_t(x). \label{eq:postvar}
\end{align}
where $\mathbf{r}_t = (r_1, \dots, r_t)^\top$ is the vector of observed rewards, and $\mathbf{1}$ is an all-ones vector of length $t$. The posterior mean estimates the expected reward of a candidate, while the posterior variance quantifies the epistemic uncertainty of the prediction.

\subsubsection{Hyperparameter Estimation}
The GP hyperparameters are fitted via \emph{Type-II maximum likelihood estimation}, whereby optimal hyperparameter values are determined by maximising the marginal log-likelihood of the observed data. The marginal log-likelihood of the observed data $\mathcal{D}_t$ is given by:
\[
\log p(\mathbf{r}_t \mid \mathcal{D}_t) = -\frac{1}{2} (\mathbf{r}_t - m_0 \cdot \mathbf{1})^\top \left(\mathbf{K}_t + \sigma_n^2 \mathbf{I}\right)^{-1} (\mathbf{r}_t - m_0 \cdot \mathbf{1}) - \frac{1}{2}\log\left|\mathbf{K}_t + \sigma_n^2 \mathbf{I}\right| - \frac{t}{2}\log 2\pi
\]

Optimisation is performed using gradient-based algorithms such as L-BFGS. The parameters to be optimised include kernel hyperparameters such as lengthscales and output variance, observation noise variance $\sigma_n^2$, and the prior constant $m_0$. This method automatically adapts to the smoothness and noise level of the design space, improving the fitting accuracy and uncertainty calibration of the surrogate model. Hyperparameter optimisation is re-run after each iteration with new observations to update the surrogate model.

\subsection{Acquisition Functions}
Acquisition functions quantify the experimental value of each candidate based on the GP posterior mean and uncertainty, and act as the core value signal guiding search direction and balancing exploitation and exploration. LDM supports three acquisition functions for single- and multi-objective settings, all of which can be directly embedded into the acquisition-tilted search framework.

\subsubsection{Expected Improvement}
Expected Improvement (EI) \citep{jonesEfficientGlobalOptimization1998} measures the expected gain of a candidate relative to the current best observed value. It is one of the most widely used acquisition functions in single-objective optimisation, and takes the form:
\begin{equation}
a^{\text{EI}}(x) = \left(\mu_t(x) - r_t^* - \xi\right) \Phi(z) + \sigma_t(x) \varphi(z), \quad z = \frac{\mu_t(x) - r_t^* - \xi}{\sigma_t(x)}
\label{eq:ei}
\end{equation}
where $r_t^* = \max_{i \le t} r_i$ is the best reward observed so far, $\xi \ge 0$ is an exploration-adjustment hyperparameter, and $\Phi(\cdot)$ and $\varphi(\cdot)$ denote the cumulative distribution function and probability density function of the standard normal distribution, respectively. EI naturally balances exploitation and exploration, prioritising candidates with higher expected returns.

\subsubsection{Upper Confidence Bound}
The Upper Confidence Bound (UCB) \citep{srinivas2010gaussian} computes a weighted sum of the predictive mean and uncertainty. It has a concise form and enjoys rigorous theoretical guarantees on cumulative regret, and takes the form:
\begin{equation}
a^{\text{UCB}}(x) = \mu_t(x) + \sqrt{\beta_t}\, \sigma_t(x)
\label{eq:ucb}
\end{equation}
where $\beta_t > 0$ is the exploration weight coefficient, which may be set via theoretical formulae or tuned as a hyperparameter to control the exploration intensity of the algorithm. By adjusting $\beta_t$, UCB can flexibly switch between exploitation-biased and exploration-biased search strategies.

\subsubsection{Expected Hypervolume Improvement}

For multi-objective optimisation tasks, \textbf{Expected Hypervolume Improvement (EHVI)} is described alongside model-assisted S-metric selection \citep{ponweiserMultiobjectiveOptimizationLimited2008} and the EHVI formulation and computation of Emmerich et al. \citep{emmerich2011ehvi}. Ponweiser et al. introduce model-assisted S-metric selection based on hypervolume contribution; Emmerich et al. formulate and compute hypervolume-based expected improvement (EHVI). EHVI computes the expected hypervolume gain of the current empirical Pareto front after adding a candidate, and automatically balances exploitation of favourable objective trade-offs and exploration of under-sampled design regions, without requiring pre-specified objective weights.

In the multi-objective setting, the core acquisition-tilted search framework remains unchanged; only the scalar acquisition value $a_t(x)$ is replaced with the EHVI value, enabling seamless extension of LDM to multi-objective discovery.

\subsection{Batch Acquisition Sampling}
\label{sec:batch}

In parallel experimental settings, multiple candidates must be selected for simultaneous evaluation at each iteration. LDM applies Gumbel-top-$k$ to the finite candidate pool, producing a weighted sample without replacement (the Plackett--Luce batch distribution) \citep{kool2019gumbeltopk}. This realises finite-pool acquisition-tilted selection while avoiding duplicate selections in the evaluation batch.

The implementation procedure is as follows:
\begin{enumerate}
  \item For each candidate $x_i$ in the pool generated by the LLM, compute its unnormalised tilted weight:
  \[
  w_i = p_{\theta,\alpha}(x_i \mid \mathcal{C}_t) \cdot \exp\left\{\eta \cdot a_t(x_i)\right\}
  \]
  where $p_{\theta,\alpha}(x_i \mid \mathcal{C}_t)$ is the conditional generation probability of the LLM, $a_t(x_i)$ is the acquisition function value, and $\eta$ is the acquisition tilt strength coefficient.

  \item Sample independent Gumbel noise $g_i \sim \text{Gumbel}(0, 1)$ for each weight, rank candidates by $\log w_i + g_i$ in descending order, and select the top $k$ candidates to form the evaluation batch. Each selected candidate is removed from the pool after selection.
\end{enumerate}

Conditional on the generated pool, the ordered batch follows the corresponding weighted without-replacement distribution, retaining the acquisition weighting and avoiding duplicate selections in parallel evaluation pipelines.

\section{Ablation Studies}
\label{app:ablation}

These regimes stress different parts of the framework. \texttt{autoresearch} is a \emph{multi-turn code-editing} task: the agent maintains a research state, repeatedly edits a persistent artifact \texttt{train.py}, and can change the representation of the search space as new mechanisms are discovered. Small-molecule and CDRH3 design are closer to \emph{single-step proposal} tasks: at each BO round, the LLM proposes or parameterises a candidate pool, the surrogate acquisition selects from that pool, and the expensive oracle evaluates the selected designs.

The ablations therefore have two complementary purposes. First, in \S\ref{app:abl-pure-llm}, we push the boundary of a pure current-agent research loop on nanoGPT by removing the calibrated BO value entirely. This tests how far an LLM agent can go when it can read its own logs and reflect, but cannot search against a surrogate posterior. Second, in \S\ref{app:abl-ldm}, we keep the value model and study LDM test-time search itself. On nanoGPT this means varying the internal LDM-TTS settings, such as inner search budget, the discovery mechanism, and acquisition choice. On molecules and CDRH3, it means asking whether open-source LLM proposers benefit from larger proposal and acquisition-selection budgets in single-step design rounds.

\subsection{Pure LLM-based research loop}\label{app:abl-pure-llm}

This ablation asks whether the LLM research loop alone is sufficient. Operationally, it is the $\eta=0$ limit of Eq.~\eqref{eq:core}: the agent still reads the history, edits \texttt{train.py}, launches experiments, and keeps improvements, but it does not receive a BO posterior, an acquisition score, or an uncertainty-directed suggestion for what to try next. We deliberately make this a strong LLM-only baseline rather than a straw man: it is meant to push the boundary of what a current advanced agent system can do on this task. The only extra scaffolding is a short reflection instruction that asks the Codex agent to monitor the result log as a whole, ask whether the current iteration verifies the previous hypothesis, extract any new mechanistic insight, and plan the next iteration. This is not human-in-the-loop steering of the scientific direction; it is an end-to-end LLM workflow whose context explicitly asks the agent to behave like a careful experimentalist. Thus the ablation tests the best version of the claim that an LLM, supplied with its own experimental transcript and enough task knowledge to edit the architecture and optimiser, can bootstrap an autonomous research process without an explicit value model.

\begin{figure}[t]
    \centering
    \includegraphics[width=\linewidth]{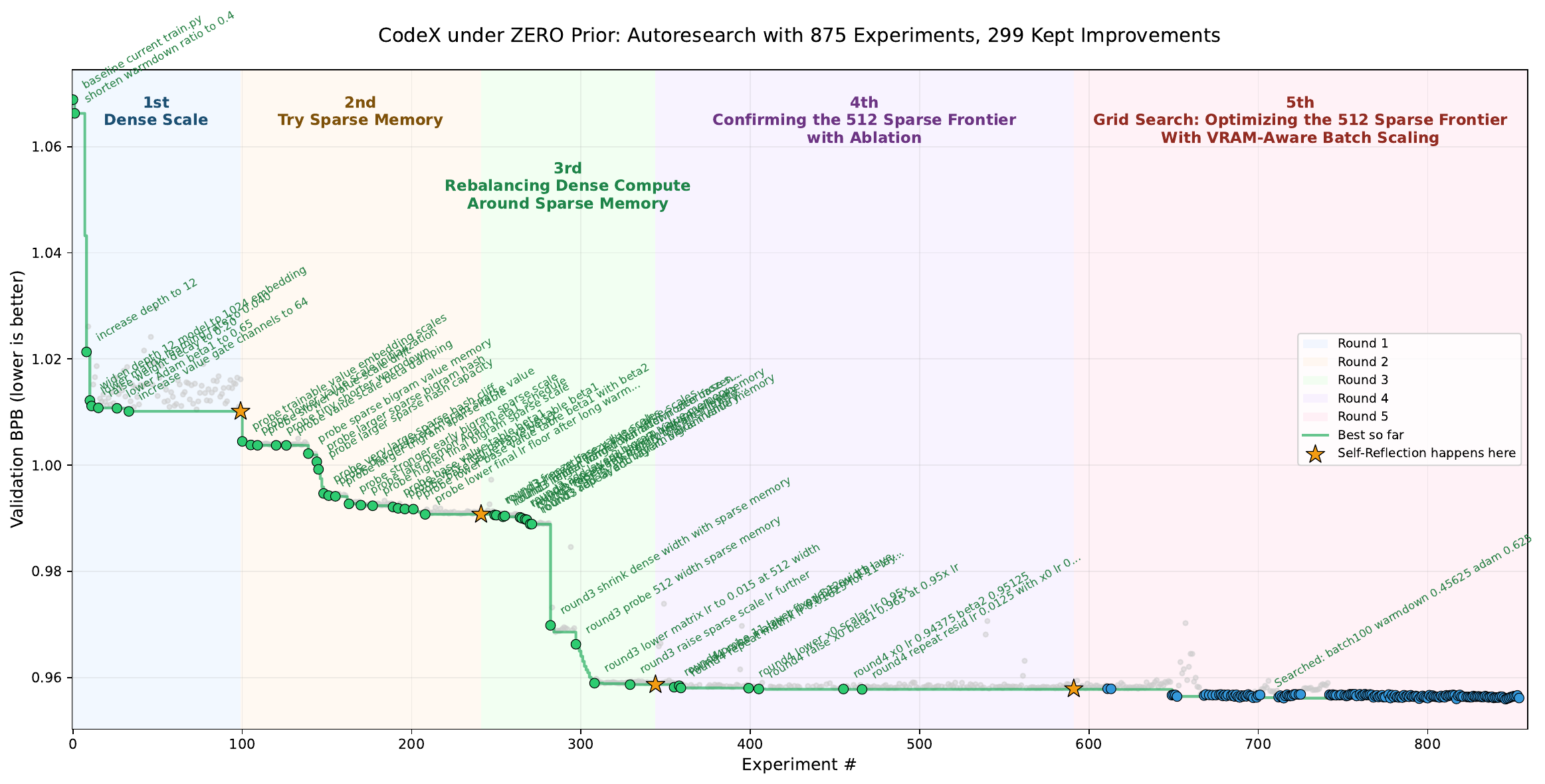}
    \caption{\textbf{Pure LLM research loop on \texttt{autoresearch}.} Validation \texttt{val\_bpb} (lower is better) across $875$ no-BO, no-acquisition experiments. Grey points are individual trials and the green step curve is the best value found so far; yellow stars mark self-reflection checkpoints where the agent summarises the ledger and enters a new research regime. The coloured bands show the loop's staged progress: dense scaling, sparse-memory invention, dense-compute rebalancing around sparse memory, ablation-based confirmation of the $512$-sparse frontier, and final VRAM-aware batch/schedule search. The curve demonstrates that a pure LLM loop can do real sequential research, but also its ceiling: after the large regime shifts it plateaus near $\texttt{val\_bpb}\approx0.956$, above the LDM result of $0.93421$ in Figure~\ref{fig:autoresearch}.}
    \label{fig:abl-pure-llm}
\end{figure}

Figure~\ref{fig:abl-pure-llm} shows that the pure LLM loop is not inert. It discovers several coherent research phases: first dense scaling, then sparse memory, then a rebalance of dense compute around the sparse memory idea, followed by confirmation and grid search around the $512$-sparse frontier. The best-so-far curve falls from the initial $\texttt{val\_bpb}\approx1.069$ to $\approx1.01$ after the first phase, to $\approx0.991$ after the second reflection, and to $\approx0.959$ after the sparse-frontier breakthrough. In other words, a modern code agent with a task-specific prompt and access to its own run log can already produce human-readable, scientifically reasonable progress: it can form hypotheses, translate them into code edits, test them, preserve improvements, and revise its research state.

The archived loop logs show what this ``research work'' consisted of. The agent did not merely enumerate nearby hyperparameters. After each phase it wrote a reflection report: it compressed the ledger into mechanistic claims, labelled memories by regime and mechanism, retrieved positive and negative examples, and used the labelled memories to choose the next experiment family. Table~\ref{tab:pure-llm-trace} summarises the resulting trajectory. The important point is the \emph{statefulness}: the loop changes what it believes the problem is. It begins with ordinary dense scaling, decides that dense capacity has saturated under the five-minute budget, invents sparse associative value memory as a cheaper capacity axis, then shrinks the dense core so that the sparse-memory model can train in time, and finally switches to repeat-first, VRAM-aware frontier tuning once improvements fall into the noise band.

\begin{table}[t]
    \centering
    \footnotesize
    \begin{tabular}{p{0.11\linewidth}p{0.20\linewidth}p{0.37\linewidth}p{0.14\linewidth}}
        \toprule
        Phase & Working hypothesis & Evidence generated by the loop & Best \texttt{val\_bpb} \\
        \midrule
        Dense scaling & Capacity is the early bottleneck, but only while the model can still be trained inside the fixed budget.
        & Depth $10$ and $12$ give large drops; widening the depth-$12$ model helps; depth $14$ worsens near the memory ceiling. The reflection turns this into the rule: dense capacity helps until the update budget becomes the bottleneck.
        & $1.010136$ \\
        Sparse value memory & Dense capacity is saturated, but cheap associative capacity may still help if routed through a bounded value path.
        & Trainable value scales, sparse hashed bigram memory, larger hash tables, decorrelated trigram memory, and phase-specific damping improve the frontier; broader routing and optimiser rewires are recorded as negative memories.
        & $0.990738$ \\
        Dense-compute rebalance & Sparse memory works best when the dense core is smaller and no longer steals the fixed training budget.
        & Causal ablations show trigram memory, early bigram memory, and trainable sparse scales matter. Shrinking dense width to $640$ and then $512$ creates the large breakthrough, after which matrix LR, final LR floor, and sparse-scale LR are retuned.
        & $0.958666$ \\
        Frontier confirmation & The $512$ sparse-memory regime is a narrow compute/update-budget island, and remaining gains require repeats.
        & Widths below and above $512$ lose; removing early ngram memory increases throughput but hurts validation, showing the memory path is causal. Restored-anchor repeats estimate a noise floor of roughly $3$--$5\times10^{-4}$ BPB.
        & $0.957766$ \\
        VRAM-aware batch tuning & Spare VRAM should buy batch/throughput and coupled schedule tuning, not a return to dense scaling.
        & The context retrieves the dense-shrink successes and negative memories against larger dense models, then grid-searches batch size, warmdown, final LR, Adam damping, and weight decay while logging steps, tokens, MFU, and loss slopes.
        & $0.955923$ \\
        \bottomrule
    \end{tabular}
    \caption{\textbf{Mechanistic trace of the pure LLM research loop.} Each row is a phase extracted from the loop logs. The LLM acts like a lightweight experimental scientist: it writes a hypothesis, tests code-level interventions, records both successes and failures, updates a regime label, and carries those memories into the next phase.}
    \label{tab:pure-llm-trace}
\end{table}

This trace is the strongest evidence that the baseline is a genuine research loop. It performs causal ablations (for example, removing trigram memory, early bigram memory, or sparse scales), estimates the noise floor by repeating restored anchors, and eventually augments the ledger with trajectory diagnostics such as steps, tokens, MFU, train/eval gap, and loss slopes. It also learns what not to do: the contexts explicitly retrieve failed routing rewires, failed dense-width probes, crashes, and repeated optimiser mismatches as negative memories. In short, the loop accumulates procedural knowledge, not just a list of winning commits.

The limitation is equally visible. After the third breakthrough, the remaining $\approx500$ experiments are mostly local confirmation and grid search around the same frontier. The loop continues to keep small improvements, but the envelope is nearly flat, ending around $\texttt{val\_bpb}\approx0.956$. This is better than the no-discover Karpathy baseline in Figure~\ref{fig:autoresearch}, which stalls at $0.9767$, but it is still far from the LDM curve, which reaches $0.93421$ under the same H100 five-minute-run setting. The pure LLM loop can reason about its own transcript and name plausible new regimes, but between those reflections it has no calibrated estimate of where improvement is likely, where uncertainty remains high, or when a plateau is evidence that the current boundary is exhausted.

This is the ablation's main conclusion. The bottleneck is not proposal expressivity: the LLM can write useful training code, exploit prior knowledge about how the architecture can be modified, and remember enough of the transcript to pursue multi-stage ideas. The bottleneck is value. Without $\mu_t$, $\sigma_t$, and the acquisition $\acq_t$, \textbf{the run log remains a narrative memory rather than a searchable value landscape}. LDM supplies exactly that missing object: the surrogate turns the same history into a posterior over the program space, and the acquisition turns that posterior into a decision value for both local refinement and boundary-moving discover steps. The pure LLM loop therefore validates the reservoir side of the framework and shows how far current agent systems can already go, while also showing why the BO-value side is necessary for the performance gains in the main experiment.

\subsection{Test-time Search for LDM}
\label{app:abl-ldm}

This ablation asks whether LDM exhibits the same test-time scaling behaviour that motivates inference-time search in language-model reasoning \citep{snell2025scaling}, but in the harder setting where the value is an expensive scientific objective approximated by a surrogate. We keep the outer evaluation budget fixed and vary only the cheap inner-loop budget of LDM-TTS. Concretely, the inner budget has four knobs: (i) how many candidate programs the LLM proposes at each outer iteration, (ii) how many hypothesis or refinement branches the BO inner loop expands for each candidate before scoring, (iii) how many of those scored candidates the acquisition retains for the next round, and (iv) which acquisition function scores them. The detailed \texttt{autoresearch} test-time-search sweep, with its N4H4/N8H8 budgets and EI vs. posterior-mean comparison, already appears in Section~\ref{sec:ablation-main}; here we keep only the molecule and CDRH3 single-step budget sweeps.

The theory in \S\ref{sec:theory-main} predicts exactly this dependence: increasing the near-acquisition mass available to the tilted sampler reduces the LDM sampling shortfall term, but only if the added candidates are also evaluated by a calibrated value model rather than by language plausibility alone.

\subsubsection{Small-molecule drug discovery}
\label{app:abl-molecule}

The molecular ablation repeats the same question in a chemically open-ended space. Here the expensive evaluation is a two-objective KRAS G12D score, and performance is the dominated Pareto hypervolume under the Vina/activity objectives of \S\ref{sec:cs-molecule-results}. We use the Direct-Softmax LDM pipeline with a Qwen3.5-9B proposer and vary two inner-loop budgets. A label \texttt{proposer}\{K\}\_\texttt{bo}\{M\} means that the LLM proposes a pool of $K$ candidate SMILES strings and the BO/EHVI inner loop is allowed to score up to $M$ candidates and retain the best-scoring ones before the next expensive molecular evaluation is chosen. Thus $K$ controls the breadth of the LLM reservoir draw, while $M$ controls how much of that breadth is exposed to the calibrated multi-objective value.

\begin{figure}[htbp]
    \centering
    \includegraphics[width=0.8\linewidth]{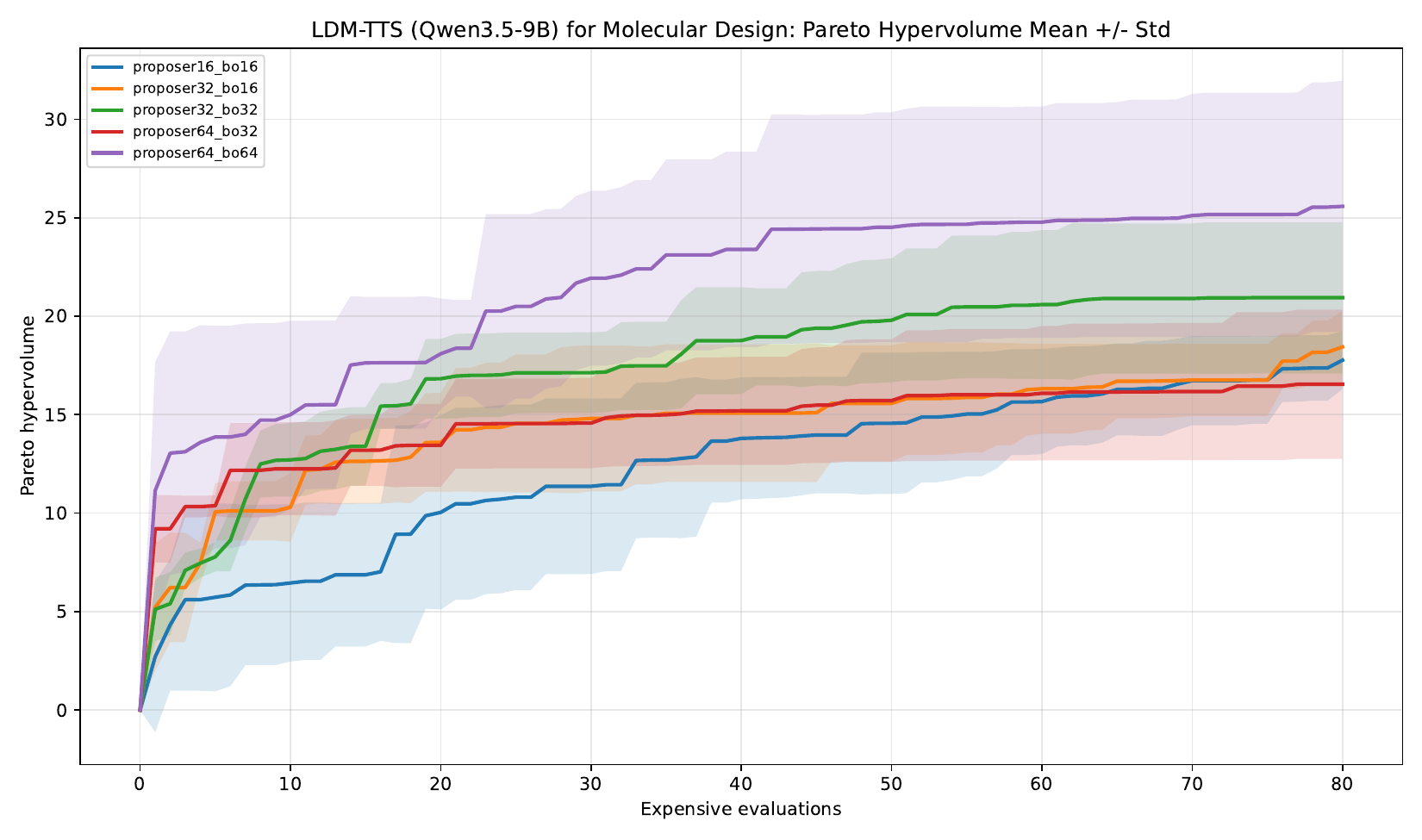}
    \caption{\textbf{Test-time search scaling for molecular design.} Pareto hypervolume (higher is better) over expensive molecular evaluations for Qwen3.5-9B Direct-Softmax LDM-TTS. Curves show the mean over independent runs, with shaded bands denoting one standard deviation; legend entries report the LLM proposal budget and BO/EHVI acquisition-scoring budget. Balanced scaling of both budgets improves hypervolume most: \texttt{proposer64\_bo64} reaches the strongest final Pareto front, while increasing proposals without matching acquisition bandwidth gives smaller or unstable gains.}
    \label{fig:abl-molecule-tts}
\end{figure}

\begin{figure}[htbp]
    \centering
    \includegraphics[width=\linewidth]{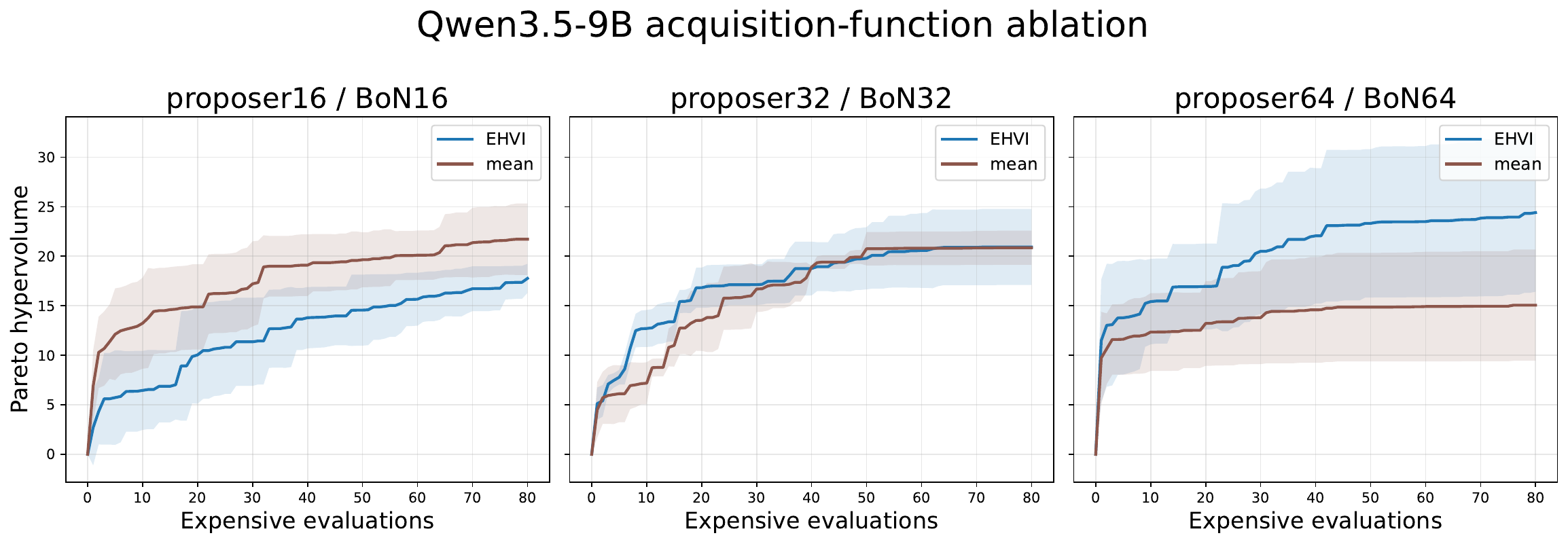}
    \caption{Abaltion on acquisition function over different search budgets for molecular design.}
    \label{fig:placeholder2}
\end{figure}

Figure~\ref{fig:abl-molecule-tts} shows that test-time compute also scales in the molecular setting, but only when both inner budgets grow together. Increasing the LLM proposal budget from $16$ to $32$ improves the curve, and increasing the BO/EHVI scoring budget from $16$ to $32$ improves it further: \texttt{proposer32\_bo32} dominates the smaller-budget settings through most of the run. \texttt{proposer64\_bo64} makes the largest early jump and keeps the highest final hypervolume; the asymmetric \texttt{proposer64\_bo32} does worse, which shows that broad SMILES generation is useful only when the EHVI side has enough bandwidth to filter the resulting candidates.

Figure~\ref{fig:placeholder2} adds the acquisition-function dimension. At small budgets, mean- or UCB-style acquisitions (a $0.5/0.5$ weighted scalarisation of the Vina and activity objectives) outperform the strict EHVI; as $n$ grows, EHVI overtakes them and continues to improve, while mean-style acquisitions degrade or stagnate. The mechanism is bandwidth: EHVI is built on the strict Pareto frontier and therefore needs a large screening budget to be effective, because multi-objective improvements correspond to sparse non-dominated points. Mean-style acquisitions collapse the two objectives into one scalar, so at small budgets the per-step randomness from a thin screening budget partially substitutes for the missing Pareto signal, smoothing out the optimisation and keeping the curve flat; once the budget grows, EHVI's strict-screening advantage takes over.

\subsubsection{Antibody CDRH3 design}
\label{app:abl-antibody}




\begin{figure}
    \centering
    \includegraphics[width=\linewidth]{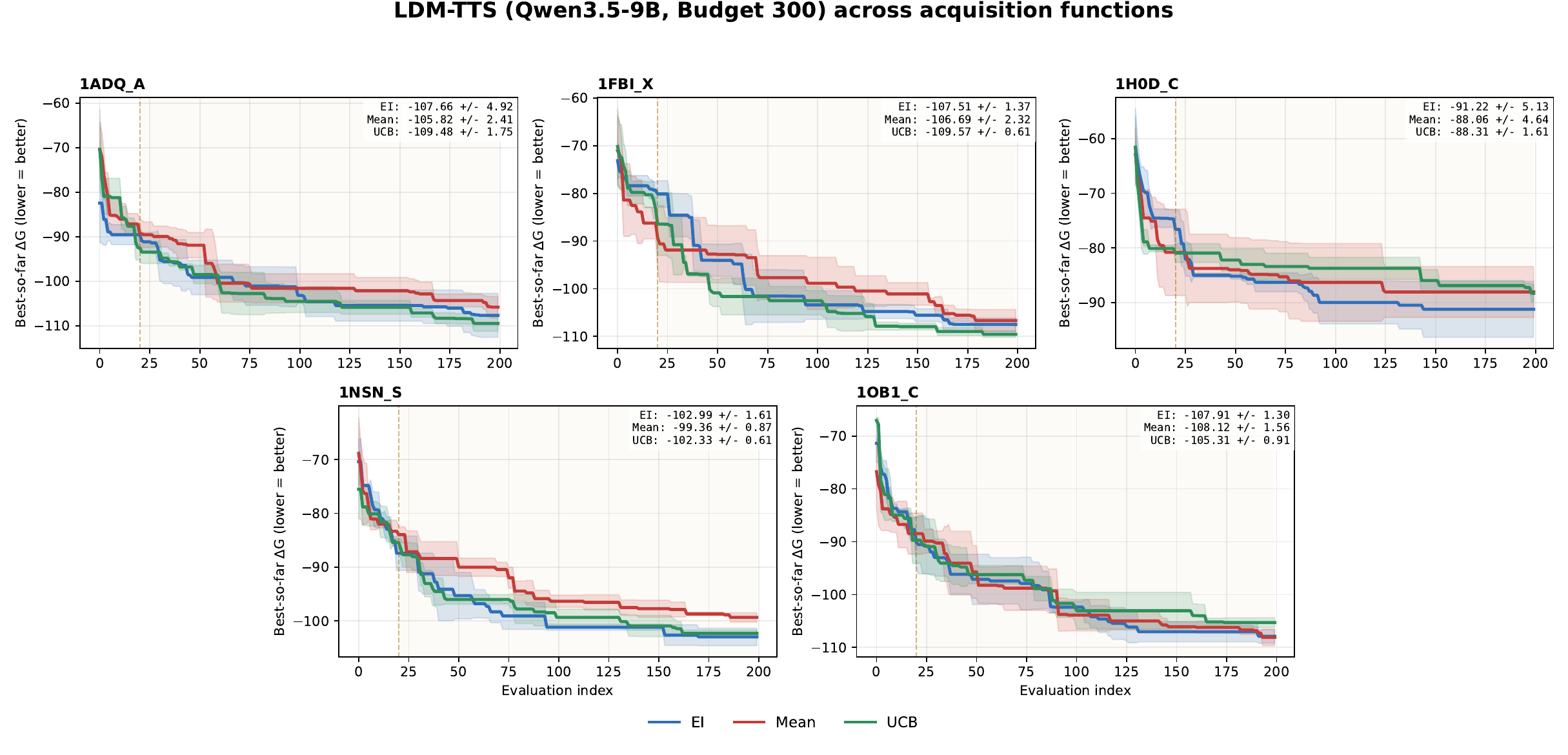}
    \caption{Comparison of different acquisition functions for LDM-TTS with a fixed proposer budget of 300 on Antibody CDRH3 design.}
    \label{fig:abl-antigen-acquisition}
\end{figure}

\begin{figure}
    \centering
    \includegraphics[width=\linewidth]{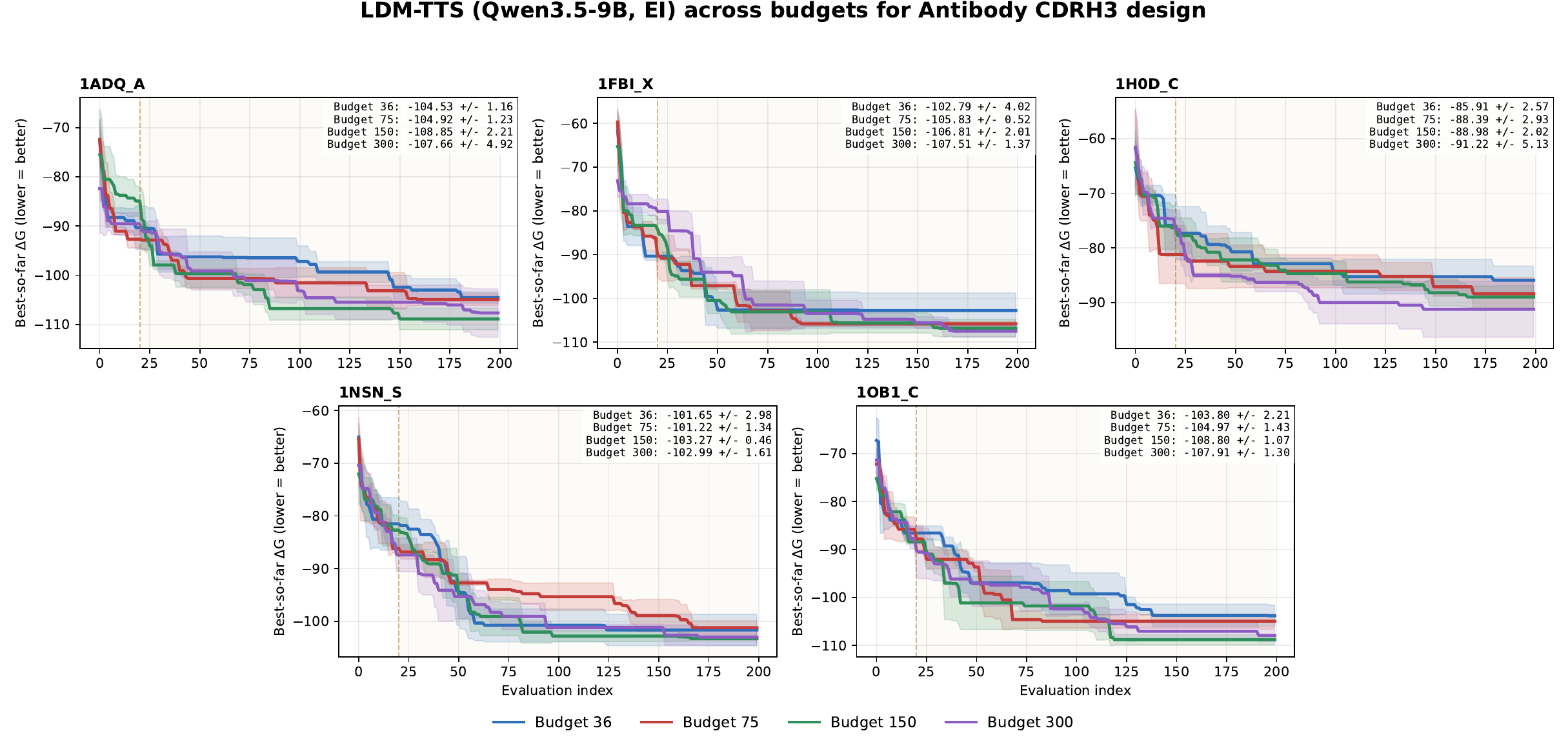}
    \caption{Effect of LDM-TTS proposer budget scaling on Antibody CDRH3 design. Each budget denotes the number of candidate sequences or sequence-neighbourhood samples generated before an expensive oracle evaluation. Larger inference-time proposal budgets improve the attainable binding-energy plateau across antigen targets.}
    \label{fig:abl-antigen-tts}
\end{figure}

Figure~\ref{fig:abl-antigen-acquisition} compares different acquisition functions on the five antigens at a fixed proposer budget of $300$. LDM's calibrated acquisitions (EI, UCB and variants) are broadly better than mean-style and random baselines, mirroring the molecular trend above. The result shows that a calibrated acquisition still extracts value from a broader candidate pool even when the open-source LLM has only weak biological priors, and qualifies the §6.5.2 antibody observation: hyperparameter sweeps have only weak sensitivity in this regime, but the choice of acquisition function still matters once the BO side has enough candidates to screen.

The CDRH3 ablation is the sequence-design counterpart of the molecular budget-scaling experiment. The outer oracle budget is again fixed, but here the design space is a known, sparse, discrete sequence space rather than an open-ended SMILES space. We therefore ask a narrower question: does the LDM-TTS proposer budget on its own improve the best sequence found? A label \texttt{Budget} \(N\) denotes an inner proposer budget of \(N\) candidate CDRH3 sequences or neighbourhood samples generated before the next expensive oracle query. The CDRH3 ablation fixes the BO inner-loop side at its maximum feasible setting: after validity checks, deduplication, and history filtering, the acquisition is allowed to score all remaining candidates. So the experiment changes only the cheap LLM-side proposal breadth, not the number of oracle evaluations.

Figure~\ref{fig:abl-antigen-tts} shows the budget effect across antigen landscapes, which is weaker than that observed in molecular design. The smallest budget, \texttt{Budget} $36$, improves quickly at the beginning but usually reaches a higher plateau. Increasing the proposer budget to $75$ already improves the final binding energy on most targets, and the larger $150$ and $300$ budgets give the best final result on all five antigens. The strongest setting is target-dependent: budget $150$ is best on 1ADQ\_A, 1NSN\_S, and 1OB1\_C, while budget $300$ is best on 1FBI\_X and 1H0D\_C. This non-monotonicity is expected in a rugged sequence landscape with finite seeds and a noisy surrogate: after the acquisition has enough candidates to cover the useful local neighbourhoods, further expansion can add redundant or lower-quality regions as well as genuinely novel ones. The main conclusion is therefore not that the largest budget always wins, but that the low-budget regime is systematically acquisition-starved and that moderate-to-large test-time pools substantially lower the attainable binding-energy plateau.

This result complements the main CDRH3 comparison in Figure~\ref{fig:cs-antibody-results}. There, policy-mode LDM outperforms direct token-level generation because the open-source LLM has a weaker biological sequence prior than it has for code or SMILES; emitting a few complete CDRH3 loops does not expose enough useful reservoir mass for BO to exploit. The ablation here isolates what happens once the LDM pipeline can expand the candidate pool: additional inference-time candidates become useful precisely because they are not selected by language plausibility alone. They are filtered by the surrogate acquisition, which converts a larger discrete reservoir into stronger best-so-far binding energy without spending extra oracle calls.

Together, the molecule and CDRH3 ablations show that test-time scaling for single-step design is useful only when the added samples are exposed to a calibrated value model. Richly pretrained SMILES models can absorb large direct proposal batches; sparse biological sequence spaces need enough sequence or neighbourhood coverage before acquisition filtering becomes effective. This complements the nanoGPT ablation: in multi-turn code search, LDM-TTS improves by searching a changing program landscape; in single-step design, it improves by scaling the candidate pool a calibrated acquisition can choose from.

\begin{figure*}[t]
    \centering
    \begin{subfigure}[t]{0.48\textwidth}
        \centering
        \includegraphics[width=\linewidth]{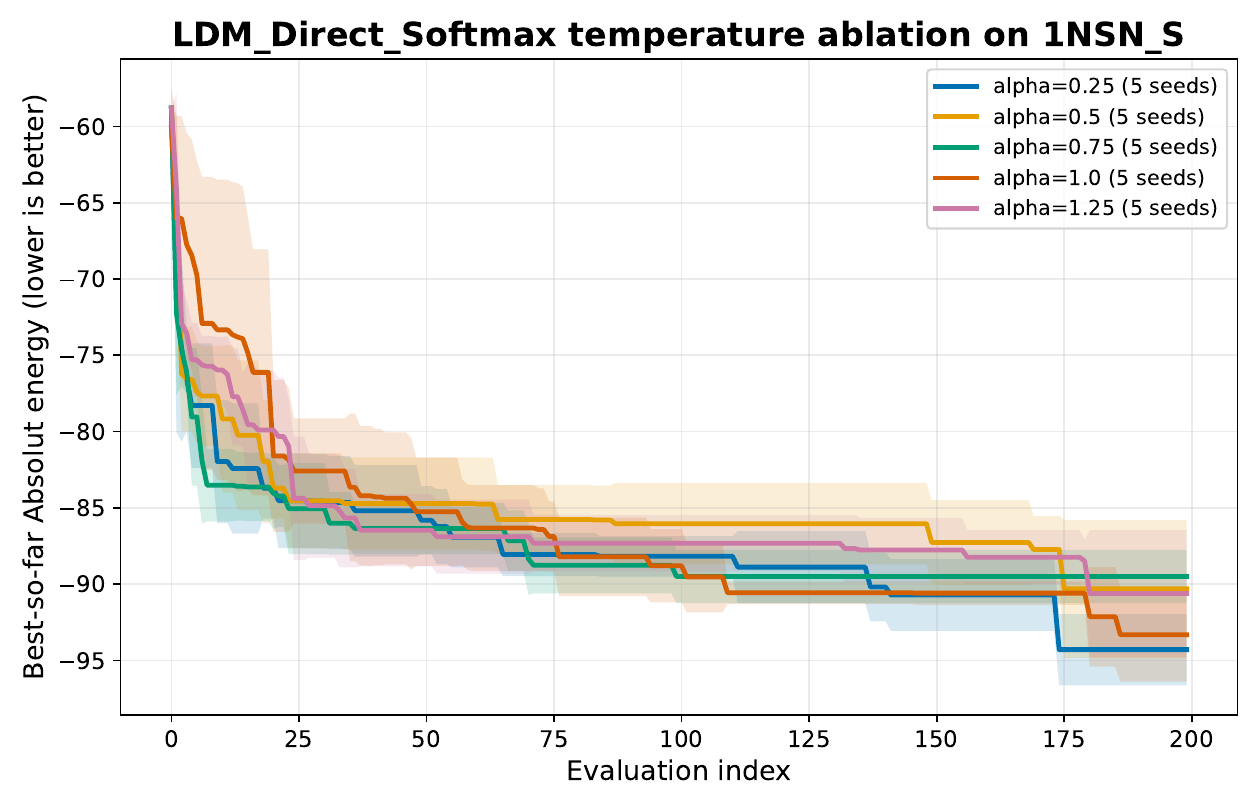}
        \caption{Effect of the LLM sampling temperature $T_{\mathrm{LLM}}$ in LDM\textsubscript{Direct}\textsubscript{Softmax}; for this controlled base-measure-exponent ablation, $T_{\mathrm{LLM}}=1/\alpha$. The acquisition temperature is fixed to $\eta=1$.}
        \label{fig:ablation-direct-temperature}
    \end{subfigure}
    \hfill
    \begin{subfigure}[t]{0.48\textwidth}
        \centering
        \includegraphics[width=\linewidth]{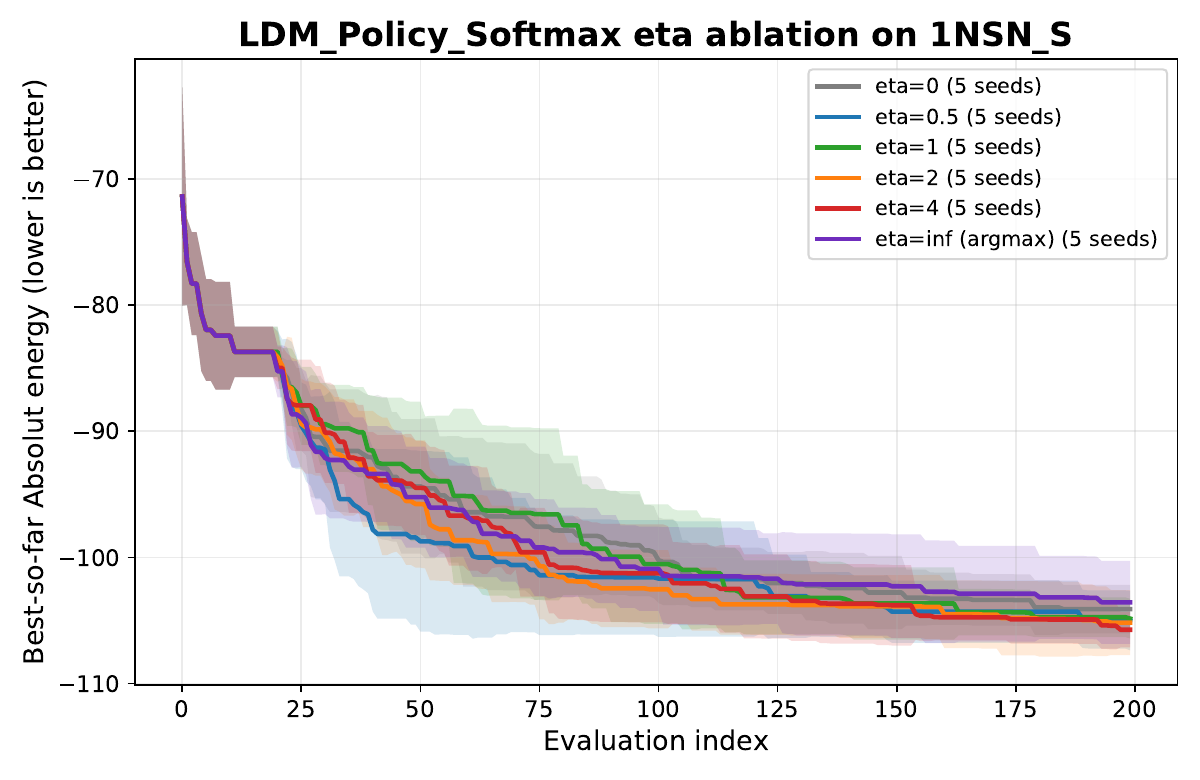}
        \caption{Effect of the acquisition temperature $\eta$ in LDM\textsubscript{Policy}\textsubscript{Softmax}. The LLM sampling temperature is fixed to $T_{\mathrm{LLM}}=1$ (and hence $\alpha=1$ in the controlled exponent setting).}
        \label{fig:ablation-policy-eta}
    \end{subfigure}

    \caption{Hyperparameter ablations on the antibody-design task $1\mathrm{NSN\_S}$. Each curve reports the mean best-so-far Absolut energy over five random seeds, with the shaded region showing one standard deviation. Lower values are better.}
    \label{fig:hyperparameter-ablations}
\end{figure*}

\subsection{Hyperparameter ablations for LDM.}
\label{app:abl-hyper-ldm}

We here supplement the sensitivity analysis of the two LDM variants on antigen $1\mathrm{NSN\_S}$ using five random seeds. For LDM\textsubscript{Direct}\textsubscript{Softmax}, we vary the LLM sampling temperature $T_{\mathrm{LLM}}$$\in\{0.25,0.5,0.75,1,1.25\}$, with $T_{\mathrm{LLM}}=1/\alpha$ for this controlled base-measure-exponent ablation, while fixing the acquisition temperature to $\eta=1$. For LDM\textsubscript{Policy}\textsubscript{Softmax}, we fix $T_{\mathrm{LLM}}=1$ and vary the acquisition temperature $\eta\in\{0,0.5,1,2,4,\infty\}$. Here, $\eta=0$ corresponds to uniform selection among the policy representatives, while $\eta=\infty$ is equivalent to acquisition-function argmax. The per-temperature curves are plotted in Figure~\ref{fig:hyperparameter-ablations}; we discuss their interpretation below.

This overall behaviour is consistent with the discussion in Section~\ref{sec:ablation-main}. Figure~\ref{fig:hyperparameter-ablations} shows the per-temperature curves. The open-source LLM lacks biological sequence domain priors, so its autoregressive proposals over the $20^{11}$ CDRH3 space are effectively near-random; in the policy variants the LLM already implements an implicit form of acquisition-aware importance sampling by expanding the candidate neighbourhood around promising incumbents rather than committing to a single sequence, which dampens the marginal effect of further reweighting at test time. Consequently, global sweeps over the sampling temperature $T_{\mathrm{LLM}}$ and the acquisition temperature $\eta$ on antigen $1\mathrm{NSN\_S}$ produce only weak sensitivity: Direct-Softmax slightly favours low $T_{\mathrm{LLM}}$ and Policy-Softmax shows a marginal edge near $\eta=4$, but all error bars overlap. Adjusting the two temperature knobs does not significantly change the final binding energy, which is consistent with the observation in \S\ref{sec:cs-antibody-results} that the policy-mode LDM gain over direct generation comes from the LLM's structural priors rather than from acquisition reweighting alone.

\begin{figure}[t]
\centering
\includegraphics[width=1\textwidth]{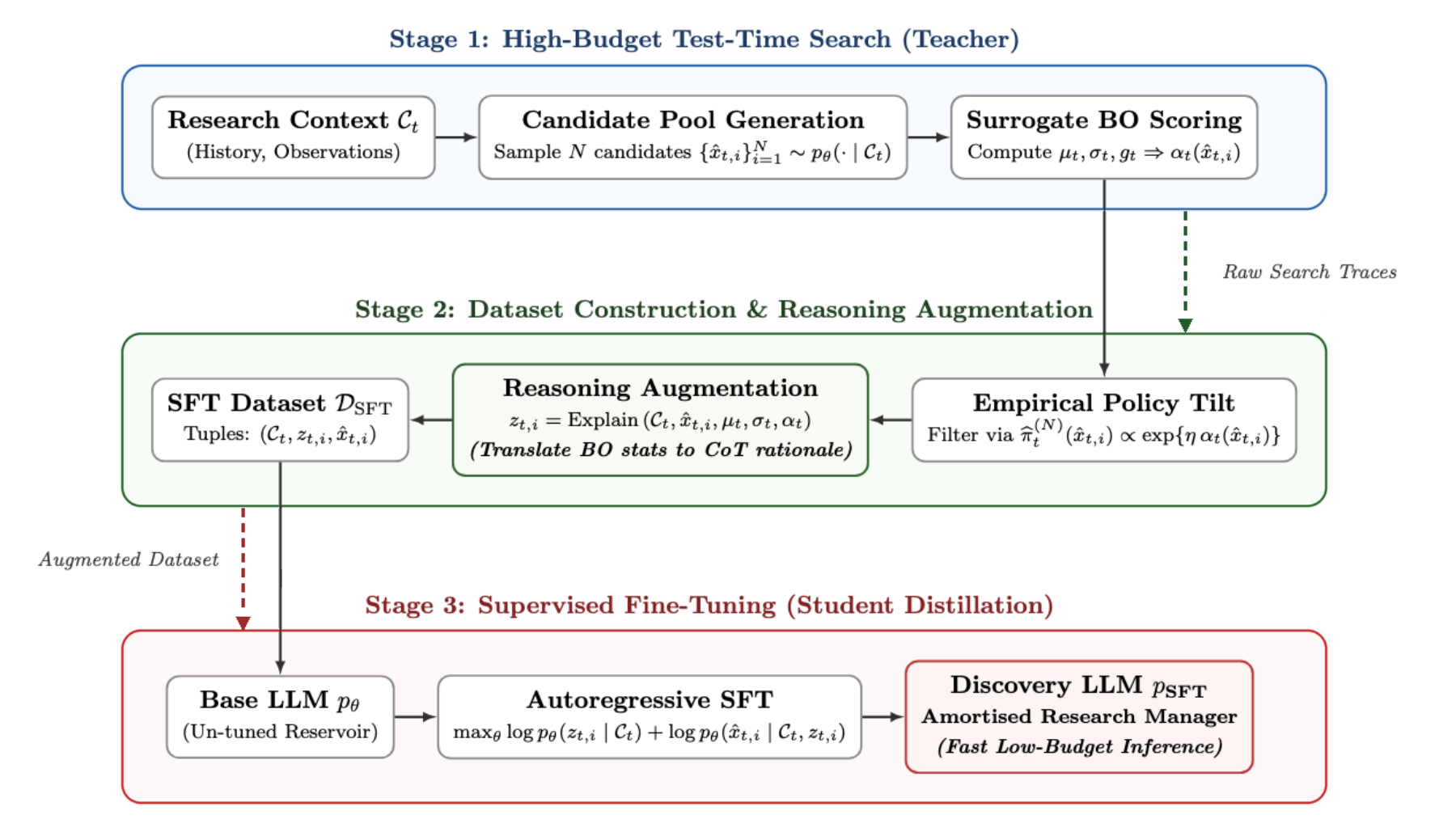}
\caption{\textbf{Overview of the LDM training pipeline workflow.}
\textbf{Stage 1:} High-budget test-time search generates candidate pools scored
by surrogate acquisition values ($\alpha_t$). \textbf{Stage 2:} High-value
candidates are filtered via an empirical policy tilt $\widehat{\pi}_t^{(N)}$,
and numerical BO statistics are translated into semantic reasoning rationales
($z_{t,i}$) via reasoning augmentation. \textbf{Stage 3:} The base model is
fine-tuned on the CoT target $(z_{t,i}, \hat{x}_{t,i})$ using standard
next-token prediction, compiling the high-budget search policy into an
amortised student model ($p_{\text{SFT}}$) capable of fast, low-budget
inference.}
\label{fig:app-training-pipeline}
\end{figure}

\section{Additional Fine-Tuning Analyses}
\label{app:finetuning-details}

Section~4.4 describes how high-budget LDM test-time search is distilled into a
student proposer, and Section~\ref{sec:finetune-experiments} reports the main fine-tuning results on
AutoResearch, antibody design, and small-molecule discovery. This appendix
provides complementary evidence that is not repeated in the main text.

\subsection{Fine-Tuning as Value Distillation}
\label{app:finetune-values}

Modern LLM fine-tuning paradigms can be distinguished by the
decision-theoretic value they distill into the model weights. Chat LLMs distill
human preference value, reasoning LLMs distill verification value, and
Discovery LLMs distill epistemic acquisition value. The object distilled by
LDM is therefore not the measured reward $R(x)$ alone, nor the static
memorisation of a high-reward molecule, protein, or program. Instead, the goal
is to distill the acquisition strategy that decides which scarce experiment is
worth running under uncertainty. Figure~\ref{fig:app-training-pipeline} summarises this three-stage training pipeline: high-budget test-time search, reasoning-augmented dataset construction, and supervised distillation into the student proposer.

Table~\ref{tab:reasoning-augmentation} illustrates how numerical acquisition
evidence is translated into semantic supervision across the three discovery
domains.

\begin{table}[H]
\centering
\small
\begin{tabular}{@{}p{0.18\linewidth}p{0.36\linewidth}p{0.40\linewidth}@{}}
\toprule
\textbf{Domain} & \textbf{Acquisition evidence (Numerical)} & \textbf{Example augmented rationale (Semantic)} \\
\midrule
\texttt{autoresearch} code search &
Best run \texttt{state\_0019} reaches \texttt{val\_bpb} $0.9811$ by lowering \texttt{TOTAL\_BATCH\_SIZE} and tuning \texttt{ASPECT\_RATIO}; several near-best runs also lower the batch size. The current parent already sits at the minimum batch size with only $5$ active parameters and no improvement over the last $4$ rounds. &
Lowering the total batch size has been a robust positive move---more steps fit inside the fixed $30$\,s budget---but that lever is now exhausted and the active parameters have \textbf{plateaued over the last four rounds}, so the set is near its ceiling. Rather than re-tuning saturated knobs, I should \textbf{activate a currently-inactive parameter} (e.g.\ \texttt{MATRIX\_LR}) to open a new optimisation dimension. \\
\addlinespace
Antibody CDRH3 design &
Best sequence \texttt{ADGHTKQNPRW} at $-64.28$; the last four candidates share the fixed prefix \texttt{ADGHTKQNPR} and scan only the final position (W $-64.28$, Y $-62.69$, A $-59.38$, S $-57.96$). &
The history is a single-point scan of the last residue on an already-refined scaffold, with the bulky aromatic W best so far. The informative move is to keep the refined prefix and test an \textbf{untested residue of similar hydrophobic character}---leucine (L)---which may reproduce W's favourable hydrophobic contact; propose \texttt{ADGHTKQNPRL}. \\
\addlinespace
Small-molecule discovery &
$29$ molecules evaluated and the loop is flagged \texttt{stalled}; the top actives ($\sim\!8.1$) share one isoxazolone/thiazolone fused core with isopropoxy and cyano groups, docking $\sim\!-8$. &
Because the loop is stalled and near-duplicates are disallowed, adding a methyl to a known winner carries no information. The higher-value move is a \textbf{recombination or scaffold hop}---e.g.\ combining the pyridine-ring variant with the morpholine side chain into a molecule not yet seen---so the candidate probes a genuinely new region of the Pareto frontier rather than exploiting a saturated one. \\
\bottomrule
\end{tabular}
\vspace{0.5em}
\caption{\textbf{Reasoning augmentation turns numerical acquisition labels into semantic training signals.}
The rationales are faithful (translated and condensed) excerpts of reasoning traces generated by the self-hosted DeepSeek V4 Flash teacher inside the high-budget LDM-TTS loop, one per domain. They describe the epistemic value of managing research progress by reading the evaluated history, diagnosing stagnation, and choosing an information-adding move, rather than focusing only on the domain-specific final solution.}
\label{tab:reasoning-augmentation}
\end{table}

Beyond the aggregate translation above, a single deployment-time trace shows the
student \emph{reading} the surrogate signal directly. On the stalled KRAS G12D
loop, it inspects the per-candidate posterior before acting:

\begin{tcolorbox}[colback=black!3,colframe=black!25,boxrule=0.4pt,arc=3pt,
  left=7pt,right=7pt,top=5pt,bottom=5pt]
\small
\emph{Small-molecule discovery:} ``the
strong scaffolds share a cyclopropyl--benzene core with trifluoromethoxy and
nitrile (Vina $\approx-8.1$, activity $\approx7.0$), so I keep that backbone; but
\textbf{the surrogate's activity uncertainty is still high ($\approx0.3$--$0.5$),
so there is room to explore}, so I vary the polar group (N, N-dimethylamide,
sulfonamide) and swap the ring (thiophene, furan, thiazole, pyridine)\ldots'',
then proposing candidates that keep the proven scaffold while probing the
high-uncertainty regions.
\end{tcolorbox}

\noindent The student thus grounds its explore/exploit decision on the
surrogate's predicted mean and uncertainty: it exploits the proven scaffold
while steering exploration toward the coordinates the surrogate is least certain
about, exactly the acquisition behaviour the fine-tuning is meant to distil.

\subsection{Cross-Molecule Transfer without Reasoning Augmentation}
\label{app:finetune-no-augmentation}

We test the minimal form of discovery fine-tuning without an explicit
reasoning target. Training data are collected from high-budget LDM-TTS traces
on a source molecular optimisation task. The Qwen3.5-9B proposer is fine-tuned
on acquisition-weighted examples and then evaluated on a different molecular
optimisation task within the same acquisition-guided LDM loop.

This ablation retains the same LDM state and action schema as the
reasoning-augmented setting, but omits the explicit research-progress
rationale $z_{t,i}$.

This setting shows positive transfer across molecular tasks. The base
Qwen3.5-9B LDM reaches a final hypervolume of $16.489\pm5.867$, whereas the
fine-tuned model reaches $22.279\pm3.828$ under the same evaluation protocol.
Because the discovery data are collected from one molecule and evaluated on
another, the improvement cannot be explained by memorising the final molecules
from the source task. It instead shows that acquisition-weighted fine-tuning
can transfer part of the research-management policy even without an explicit
reasoning channel.

\subsection{Supplementary Quantitative Tables}
\label{app:finetune-tables}

Table~\ref{tab:app-smallmol-cot} retains the G12C results not shown in the
main-text KRAS G12D learning curve. Table~\ref{tab:app-finetune-cot-ablation}
collects the numerical cross-domain comparison underlying the three main-text
figures.

\begin{table}[H]
\centering
\small
\begin{tabular}{@{}lcc@{}}
\toprule
\textbf{Inference policy} & \textbf{KRAS G12C} & \textbf{KRAS G12D} \\
 & HV $\uparrow$ & HV $\uparrow$ \\
\midrule
Qwen3.5-9B base (without\_cot) & $11.64\pm2.63$ & $16.49\pm5.87$ \\
Fine-tuned Qwen3.5-9B (without\_cot) & $14.73\pm3.09$ & $22.28\pm3.83$ \\
DeepSeek V4 Flash (without\_cot) & $18.97\pm1.76$ & $24.55\pm3.59$ \\
DeepSeek V4 Flash (with\_cot) & $19.98\pm2.00$ & $24.07\pm3.12$ \\
\textbf{Fine-tuned Qwen3.5-9B (with\_cot)} & $\mathbf{20.98\pm2.92}$ & $\mathbf{26.66\pm2.61}$ \\
\bottomrule
\end{tabular}
\caption{\textbf{Reasoning-augmentation ablation on small-molecule discovery.}
Final Pareto-front hypervolume at an evaluation budget of 80, averaged over five seeds.}
\label{tab:app-smallmol-cot}
\end{table}

\begin{table}[H]
\centering
\setlength{\tabcolsep}{4pt}
\scriptsize
\begin{tabular}{@{}lcccc@{}}
\toprule
\textbf{Model} & \textbf{CoT} & \textbf{AutoResearch} & \textbf{Antibody} & \textbf{SmallMol} \\
 & & val\_bpb $\downarrow$ & $-E_{\mathrm{bind}}$ $\uparrow$ & HV $\uparrow$ \\
\midrule
Base Qwen3.5-9B & off & $0.9965$ & $102.6$ & $16.49$ \\
Base Qwen3.5-9B & on  & $0.9825$ & $104.4$ & --- \\
LDM-SFT & off & $0.9829$ & $105.1$ & $22.28$ \\
\textbf{LDM-SFT} & \textbf{on} & $\mathbf{0.9788}$ & $\mathbf{105.5}$ & $\mathbf{26.66}$ \\
\midrule
\multicolumn{2}{@{}l}{Qwen3-Coder-30B ref.} & $0.9801$ & --- & --- \\
\bottomrule
\end{tabular}
\caption{\textbf{Cross-domain reasoning-augmentation ablation.}
Terminal metrics for the three fine-tuning studies.}
\label{tab:app-finetune-cot-ablation}
\end{table}

\subsection{In-Distribution Single-Task Fit}
\label{app:finetune-indistribution}

Before evaluating the distilled policies in the acquisition loop, we verify
that each single-task model fits its reasoning-augmented training data. We
track training and held-out losses for \texttt{autoresearch} program search,
small-molecule KRAS design, and antibody CDRH3 design. All three losses
converge within a few dozen steps, while the held-out curves closely track the
training curves. The absolute held-out losses differ by domain: $0.06$ for
\texttt{autoresearch}, $0.38$ for small-molecule design, and $0.70$ for
antibody CDRH3 design. These differences reflect the different action spaces
and target lengths, rather than a failure to fit the data.

\begin{figure}[H]
\centering
\includegraphics[width=0.95\linewidth]{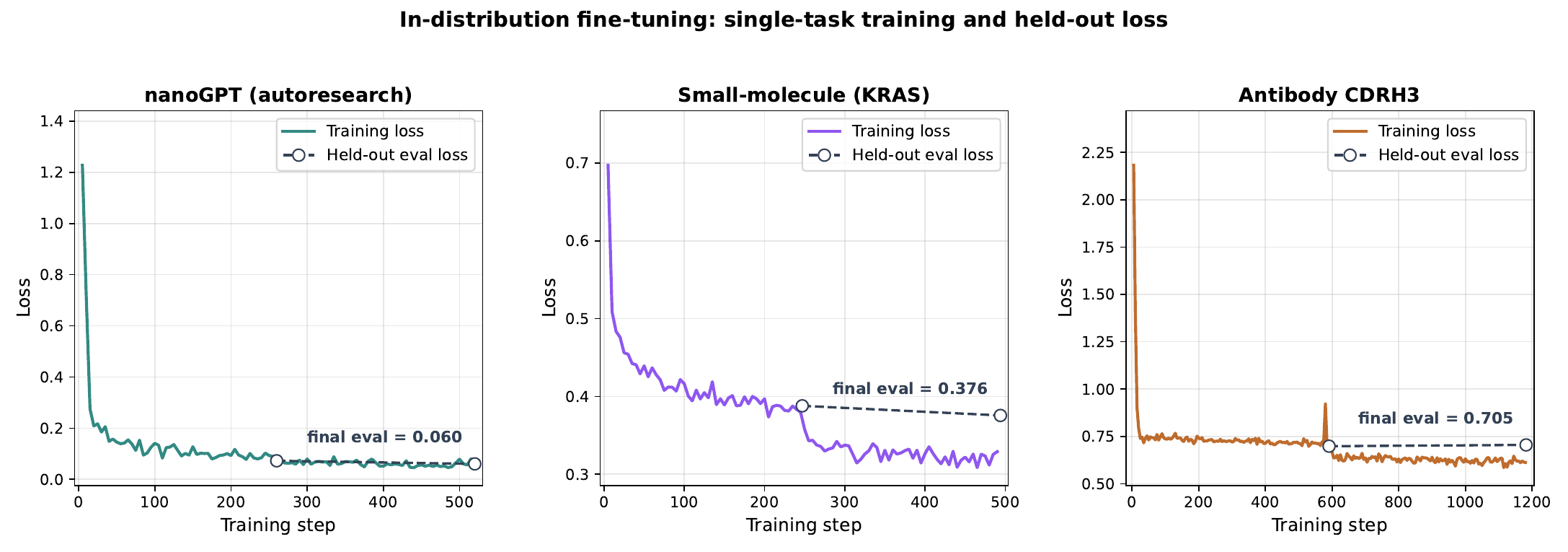}
\caption{\textbf{In-distribution single-task fine-tuning.} Training loss
(solid) and held-out evaluation loss (markers) versus training step for the
\texttt{autoresearch}, small-molecule, and antibody CDRH3 policies. All three
converge within a few dozen steps, and the held-out loss tracks the training
loss, confirming a faithful in-distribution fit before acquisition-loop
evaluation.}
\label{fig:app-indist-curves}
\end{figure}

\subsection{Case-level Out-of-distribution Generalisation}
\label{sec:finetune-ood}

To test whether it learns
a \text{transferable} acquisition policy rather than memorising antigen-specific patterns, we hold two of the
five antibody targets (\texttt{1FBI\_X}, \texttt{1H0D\_C}) out of training entirely, fine-tune the mixed-task
model on the other three, and evaluate on the held-out pair inside the same acquisition-guided loop, antigens
whose sequences and binding landscapes were never seen during training.

Figure~\ref{fig:protein-ood-mixed32} shows the out-of-distribution model matches or exceeds the
\emph{in-distribution} model (which \emph{was} trained on these antigens) on both held-out targets, and clearly
beats base Qwen3.5-9B with and without chain-of-thought: it reaches $-110.57$ on \texttt{1FBI\_X}
(base $-109.5$, in-distribution $-113.4$) and $-95.4$ on \texttt{1H0D\_C}, where it even surpasses the
in-distribution model ($-94.6$) and the base ($-90.9$). Since the model never observed these antigens, the gain
cannot be memorisation---the distilled acquisition policy transfers to unseen targets. 

\subsection{Task-level Out-of-distribution Generalisation}
\label{app:task-ood}

The held-out-antigen study of Section~\ref{sec:finetune-ood} still keeps antibody design inside the training
mixture; it only withholds particular antigens. A more demanding question is whether the distilled behaviour
survives when the entire protein task is removed. To answer it we fine-tune a chain-of-thought model on the \texttt{nanoGPT} and small-molecule trajectories, so that it never encounters an antibody, an
Absolut binding landscape, or a single CDRH3 sequence, and then drop it, unchanged, into the antibody acquisition
loop across all five antigens. Whatever competence the model shows here cannot be protein knowledge it does not
possess; it can only be a task-agnostic acquisition policy carried over from the other two domains.

The outcome, in Figure~\ref{fig:app-task-ood}, is striking. On four of the five targets this model performs on
par with the \emph{in-distribution} model that was trained directly on these antigens, reaching $-113.1$ on
\texttt{1FBI\_X} (against $-113.4$), $-110.3$ on \texttt{1ADQ\_A} ($-109.4$), $-106.6$ on \texttt{1OB1\_C}
($-105.5$), and $-104.2$ on \texttt{1NSN\_S} ($-104.7$). A model that has never seen an antibody thus matches one
trained on the very antigens it is tested on. The single place it falls short is \texttt{1H0D\_C}, where it
reaches only $-89.4$, no better than the base proposer ($-90.9$) and well behind the in-distribution model
($-94.6$), and with noticeably higher variance across seeds.

That the transferred behaviour is genuinely acquisition-guided is visible in the traces themselves. Even the
task-level model, which never saw a protein, conditions on the search state, anchors on the best result so far,
and proposes a move aimed at improving on it:

\begin{tcolorbox}[colback=black!3,colframe=black!25,boxrule=0.4pt,arc=3pt,left=7pt,right=7pt,top=5pt,bottom=5pt]
\small
\emph{Task-level OOD model}: ``the current best sequence is \texttt{LQWVSYGIPIS} at
$-86.83$, which gives me a baseline; the other observed sequences score between $-70$ and $-79$, so this target
is sensitive to small changes and I should look for a variant that approaches or surpasses the best\ldots'', then
proposing \texttt{QTFQNDKGLFI}, which keeps a hydrophobic core (F, L, I) together with hydrogen-bonding (Q, N) and
charged (K, D) residues.
\end{tcolorbox}

Reading the chain-of-thought traces further makes the one exception clear. Inside the loop the model plays two roles at once: it
\emph{selects} among candidates according to the acquisition signal, and it \emph{authors} new candidates
\emph{de novo}. These two roles come apart precisely on \texttt{1H0D\_C}, whose landscape is unusually shallow
(its best attainable energy is around $-95$, against $-104$ to $-113$ for the other targets) and whose optimum
lies in a narrow basin that can be reached only by deliberately designing a specific aromatic/hydrophobic motif.
Faced with the same situation, the two models reason quite differently:

\begin{tcolorbox}[colback=black!3,colframe=black!25,boxrule=0.4pt,arc=3pt,left=7pt,right=7pt,top=5pt,bottom=5pt]
\small
\emph{In-distribution model:} ``the best sequences mix a hydrophobic core (L, V, I, F) with polar and aromatic
residues\ldots let me construct a sequence: start with Y for $\pi$-stacking, keep a hydrophobic core in the
middle, and place an aromatic residue at the terminus\ldots'', yielding \texttt{YRMQDQPWSLQ}, a sequence that is
absent from the candidate pool.

\smallskip
\emph{Task-level OOD model:} ``I need to pick one sequence from the candidate pool of more than a thousand\ldots I
notice that \texttt{HLFQSCVSGPL} is already in the pool (id 169); it carries suitable hydrophobic and polar
residues and violates none of the constraints, so I will choose it.''
\end{tcolorbox}

\noindent Where the in-distribution model composes the required motif from scratch, the task-level model has
never learned the sequence priors of the protein domain, and so retreats to picking the most promising candidate
the pool already offers; on \texttt{1H0D\_C} this ceiling coincides with the base model's. On the remaining four
antigens the optima can be found by selection alone, so carrying over the acquisition policy is sufficient and no
gap appears.

Taken together, the task-level experiment separates two abilities that ordinary fine-tuning leaves entangled.
What our distillation transfers is the one we set out to capture, namely knowing how to \emph{use} the
acquisition signal to explore and exploit, and it does so across tasks that share nothing but that structure.
What it does not, and should not be expected to, transfer is \emph{de novo} domain design, which depends on
in-domain data and returns the moment such data is available, as both the in-distribution and case-level results
confirm. This clean separation, in which acquisition generalises while domain-specific design does not, accounts
for exactly where the task-level model succeeds and where it does not, and shows that the recipe distils
precisely what ought to be transferable. The corresponding learning curves for both OOD studies are shown in
Figure~\ref{fig:protein-ood-mixed32} and Figure~\ref{fig:app-task-ood} in Section~\ref{sec:finetune-experiments}.

\subsection{Distilling Acquisition Value into the Weights}
\label{app:gp-vs-nogp}

In a standard model-based loop the epistemic value lives \emph{outside} the model: a Gaussian process computes
the posterior mean, uncertainty, and acquisition value, and the language model simply reads those numbers off and
acts. The proposer is a value \emph{reader}, not a value \emph{holder}; take the surrogate away and its
competence goes with it. What we are really after is the opposite. We want the acquisition value distilled into
the model's own weights, so that the proposer develops a genuine discovery intuition: an internal,
experience-formed sense of which regions are exhausted, which still hold uncertainty or information, and whether
the moment calls for exploiting the current best or striking out somewhere new. This is why we hide the GP
values during fine-tuning. Denied the numbers, the model has to learn the value$\Leftrightarrow$language
mapping: it must put the epistemic situation into words and act on it (``the last rounds repeated with no gain, so
this region is saturated; try a new direction'') rather than consume a scalar such as $\sigma=0.004$. Hiding the
surrogate is not a handicap but the whole point: at deployment the acquisition loop hands the proposer no GP
values anyway, so the intuition has to live in the model itself; and once it does, it becomes a task-agnostic
search skill that carries over to discovery problems the model was never trained on
(Section~\ref{app:task-ood}). It is, in the end, the language-model counterpart of an experienced researcher who
\emph{senses} that a line of inquiry is played out, instead of computing an expected-improvement score.

Concretely, reasoning augmentation can be run at two levels. In the first, the surrogate's posterior mean,
uncertainty, and acquisition value are embedded in the prompt, so the model can read the epistemic state
directly. In the second, which is the variant we adopt, these Gaussian-process values are \emph{hidden}:
the prompt carries only the observed history of candidates actually tried and their real outcomes, and the model
must infer the epistemic state itself. The two regimes reach the \emph{same} acquisition decisions; what differs
is where the uncertainty signal comes from. The traces below, both emitted on the \texttt{autoresearch} task
immediately before an explore move, make the contrast concrete.

\begin{tcolorbox}[enhanced,breakable,colframe=reasonblueline,colback=reasonblue!25,boxrule=0.9pt,arc=4pt,
    left=8pt,right=8pt,top=7pt,bottom=7pt]
{\bfseries With GP values in the prompt: uncertainty is \emph{read}.}\\[2pt]
\small
``Looking at the surrogate's feedback: the latest candidate \texttt{state\_1808} was predicted at
mean $=0.98147$ with very low uncertainty, so the surrogate is confident about this region; yet the true result
$0.98209$ did not improve. So I should step outside the region the surrogate is most certain about and explore
one it is less sure of.''
\par\smallskip
{\itshape The epistemic signal is taken directly from the GP's numeric mean and uncertainty.}
\end{tcolorbox}

\begin{tcolorbox}[enhanced,breakable,colframe=discgreenline,colback=discgreen!18,boxrule=0.9pt,arc=4pt,
    left=8pt,right=8pt,top=7pt,bottom=7pt]
{\bfseries GP hidden (ours): uncertainty is \emph{inferred from experience}.}\\[2pt]
\small
``The best is $0.98456$, but the spread across all 18 evaluated results is only ${\sim}0.0002$, likely just
run-to-run noise, and there has been no improvement for 15 consecutive rounds. \texttt{HEAD\_DIM} and
\texttt{WINDOW\_PATTERN} no longer give a meaningful signal, so I will open a new dimension
(\texttt{DEVICE\_BATCH\_SIZE}).''
\par\smallskip
{\itshape With no GP values in the prompt, the model reconstructs the same ``this region is exhausted /
low-information'' judgement from the raw observed history, then makes the same explore decision.}
\end{tcolorbox}

The decision is identical; only the provenance of the uncertainty estimate changes. This matters less than it
might seem, because the GP numbers are barely used even when they are supplied: across the reasoning-augmented
training traces, only $1.3\%$ explicitly cite a surrogate number, whereas $97.2\%$ reason from the observed
outcomes such as stall length, noise level, and best-so-far. Hiding the GP values therefore changes the reasoning almost
not at all, while buying two concrete advantages: the training prompt now matches deployment, where the
acquisition loop exposes no GP values to the proposer, and, empirically, out-of-distribution transfer improves
(Section~\ref{app:task-ood}).

Table~\ref{tab:gp-vs-nogp-results} confirms the last point quantitatively. Both fine-tuned proposers are
task-level out-of-distribution, trained only on \texttt{nanoGPT} and small-molecule trajectories with no
protein data, and they differ only in the augmentation regime: GP values embedded in the prompt versus GP hidden
(ours). Hiding the GP values does not hurt and in fact helps: the GP-hidden proposer attains the best average
binding energy and the best result on three of the five antigens, including a decisive $-115.9$ on
\texttt{1FBI\_X} and a clear recovery on the hard \texttt{1H0D\_C} target ($-93.1$ versus $-89.4$).

\begin{table}[H]
\centering
\setlength{\tabcolsep}{10pt}
\renewcommand{\arraystretch}{1.2}
\begin{tabular}{@{}lcccccc@{}}
\toprule
\textbf{Proposer} & \texttt{1ADQ} & \texttt{1FBI} & \texttt{1H0D} & \texttt{1NSN} & \texttt{1OB1} & \textbf{Avg} \\
\midrule
Base +CoT               & $-107.9$ & $-109.5$ & $-90.9$ & $-104.8$ & $-109.1$ & $-104.4$ \\
In-distribution FT      & $-109.4$ & $-113.4$ & $\mathbf{-94.6}$ & $-104.7$ & $-105.5$ & $-105.5$ \\
\midrule
\multicolumn{7}{@{}l}{\itshape Task-level OOD}\\
With GP                 & $\mathbf{-110.3}$ & $-113.1$ & $-89.4$ & $-104.2$ & $-106.6$ & $-104.7$ \\
\textbf{Without GP}     & $-110.2$ & $\mathbf{-115.9}$ & $-93.1$ & $\mathbf{-104.9}$ & $\mathbf{-110.4}$ & $\mathbf{-106.9}$ \\
\bottomrule
\end{tabular}
\caption{\textbf{GP-in-prompt vs GP-hidden reasoning augmentation.} Best Absolut $-E_{\mathrm{bind}}$ on five
held-out antibody targets; the two OOD rows differ only in whether GP values appear in the prompt. Best per
column in bold. Baselines and GP-in-prompt: 3 seeds; GP-hidden: seed~42.}
\label{tab:gp-vs-nogp-results}
\end{table}

\stopcontents[appendix]

\end{document}